\documentclass{amsart}
\usepackage[margin=1.4in]{geometry}
\usepackage[numbers,square,compress]{natbib}
\usepackage{amsmath,amsfonts,bm}
\usepackage{amsmath,amssymb,amsthm,mathtools}

\def\eqref#1{equation~\ref{#1}}
\def\1{\bm{1}}

\DeclareMathAlphabet{\mathsfit}{\encodingdefault}{\sfdefault}{m}{sl}
\SetMathAlphabet{\mathsfit}{bold}{\encodingdefault}{\sfdefault}{bx}{n}

\newcommand{\R}{\mathbb{R}}

\newtheorem{theorem}{Theorem}
\newtheorem{lemma}{Lemma}

\newtheorem{corollary}{Corollary}

\newtheorem{definition}{Definition}
\newcommand{\Lip}{\operatorname{Lip}}
\newcommand{\dist}{\operatorname{dist}}

\newcommand{\supp}{\operatorname{supp}}

\usepackage{hyperref}
\usepackage{url}
\usepackage{enumitem}
\usepackage{titletoc}

\title{Multi-Attractor GNNs: Set-Valued Expressivity Beyond Unique Equilibria
}

\author{Jialin Liu}
\address{(JL) School of Data, Mathematical, and Statistical Sciences, University of Central Florida, Orlando, FL 32826.}
\email{jialin.liu@ucf.edu}
\date{\today}

\renewcommand{\eqref}[1]{(\ref{#1})}
\allowdisplaybreaks

\usepackage{booktabs}
\usepackage{longtable}
\usepackage{wrapfig}
\usepackage{graphicx} 
\begin{document}

\begin{abstract}
Recurrent and equilibrium graph neural networks (GNNs) are often designed to have a unique fixed point or trained against a single target per graph. Yet many combinatorial and scientific problems are inherently set-valued: several valid solutions coexist for the same graph, and nothing in the problem singles one out. Training against a single designated target can then make the model fit an arbitrary selection rule rather than the full solution set. For example, when a task is invariant to node relabeling, a symmetric graph has a symmetric solution set, yet may admit no symmetric solution; isolating a single target then imposes an arbitrary symmetry-breaking choice.

In this paper, we show that multiple equilibria of recurrent GNNs are a resource rather than a defect: one weight-tied message-passing GNN can represent such set-valued equivariant maps through its attractor landscape, where different initializations approach different valid solutions. Under the stated regularity assumptions, we establish this expressive power in two steps. First, we prove the existence of globally Lipschitz, permutation-equivariant multi-attractor dynamics that converge almost surely to valid solutions while reaching every solution branch with positive probability. Second, we establish their approximate realization by recurrent message passing with continuous component maps, with arbitrarily small update and limiting errors and arbitrarily high probability. This goes beyond standard universality arguments, since message passing cannot by itself distinguish symmetric nodes: we show that the evolving state keeps nodes distinguishable at every finite step, without auxiliary node identifiers. In practice, such dynamics can be learned without solution labels from problem-specific energies. On three scientific tasks (ground states of Ising models, structural module detection in protein graphs, and steady states of chemical reaction networks), the learned updates produce multiple high-quality predictions and high numerical convergence rates, with better average solution quality than the tested unique-equilibrium, single-target, and feedforward baselines, while remaining competitive with far larger diffusion-based solvers.
\end{abstract}

\maketitle

\section{Introduction}
\label{sec:intro}
%\vspace{-2.5mm}

Recurrent graph neural networks (GNNs) repeatedly apply a shared message-passing update, allowing computation to deepen without introducing new parameters at every step \cite{li2015gated}. This iterative structure provides a natural mechanism for capturing long-range dependencies, while equilibrium-based variants support implicit differentiation without storing the full sequence of intermediate states \cite{gu2020implicit,baker2023implicit,yang2025implicit}.

For an $n$-node graph $G=(A,X)$, $A$ encodes connectivity and edge attributes, and $X_i$ contains node $i$'s features. A prediction $y\in\mathbb{R}^{n\times q}$ assigns each node a $q$-dimensional output $y_i$. A feedforward GNN computes $y=\mathrm{GNN}_{\theta}(G)$ directly; a recurrent predictor updates a state $y_t\in\mathbb{R}^{n\times q}$:
%\vspace{-0.5mm}
\[
y_{t+1}=\mathrm{GNN}_{\theta}(G,y_t),
\qquad t=0,1,2,\ldots.
%\vspace{-0.5mm}
\]
From a deterministic or random $y_0$, each step receives $(X_i,(y_t)_i)$ at node $i$ and exchanges messages along edges to produce $(y_{t+1})_i$. The graph and parameters $\theta$ remain fixed. The usual goal is a limit $y_\infty$ approximating a prescribed target $f(G)$.

An equilibrium, or fixed point, is a state unchanged by the update. Implicit GNNs often seek \emph{well-posedness}: existence and uniqueness of this state for each input \cite{gu2020implicit,liu2021eignn,park2022convergent}. A standard sufficient condition is \emph{global contraction}: each update shrinks the distance between any two states by a common factor $\kappa<1$. Iteration then converges to the same fixed point from every initialization \cite{scarselli2009graph}. Strong monotonicity of an associated operator offers another route to uniqueness \cite{winston2020monotone,baker2023implicit}; uniqueness alone need not ensure convergence of direct iteration. Single-target training instead encourages one designated output without guaranteeing uniqueness. Contraction rules out initialization-dependent solutions regardless of the training objective.

Many graph or combinatorial tasks admit multiple valid solutions for the same input. Selecting one target can discard meaningful alternatives or conflict with input symmetries. For example:
%\vspace{-2.5mm}
\begin{itemize}[leftmargin=5mm]
    \item \textbf{Two coloring.} Consider proper two-coloring of the four-node cycle $1-2-3-4-1$, with identical node features: each node receives a color in $\{0,1\}$, and adjacent nodes must have different colors. There are exactly two valid assignments, $(0,1,0,1)$ and $(1,0,1,0)$, which are exchanged by a one-step rotation that leaves the input graph unchanged. A vanilla GNN receiving only the graph must assign identical outputs to all nodes and therefore cannot produce either coloring. Symmetry breaking can distinguish these nodes and enable a valid coloring, but single-target training still arbitrarily favors one equally valid solution.
    %\vspace{-0.5mm}
    \item \textbf{Nonlinear equilibria.}
Chemical reaction networks are represented as bipartite graphs connecting species to participating reactions, with reaction parameters as attributes; the task is to predict stationary species concentrations. Even one dynamic species can admit multiple solutions for the same graph and parameters: a Schl\"ogl model has steady-state equation $0=8-14c+7c^2-c^3$ and three positive solutions $c\in\{1,2,4\}$. All three satisfy the same physical equation, which selects no preferred target. Discovering multiple steady states can reveal coexisting stable states and help explain switching between them.
%\vspace{-1.5mm}
\end{itemize}

These examples motivate recovering multiple valid solutions per graph
without a preferred target. Recurrent GNNs provide a natural mechanism:
initializations $y_0$ and $y_0^{\prime}$ can converge to different limits
$y_\infty$ and $y_\infty^{\prime}$ under the same learned update
(Figure~\ref{fig:diagram-multi-attractor}).
Only the initial node states change between runs; $G$ and $\theta$ remain fixed. 
We seek convergence to a valid solution within each run and
access to the graph's different solutions across runs.

\begin{figure}
    \centering
    \includegraphics[width=0.65\linewidth]{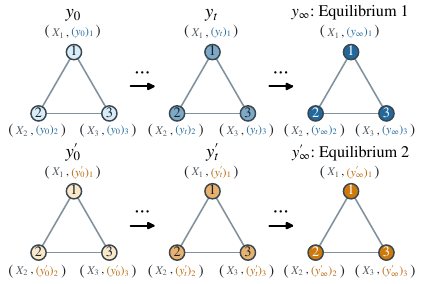}
    \caption{Different initializations can lead to different equilibria
under the same recurrent GNN update, with the graph and model
parameters held fixed.}
    \label{fig:diagram-multi-attractor}
\end{figure}

Building on the set-valued dynamical viewpoint of \citet{jore2026bifurcation}, we study permutation-equivariant representation and message-passing realization with recurrent GNNs. Our contributions address three questions.
%\vspace{-2.5mm}
\begin{itemize}[leftmargin=5mm]
    \item \textbf{Can one shared update represent many solutions?}
    Under the stated assumptions, we prove that a single recurrent message-passing GNN
    can converge arbitrarily
    close to the solution set, with arbitrarily high probability
    over graphs and initializations
    (Theorems~\ref{thm:ideal-dynamics-main}
    and~\ref{thm:gnn-realization-main}).
%\vspace{-0.5mm}
\item \textbf{Is a unique equilibrium merely less diverse?}
Under suitable conditions, every continuous, equivariant, contractive update misses all valid solutions with positive probability, even on almost-surely asymmetric graphs. This applies to MP-GNNs of arbitrary finite depth and width, regardless of training (Theorem~\ref{thm:contractive-limits-main}).
%\vspace{-0.5mm}
    \item \textbf{Can such dynamics be learned without solution
    labels, and do they pay off?}
    On Ising ground states, protein module detection, and chemical
    steady states, energy or residual training from random starts
    yields multiple high-quality predictions and high numerical
    convergence rates. Average solution quality exceeds the tested
    single-target, unique-equilibrium, and feedforward baselines
    and remains competitive with much larger diffusion-based solvers.
\end{itemize}

%\vspace{-2mm}
\section{Main results}
\label{sec:main-theory}
%\vspace{-2mm}

Existing expressivity theory studies single-valued invariant or equivariant maps \citep{xu2018powerful,keriven2019universal,abboud2020surprising}. Can one GNN instead represent solution sets through initialization-dependent limits? We establish existence, illustrate the mechanism, and identify an obstruction under contraction.

%\vspace{-2mm}
\subsection{Can Recurrent GNNs Represent Set-Valued Equivariant Maps?}
\label{sec:theory-represent}
%\vspace{-2mm}

To make the question precise, let $Y_n=\mathbb{R}^{n\times q}$ be the space of node-level outputs.
Let $\mathcal{G}$ be a family of $n$-node graphs that is closed under node permutations.
A permutation $\pi$ acts on a graph by relabeling its nodes, and on $Y_n$ by permuting rows.
A finite solution map assigns to each $G\in\mathcal{G}$ a set
\[
    \mathcal{S}(G)=\{f_1(G),\ldots,f_M(G)\}\subseteq Y_n,
\]
where the branches $f_a$ may coincide on some $G$.
We require the map to be \textit{set-valued equivariant},
\[
    \mathcal{S}(\pi G)=\pi\,\mathcal{S}(G).
\]
\textit{Individual branches need not be equivariant.} In the two-coloring example (Section~\ref{sec:intro}), a one-node rotation preserves the graph but exchanges $(0,1,0,1)$ and $(1,0,1,0)$. The solution set is unchanged; selecting either coloring alone violates equivariance.

A recurrent GNN represents solutions through the limits of its update $y_{t+1}=\mathrm{GNN}_{\theta}(G,y_t)$ with $ y_0\sim\mu_n$, 
where $\mu_n$ is an initialization distribution on $Y_n$ with a density and full support, such as i.i.d.\ standard Gaussian entries.
We collect the limits reached with positive probability in the attractor set
\[
    \mathcal{A}_\theta(G)
    :=
    \left\{
        s\in Y_n :
        \mathbb{P}_{y_0\sim\mu_n}
        \left[
            \lim_{t\to\infty}\mathrm{GNN}_{\theta}^{\,t}(G,y_0)=s
        \right]>0
    \right\},
\]
where $\mathrm{GNN}_{\theta}^{\,t}(G,\cdot)$ denotes $t$ repeated applications of the block.
Ideally, one shared update satisfies, for every $G$: (i) almost every trajectory $\{y_t\}_t$ converges to a limit in $\mathcal{S}(G)$; and (ii) every $s\in\mathcal{S}(G)$ is reached with positive probability.
Together, these ensure $$\mathcal{A}_\theta(G)=\mathcal{S}(G),$$
with no probability mass on nonconvergent trajectories or invalid limits.

Theorem~\ref{thm:ideal-dynamics-main} constructs a globally Lipschitz, equivariant operator satisfying (i)--(ii) on suitably regular graphs. Theorem~\ref{thm:gnn-realization-main} realizes its dynamics approximately by recurrent message passing: limits approach the solution set with arbitrarily high joint probability, and each branch with positive joint probability over graphs and initializations.

\begin{theorem}[Existence of equivariant multi-attractor dynamics]
\label{thm:ideal-dynamics-main}
Let $\mathcal{G}$ be a permutation-invariant family of $n$-node graphs, and let
$\mathcal{S}(G)=\{f_1(G),\ldots,f_M(G)\}\subseteq Y_n=\mathbb{R}^{n\times q}$
be set-valued equivariant, with branch labels as in Definition~\ref{def:equivariant-branches}.
Let $\mathcal{G}_{\star}$ consist of the graphs where the branches are locally Lipschitz and which branches coincide does not change under sufficiently small graph perturbations.

Then there exists a single operator $T:(G,y)\mapsto T_G(y)$ that is jointly globally Lipschitz and permutation equivariant,
$T_{\pi G}(\pi y)=\pi T_G(y)$.
For every $G\in\mathcal{G}_{\star}$, the iteration
$y_{t+1}=T_G(y_t)$, initialized from any distribution $\mu_n$ with a density and full support on $Y_n$, satisfies
\[
\begin{aligned}
    y_t \to  y_\infty\in\mathcal{S}(G)
    &&\text{almost surely},\\
    \mathbb{P}_{y_0\sim\mu_n}(y_\infty=s)>0
    &&\text{for every }s\in\mathcal{S}(G).
\end{aligned}
\]
\end{theorem}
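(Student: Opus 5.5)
We construct $T$ as a gradient-like relaxation toward the solution set, blended by a partition of unity. On $\mathcal{G}_\star$ the distinct branch values $\{f_a(G)\}$ are locally constant in number and locally Lipschitz in $G$. Let $\delta(G)$ be the minimal pairwise distance between distinct branch values, capped at $1$. By the coincidence-pattern stability, $\delta>0$ and $\delta$ is locally Lipschitz on $\mathcal{G}_\star$. Set equivariance gives $\delta(\pi G)=\delta(G)$, because permutations act isometrically on $Y_n$ (the Frobenius norm is permutation invariant).

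Inside the ball $B(f_a(G), \delta(G)/3)$ we want $T_G$ to contract toward $f_a(G)$. Outside these balls we want a map that pushes every point into the union of the balls in finitely many steps, up to a null set. A natural choice is
\[
T_G(y)=y-\lambda\,\phi\big(d(y)\big)\,\big(y-P_G(y)\big),
\]
with the following ingredients:
\begin{itemize}[leftmargin=5mm]
\item $P_G(y)$ is a Lipschitz selection of the nearest solution. It is built from soft weights $w_a(y)\propto \psi(\|y-f_a(G)\|/\delta(G))$ with $\psi$ a bump, so that near $f_a$ only $f_a$ carries weight.
\item $d$ is the distance to $\mathcal{S}(G)$.
\item $\phi$ is a Lipschitz cutoff.
\end{itemize}
Because the weights are defined from distances, which are permutation invariant, and the branch set is equivariant, we get $P_{\pi G}(\pi y)=\pi P_G(y)$ automatically. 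No branch labeling needs to be equivariant, since only the set enters. Coinciding branches are handled by summing over distinct values only.

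Global Lipschitzness is the first technical step. Away from all balls, $T_G$ should behave like $y\mapsto y-\lambda(y-\bar P_G(y))$. Here $\bar P_G$ is a globally Lipschitz, bounded-variation interpolation, for example a smoothed nearest-point map that is mollified and capped. Far out, $y-P_G(y)$ grows only linearly, so a damped affine pull $y\mapsto (1-\lambda)y+\lambda c(G)$ toward an equivariant center $c(G)$ is globally Lipschitz and drives any bounded-or-unbounded start into a compact region. For joint Lipschitzness in $G$, we will normalize by $\delta(G)$ and use the local Lipschitz bounds on the branches. To upgrade these to global bounds, we multiply the graph-dependent correction by a Lipschitz function of $G$ that vanishes where the local constants blow up, that is, near $\partial\mathcal{G}_\star$. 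This requires care, since the iteration claim is only needed for $G\in\mathcal{G}_\star$ and a fixed graph. A cleaner route is to allow $T$ to reduce to a simple contraction off a compact exhaustion, giving up nothing on each fixed $G\in\mathcal{G}_\star$ provided the cutoff is positive there.

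For dynamics at fixed $G$, we argue as follows.
\begin{enumerate}[leftmargin=5mm]
\item Each $f_a(G)$ is a locally exponentially stable fixed point with a basin $U_a$ of nonempty interior. Since $\mu_n$ has full support and a density, this gives $\mathbb{P}(y_\infty=f_a(G))>0$.
\item Almost every trajectory enters some $U_a$. For this we design $T_G$ so that the potential $V(y)=\min_a\|y-f_a\|$ (or a smooth surrogate) strictly decreases outside $\bigcup_a U_a$, except on the set of points where several branches tie. That tie set is a finite union of lower-dimensional sets, namely pieces of bisector hyperplanes.
\item Trajectories landing on, or converging to, the tie set must be shown to form a $\mu_n$-null set. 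We do this by making $T_G$ a locally bi-Lipschitz homeomorphism, e.g. by taking $\lambda<1/\Lip$ of the correction so that $T_G=\mathrm{id}-\lambda(\cdot)$ is invertible with Lipschitz inverse. Preimages of null sets are then null, and the countable union over $t$ of $T_G^{-t}(\text{tie set})$ is null.
\end{enumerate}
One must additionally exclude convergence to the tie set without hitting it. We handle this by making the tie points repelling along the normal direction (a stable-manifold-type argument: the set of points attracted to an unstable invariant set of a Lipschitz homeomorphism has measure zero if it is locally a lower-dimensional Lipschitz graph).

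The main obstacle is item 3 together with the exclusion argument just described: guaranteeing almost sure convergence to a valid solution with no positive-mass set trapped near the symmetric ties. This is exactly where equivariance forces ties, for instance at the symmetric midpoint in the two-coloring example. My first attempt would be to avoid a generic stable-manifold theorem entirely. Instead, I would choose the field outside the basins as a piecewise-affine expansion away from bisectors, so that the distance to the nearest bisector grows by a factor $>1$ each step until a basin is reached. The non-escaping set is then contained in $\bigcap_t T_G^{-t}(\text{thin slab of width }\rho^{t})$, which has measure zero directly.
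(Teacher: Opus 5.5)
Your outline has a genuine gap exactly where you place the main obstacle: almost-sure convergence is never established, and the mechanism you sketch for it conflicts with the rest of your construction.

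Because you replace the nearest-point map by a soft or mollified selection, the step near bisectors no longer points toward the strict nearest solution. So $\min_a\|y-f_a(G)\|$ need not decrease there, and you are forced into a repeller/stable-set argument. Your far-field map $y\mapsto(1-\lambda)y+\lambda c(G)$, however, pulls toward an equivariant center. If an automorphism $\pi$ of $G$ exchanges two solutions $s\neq\pi s$, equivariance gives $c(G)=\pi c(G)$, hence $\|c(G)-s\|=\|c(G)-\pi s\|$. The center therefore lies on their bisector, and the far-field homothety \emph{contracts} distances to that bisector. This contradicts your requirement that the distance to the nearest bisector grow by a factor $>1$ until a basin is reached. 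Such symmetric graphs do lie in $\mathcal G_\star$; the symmetric triangle of the toy problem is one. Nothing is specified near Voronoi vertices or in the transition region either, so the $\mu_n$-nullity of the set attracted to the ties remains unproven.

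The missing idea is that no such argument is needed. Keep the \emph{exact} nearest-solution selector $P$ and step straight toward it, $T_G(y)=(1-\beta)y+\beta P(G,y)$ with $0<\beta<1$. Suppose $y_0\notin\Sigma_G^\circ$ has strict nearest solution $s$. The reverse triangle inequality shows that every point of the segment $[y_0,s]$ still has $s$ as its strict nearest solution. Hence the iterates stay on this compact segment, $\beta$ has a positive minimum there, and $y_t\to s$ geometrically. The only exceptional initializations form $\Sigma_G^\circ$ itself, which is frozen and contained in finitely many bisector hyperplanes. Finally, $\mathbb P(y_\infty=s)=\mu_n(V_s(G))>0$ because $V_s(G)$ is a nonempty open set.

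The joint global Lipschitz claim has a second gap. Your correction $C(G,y)$, essentially $y-P_G(y)$ or $y-c(G)$, grows linearly in $\|y\|$. You propose to multiply it by a cutoff $\kappa(G)$ that degenerates where the local constants blow up. Keeping $T_G$ invertible also forces $\lambda\lesssim\delta(G)/\operatorname{diam}S(G)$, another graph-dependent factor, and $T$ must still be defined on $\mathcal G\setminus\mathcal G_\star$. Then $\|T_G(y)-T_H(y)\|\ge|\kappa(G)-\kappa(H)|\,\|C(G,y)\|$ minus a term bounded in $y$. This is unbounded as $\|y\|\to\infty$ whenever $\kappa(G)\neq\kappa(H)$, so no joint Lipschitz constant exists.

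The paper instead makes the displacement bounded. Let $B$ be the closed joint bad set (graphs in $D\cup U$ together with the closure of the joint switching set), $d=\min\{1,\operatorname{dist}(\cdot,B)\}$, and $\Omega=Z\setminus B$. The paper applies the equivariant flattening lemma to the locally Lipschitz field $(P-y)/d$ on $\Omega$. This yields an invariant $\beta:\Omega\to(0,1)$ for which $A:=\beta(P-y)/d$ is bounded and globally Lipschitz. It then sets $T_G(y)=y+d\,A=(1-\beta)y+\beta P$ on $\Omega$ and $T_G(y)=y$ on $B$. A bounded globally Lipschitz field multiplied by a $1$-Lipschitz factor vanishing on $B$ is globally Lipschitz on all of $Z$. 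The same factor freezes both the ties and the non-regular graphs. The price is that far-away points move by a bounded amount per step, which is why only a trajectory-dependent rate is claimed.
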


Thus Lipschitz regularity, which bounds sensitivity without requiring contraction, permits multiple solutions and convergent trajectories (Appendix~\ref{app:ideal}). We next realize this behavior by message passing.

\textbf{Message-passing GNNs.}
For a graph $G=(A,X)$ and state $y\in Y_n$, define a GNN block $\mathrm{GNN}_{\theta}(G,y)$ with finitely many layers and finite hidden dimensions:
%\vspace{-2mm}
\begin{equation}
\label{eq:mp-block-main}
\begin{aligned}
    h_i^{(0)}
    &= \operatorname{enc}(X_i,y_i),\\
    %\vspace{-0.5mm}
    h_i^{(\ell+1)}
    &= U_\ell\!\left(
        h_i^{(\ell)},
        \sum_{j=1}^{n}
        M_\ell(h_i^{(\ell)},h_j^{(\ell)},A_{ji})
    \right),\\
    %\vspace{-0.5mm}
    [\mathrm{GNN}_{\theta}(G,y)]_i
    &= O\!\left(
        h_i^{(L)},
        \sum_{j=1}^{n}R_L(h_j^{(L)})
    \right),
    %\vspace{-1mm}
\end{aligned}
\end{equation}
where $\ell=0,\ldots,L-1$ and $i=1,\ldots,n$.
The encoder and component maps $M_\ell,U_\ell,R_L,O$ are
arbitrary continuous maps between finite-dimensional Euclidean
spaces, shared across nodes.
The edge features $A_{ji}$ encode edge attributes and edge
presence; messages can be chosen to vanish on absent edges.
Our theory uses arbitrary continuous component maps, rather than a fixed MLP parameterization; its guarantee is representational, not a guarantee for training.

\begin{theorem}[Approximate realization by a recurrent MP-GNN]
\label{thm:gnn-realization-main}
Let $\mathcal{S}$ and $\mathcal{G}_{\star}$ be as in Theorem \ref{thm:ideal-dynamics-main}, and assume $q\ge2$.
Let $G$ follow a tight Borel probability on $\mathcal{G}_{\star}$, and independently draw $y_0\sim\mu_n$, where $\mu_n$ has a density and full support on $Y_n$.
Let $T_G$ be the ideal operator in Theorem \ref{thm:ideal-dynamics-main}.

For any $\varepsilon,\rho>0$ and $\delta\in(0,1)$, there exists a single GNN of the form \eqref{eq:mp-block-main} such that, with probability at least $1-\delta$ over $(G,y_0)$, the iteration
$y_{t+1}=\mathrm{GNN}_{\theta}(G,y_t)$
converges to a limit $y_\infty$ satisfying
%\vspace{-1mm}
\[
        \sup_{t\ge0}
    \bigl\|
        \mathrm{GNN}_{\theta}(G,y_t)-T_G(y_t)
    \bigr\|
    <\rho 
    ~~~~\text{and}~~~~
    \operatorname{dist}(y_\infty,\mathcal{S}(G))
    <\varepsilon.
%\vspace{-1mm}
\]
Moreover, every solution branch is reached within $\varepsilon$ with positive joint probability:
%\vspace{-1mm}
\[
    \mathbb{P}\!\left(
        y_t\to y_\infty,\ 
        \|y_\infty-f_a(G)\|<\varepsilon
    \right)>0,
    \qquad a=1,\ldots,M.
\]
\end{theorem}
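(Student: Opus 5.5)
The proof reduces Theorem~\ref{thm:gnn-realization-main} to a uniform approximation of $T$ on a compact set, combined with robustness of the attractor structure from Theorem~\ref{thm:ideal-dynamics-main}. Since the law of $G$ is tight, we fix a compact $K_G\subseteq\mathcal{G}_\star$ of mass at least $1-\delta/4$. Since $\mu_n$ has a density, we fix a ball $B_R\subseteq Y_n$ of mass at least $1-\delta/4$. The ideal operator is globally Lipschitz, so trajectories started in $B_R$ remain bounded on finite horizons. We still need a uniform bound on the whole orbit, and we obtain it from the construction in Appendix~\ref{app:ideal}. That construction is attracting near $\mathcal{S}(G)$ and dissipative at infinity, so there is a compact, forward-invariant region $\mathcal{K}\supseteq B_R$, uniform over $K_G$. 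All remaining approximation takes place on $K_G\times\mathcal{K}$.

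\textbf{Step 1: robust convergence of the ideal dynamics.} For each branch $a$ and each $G\in K_G$, the limit $f_a(G)$ is reached with positive probability. Using the local Lipschitz property and the locally constant coincidence pattern that define $\mathcal{G}_\star$, we strengthen this to a quantitative statement. There is a neighbourhood $N_a(G)$ of $f_a(G)$ that every $\rho$-pseudo-orbit of $T_G$ enters and then stays within $\varepsilon$ of $f_a(G)$. A $\rho$-pseudo-orbit is a sequence with $\|y_{t+1}-T_G(y_t)\|<\rho$. This is standard shadowing near an attracting set, because the construction makes $T_G$ locally contracting toward each branch. Next, the set of initializations whose ideal orbit enters $\bigcup_a N_a(G)$ within some finite time $t_*$ has $\mu_n$-measure close to $1$, uniformly on $K_G$. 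This follows from almost-sure convergence, continuity of finite compositions of $T$, and compactness via Egorov's theorem. On that finite horizon, a sufficiently small perturbation $\rho'\le\rho$ keeps the perturbed orbit near the ideal one. Hence any map $\tilde T$ with $\sup_{K_G\times\mathcal{K}}\|\tilde T-T\|<\rho'$ produces orbits that converge, after a final contraction argument, to within $\varepsilon$ of $\mathcal{S}(G)$. For positive probability of each branch, we take a small open set of $(G,y_0)$ whose ideal orbits converge to $f_a(G)$ robustly. Such a set exists because that event has positive $\mu_n$ probability and, by continuity, an interior point.

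\textbf{Step 2: uniform approximation by message passing.} This is the main obstacle. $T$ is equivariant, but an MP-GNN with continuous components can only separate nodes that differ in their features or states. Symmetric nodes with equal $(X_i,(y_t)_i)$ receive identical outputs. Because $\mu_n$ has a density, $y_0$ almost surely has pairwise distinct rows. The first task is to show that an approximant can be chosen so that rows stay distinct along the whole orbit, with high probability. We do this with a perturbation that acts as an injective per-node map on the extra coordinate, which is available because $q\ge2$. Equal rows are then a closed, measure-zero condition that the dynamics preserves only on a null set. With rows distinct, the pair $(G,y)$ has trivial stabilizer. On the compact set of such pairs whose rows are separated by at least some $\eta>0$, we use an equivariant universality theorem for message passing with distinct node states. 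For example, injective multiset encodings via sum aggregation, in the style of \citet{xu2018powerful}, with enough layers, let each node reconstruct the whole labelled-up-to-isomorphism input, and then the readout $O$ evaluates $T$ at that node. This gives $\sup\|\mathrm{GNN}_\theta-T\|<\rho'$ there. We then choose $\eta$ small enough, depending on $t_*$ and the Lipschitz constant, that the probability an orbit ever comes within $\eta$ of the diagonal is below $\delta/4$.

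\textbf{Step 3: assembly.} We intersect the good events: $G\in K_G$, $y_0\in B_R$, robust entry into the basins, and separation by $\eta$ along the orbit. Their total probability is at least $1-\delta$. On this event the GNN orbit is a $\rho'$-pseudo-orbit of $T_G$, so Step 1 yields both $\sup_t\|\mathrm{GNN}_\theta(G,y_t)-T_G(y_t)\|<\rho$ and $\operatorname{dist}(y_\infty,\mathcal{S}(G))<\varepsilon$. Convergence of $y_t$ itself, not only closeness, comes from local contractivity near the branches: the GNN map inherits it if we also approximate in $C^0$ on a neighbourhood where $T$ is contractive, and then take a fixed point of the approximant. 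If that is unavailable, we instead make the readout exactly snap to a fixed point in that region. The branchwise positive-probability statement follows from the open sets chosen in Step 1, intersected with the good event.
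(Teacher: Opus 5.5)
There is a genuine gap, and it sits where the paper does its main work: keeping the \emph{GNN} trajectory node-separated and convergent for all $t$.

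First, your separation step does not hold up. You say that equal rows are a closed null set which the dynamics ``preserves only on a null set''. That addresses the wrong direction. What must be excluded is that iterates \emph{create} collisions from a positive-mass set of initializations, and for a general Lipschitz map the preimage of a null set can have positive mass. The paper proves the needed statement for the ideal dynamics from its radial form $T_G^t(y_0)=s+a_t(y_0-s)$ with $a_t>0$. A collision of rows $i,j$ at any finite time forces $(y_0)_i-(y_0)_j\in\operatorname{span}\{s_i-s_j\}$, which is a proper subspace precisely because $q\ge2$ (Lemma~\ref{lem:init-undamped-row-collision-null}). That is the actual role of $q\ge2$; distinct rows at $t=0$ already hold for $q=1$.

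Second, separation cannot be maintained through the tail of your scheme. Take any labeling task with fewer label values than nodes, such as two-coloring with one-hot colors in $\mathbb R^2$, where solutions have coincident rows $s_i=s_j$. The ball $B(s,r_0)$ on which your pseudo-orbit argument needs $\|\mathrm{GNN}_\theta-T_G\|<\rho'$ then meets the diagonal $\{y_i=y_j\}$. Ideal orbits converge into it, and nothing keeps a pseudo-orbit $\eta$-separated there. Your row-distinct realization gives no control on that region.

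Third, convergence of the GNN iterates is not established. Uniform $C^0$-closeness to a local contraction only traps iterates in a small ball around $s$; it does not make them converge, and a Brouwer fixed point of the approximant need not attract. Your ``snap'' fallback is where the missing idea lies. It requires the network to equal a prescribed map \emph{exactly} on a compact set containing the entire GNN orbit, which is circular inside an approximation framework. The snapped state must also keep its rows distinct.

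The paper resolves all of this by pasting two maps.
\begin{itemize}
\item It fixes an entrance radius $r<\min\{\varepsilon,\rho/2\}$ whose level sets $\{d_t=r\}$ are null, together with a gap $\zeta$.
\item On the compact family of transient ideal states (distance to $S(G)$ at least $r+\zeta$) it prescribes $T_G$.
\item On segments joining entrance states to their box-center patterns (distance below $r$) it prescribes the box-center map $\Psi(y)=b(y)+\tfrac12(y-b(y))$, using one fixed collection of separated boxes (Lemma~\ref{lem:family-grid-boxes}).
\end{itemize}
This family is compact, forward invariant and robustly node-separated, so one MP-GNN realizes the pasted map exactly (Lemma~\ref{lem:family-wide-realization-initialization}). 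Your injective-multiset reconstruction would serve equally well at this point. The GNN orbit is then exactly the pasted orbit: ideal steps until entrance, then geometric convergence to a box-center pattern with distinct rows within $r<\varepsilon$ of $s$. The operator error stays below $2r<\rho$, using $\|T_G(z)-z\|\le\operatorname{dist}(z,S(G))$.

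Finally, for branch capture, intersecting a small positive-probability branch set with a good event of probability at least $1-\delta$ can leave a null set. Each restriction must lose less than $\tfrac12\min_a\mathbb P(C_a)$, as the paper arranges.
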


%\vspace{-3mm}
The branch guarantee is joint over graphs and initializations, not per graph. Vanilla GNNs lack universality for continuous equivariant operators \citep{xu2018powerful}; random initialization can provide \textbf{\textit{unique node identifiers}} to overcome this \citep{abboud2020surprising}. But subsequent states depend on the graph and each other, even from i.i.d.\ $y_0$. Do they still distinguish nodes?

Our proof establishes that independence is unnecessary: the ideal trajectories $z_t:=T_G^t(y_0)$ retain distinct node states at every finite time,
%\vspace{-1mm}
\[
    \mathbb{P}\!\left(
        z_{t,i}\ne z_{t,j}
        \text{ for all }i\ne j
        \text{ and all }t\in\mathbb{N}_0
    \right)=1.
%\vspace{-1mm}
\]
These evolving identifiers enable recurrent message passing to approximate the ideal dynamics (Appendix~\ref{app:init-only-family-wide-mpgnn}).

%\vspace{-2mm}
\subsection{Toy Example: Single-Source Graph Diffusion}
\label{sec:toy}
%\vspace{-2mm}

On a problem with exactly three solutions, we test whether one trained
recurrent GNN approaches different solutions from different random
initial states and recovers all three across runs. We compare against
contraction and single-target supervision.

\textbf{Problem.}
The input is a three-node undirected graph $G=(A,0)$ with independent
edge weights $w_{12},w_{23},w_{31}\sim\operatorname{Unif}[0.5,1.5]$ and
no distinguishing static node features. Here $A$ is the symmetric weighted
adjacency matrix with zero diagonal. Define
$L_G=\operatorname{diag}(A\mathbf1)-A$ and $H_G=I_3+L_G$.
We seek a source indicator $b\in\{0,1\}^3$ selecting exactly one node,
$\mathbf1^\top b=1$, and its diffusion field $u\in\mathbb R^3$ satisfying
$H_Gu=b$. Since $H_G$ is positive definite, each source choice determines
one diffusion field, giving exactly three solutions:
\begin{equation}
 \mathcal S(G)=\bigl\{[H_G^{-1}e_k\ \ e_k]:k=1,2,3\bigr\}
 \subset\mathbb R^{3\times2},
 \label{eq:toy-problem}
\end{equation}
where $e_k$ is the $k$th standard basis vector.
The source is an output to discover, not an input supplied to the model.
Figure~\ref{fig:diffusion} illustrates one graph and its three solutions.

\textbf{Model and training.}
The recurrent state $y_t=[u_t\ b_t]\in\mathbb R^{3\times2}$ stores two
values per node. From six independent standard Gaussian entries, we
iterate $y_{t+1}=\mathrm{GNN}_{\theta}(G,y_t)$ using a shared block with two
message-passing layers, following \eqref{eq:mp-block-main}.
For each fixed graph and model, we seek \textit{convergent trajectories reaching each solution in $\mathcal S(G)$ with positive probability across initializations.}
We train using the energy function
%\vspace{-2mm}
\begin{equation}
 E_G(u,b)=\frac13\|H_Gu-b\|_2^2
       +\sum_{i=1}^3 b_i(1-b_i)
       +(\mathbf1^\top b-1)^2,
 \qquad b\in[0,1]^3.
 \label{eq:toy-energy}
 %\vspace{-2mm}
\end{equation}
The three terms are all nonnegative and $E_G(u,b)=0$ holds exactly on $\mathcal S(G)$.
Training uses eight random starts per graph and minimizes the terminal energy $\mathbb E_{G,y_0}[E_G(u_T,b_T)]$ with $T=10$. This objective permits different starts to approach different solutions without prescribing branch labels.

\textbf{Baselines.}
We compare two controls with the same architecture, changing either
the update constraint or training target.
The \emph{\underline{contractive GNN}} uses the same energy objective but projects
its parameters throughout training to enforce
$\operatorname{Lip}_y(\mathrm{GNN}_{\theta}(G,\cdot))\le0.9$, ensuring a unique
fixed point for each graph.
The \emph{\underline{single-target-supervised GNN}} has no contraction constraint.
For each training graph, we uniformly select one of its three exact
solutions and retain it across all initializations and training visits.
Squared-error training asks all starts to predict that target without
guaranteeing a unique equilibrium. Details are in Appendix~\ref{app:toy}.

\textbf{Results.}
Figure~\ref{fig:toy-result} tracks the source coordinates $b_t$ of 128
trajectories on one held-out graph, with identical initial states across
all three models. The multi-attractor model forms three clusters near
the valid source indicators $e_1,e_2,e_3$, whereas the supervised and
contractive controls each concentrate near a single point, visibly
separated from all three. The diffusion coordinates $u_t$ exhibit the
same qualitative pattern (Appendix~\ref{app:toy}, Figure~\ref{fig:toy-u}).
Thus energy training can produce multiple attractors near different
solutions, consistent with Theorems~\ref{thm:ideal-dynamics-main}
and~\ref{thm:gnn-realization-main}, while contraction or single-target
supervision can concentrate trajectories far from every valid solution.

\begin{figure}[t]
 \centering
 \includegraphics[width=0.99\linewidth]{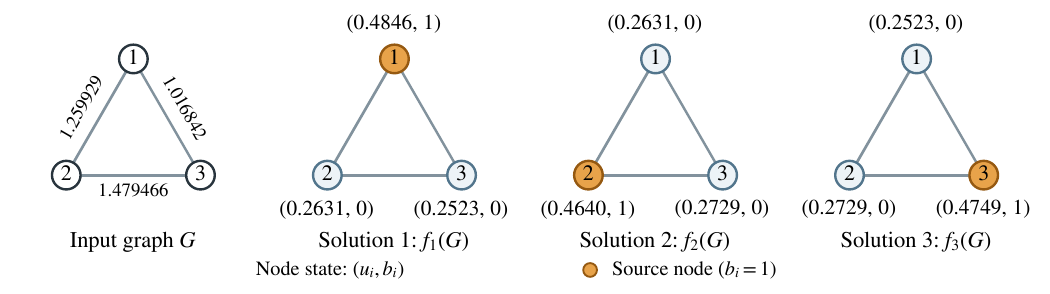}
 %\vspace{-4mm}
 \caption{One graph, three valid solutions to single-source diffusion.
 Left: the input graph, with edge weights labeled.
 Right: each choice of source gives a solution
 $f_k(G)=[H_G^{-1}e_k\ \ e_k]$.
 Node labels show $(u_i,b_i)$, with diffusion values rounded to four
 decimal places; orange marks the selected source ($b_i=1$).
 All three outputs are valid for the same input, which specifies no source.}
 \label{fig:diffusion}
 %\vspace{-4mm}
\end{figure}

\begin{figure}[t]
 \centering
 \includegraphics[width=0.99\linewidth]{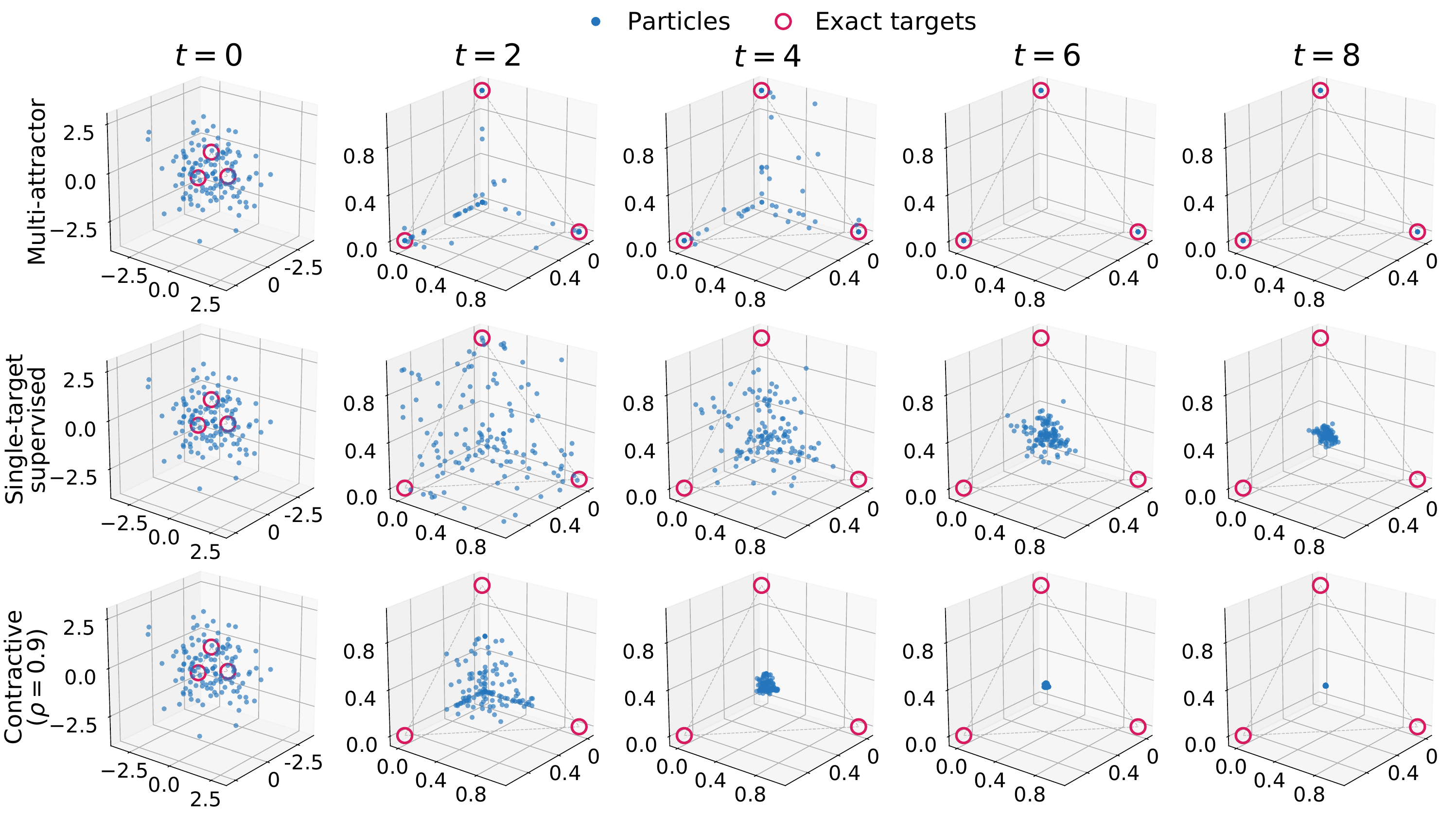}
 %\vspace{-4mm}
 \caption{Different initial states can approach different solutions
 under the same learned update.
 Each blue point represents one of 128 trajectories, plotted in source
 coordinates $((b_t)_1,(b_t)_2,(b_t)_3)$; magenta circles mark the three
 valid source indicators $e_1,e_2,e_3$.
 Columns show recurrent steps $t=0,2,4,6,8$.
 The multi-attractor model (top) forms three clusters near these targets,
 whereas the single-target-supervised (middle) and contractive (bottom)
 controls each concentrate away from them.
 All rows use the same held-out graph (test graph 0), training seed 1,
 and initial states. 
 }
 %\vspace{-4mm}
 \label{fig:toy-result}
\end{figure}

%\vspace{-2mm}
\subsection{Can a Unique Equilibrium Miss Every Valid Solution?}
\label{sec:theory-single-equilibrium}
%\vspace{-2mm}

Do these failures reflect training or a structural limitation?
A unique equilibrium cannot represent multiple solutions through
different initializations, but can it always recover one valid solution?
The following theorem answers no for continuous, permutation-equivariant,
globally contractive updates: even with independently sampled edge
weights, their equilibrium remains separated from every valid solution
with positive probability, regardless of training or initialization.

\begin{theorem}
\label{thm:contractive-limits-main}
Let $\mathcal G_\triangle$ be the family of weighted triangles in
\eqref{eq:toy-problem}, with edge weights in $[w_-,w_+]$, where
$0<w_-<w_+<\infty$. Let $\nu$ be the graph law induced by sampling the
three edge weights independently and uniformly from $[w_-,w_+]$.
Consider any jointly continuous, permutation-equivariant operator
$F:\mathcal G_\triangle\times Y_3\to Y_3$, where
$Y_3=\mathbb R^{3\times2}$, satisfying
\begin{equation}
    \label{eq:thm-contractive}
        \|F_G(y)-F_G(z)\|_*
    \le \kappa\|y-z\|_*
    \qquad
    (G\in\mathcal G_\triangle,\ y,z\in Y_3)
\end{equation}
for some fixed norm $\|\cdot\|_*$ and $\kappa\in[0,1)$.
This includes any MP-GNN of \eqref{eq:mp-block-main} with continuous
component maps satisfying \eqref{eq:thm-contractive},
regardless of its finite depth and width.

For every $0<\varepsilon<\sqrt{2/3}$, there exists
$\delta_{F,\varepsilon}>0$ such that, under any joint law of $(G,Y_0)$
with graph marginal $\nu$, the recurrence $Y_{t+1}=F_G(Y_t)$ converges
to a limit $Y_\infty$ and
\[
    \mathbb P\!\left(
        \operatorname{dist}\bigl(Y_\infty,\mathcal S(G)\bigr)
        >\varepsilon
    \right)
    \ge \delta_{F,\varepsilon},
\]
where distance is measured in the Euclidean norm.
In particular, no such operator converges to a valid solution almost
surely over independently weighted graphs.
\end{theorem}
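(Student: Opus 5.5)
The plan is to show that a contraction has a unique fixed point, that this fixed point depends continuously on the graph, and that at a fully symmetric triangle equivariance forces it to be row-constant. A row-constant state is uniformly far from every solution, and continuity then spreads this gap to a set of graphs of positive $\nu$-measure.

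\textbf{Step 1 (unique limit, independent of $Y_0$).} For each $G\in\mathcal G_\triangle$, the map $F_G$ is a $\kappa$-contraction in the complete norm $\|\cdot\|_*$ on $Y_3$. By Banach's fixed point theorem it has a unique fixed point $y^\star(G)$. Moreover $Y_t\to y^\star(G)$ from every $Y_0$, so $Y_\infty=y^\star(G)$ surely. The event $\{\operatorname{dist}(Y_\infty,\mathcal S(G))>\varepsilon\}$ therefore depends only on $G$. Its probability is the same under any joint law with graph marginal $\nu$, which is why $\delta_{F,\varepsilon}$ can be chosen independently of the initialization law.

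\textbf{Step 2 (continuity of the fixed point).} For graphs $G,G'$ we have
\[
\|y^\star(G)-y^\star(G')\|_*\le \|F_G(y^\star(G))-F_{G}(y^\star(G'))\|_*+\|F_G(y^\star(G'))-F_{G'}(y^\star(G'))\|_*.
\]
The first term is at most $\kappa\|y^\star(G)-y^\star(G')\|_*$, so
\[
\|y^\star(G)-y^\star(G')\|_*\le (1-\kappa)^{-1}\|F_G(y^\star(G'))-F_{G'}(y^\star(G'))\|_*.
\]
Fix $G$ and let $G'\to G$. To control the right-hand side I would first note that $y^\star(G')$ stays bounded. Indeed, $\|y^\star(G')\|_*\le (1-\kappa)^{-1}\|F_{G'}(0)\|_*$, and $G'\mapsto F_{G'}(0)$ is continuous on the compact weight cube. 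Joint continuity of $F$ on the compact set (weights)$\times$(ball) is then uniform, so the right-hand side tends to $0$ and $G\mapsto y^\star(G)$ is continuous. The map $G\mapsto\mathcal S(G)$ is also continuous, since $H_G^{-1}$ depends continuously on the weights ($H_G$ is positive definite). Hence $G\mapsto\operatorname{dist}(y^\star(G),\mathcal S(G))$ is continuous.

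\textbf{Step 3 (symmetric graph).} Take the equal-weight triangle $G_0$ with all weights equal to some $w\in(w_-,w_+)$. Every permutation $\pi$ satisfies $\pi G_0=G_0$. By equivariance, $F_{G_0}(\pi y^\star)=\pi F_{G_0}(y^\star)=\pi y^\star$, so $\pi y^\star(G_0)$ is also a fixed point of $F_{G_0}$. Uniqueness gives $\pi y^\star(G_0)=y^\star(G_0)$ for all $\pi$, so all rows of $y^\star(G_0)$ are equal. In particular its $b$-column is $c\mathbf 1$ for some $c\in\mathbb R$. Dropping the $u$-column, the squared Euclidean distance to any $[H^{-1}e_k\ e_k]$ is at least
\[
\|c\mathbf 1-e_k\|^2=(1-c)^2+2c^2\ge 2/3,
\]
with the minimum attained at $c=1/3$. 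Hence $\operatorname{dist}(y^\star(G_0),\mathcal S(G_0))\ge\sqrt{2/3}>\varepsilon$.

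\textbf{Step 4 (positive measure).} By the continuity from Step 2, there is an open neighborhood $U$ of $(w,w,w)$ in $[w_-,w_+]^3$ on which the distance still exceeds $\varepsilon$. Since $(w,w,w)$ is an interior point, $U$ has positive Lebesgue measure, and $\delta_{F,\varepsilon}:=\nu(U)>0$ does the job. The MP-GNN case is covered by the general statement, because a block of the form \eqref{eq:mp-block-main} with continuous components is jointly continuous and permutation equivariant: sums over $j$ and node-shared maps commute with relabeling.

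\textbf{Main obstacle.} The only delicate point is the uniform-continuity argument in Step 2. It needs the a priori bound on fixed points so that joint continuity of $F$ can be applied on a compact set. Measurability of the event is automatic, since it is an open set of graphs.
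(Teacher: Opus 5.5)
Your proof is correct and follows essentially the paper's route (Theorem~\ref{thm:contractive-positive-failure}, specialized in Corollary~\ref{cor:contractive-independent-triangles}): Banach fixed point, the $(1-\kappa)^{-1}$ continuity bound, equivariance forcing a row-constant fixed point at the equal-weight triangle, the $\sqrt{2/3}$ margin from the $b$-column, and a positive-measure neighborhood. The only difference is Step 2, where your ``main obstacle'' is avoidable: in your own inequality, hold $G'$ fixed and let $G\to G'$ instead, so the argument $y^\star(G')$ of $F$ stays put and continuity of $F$ in the graph at that fixed state already sends the right-hand side to zero; this is how the paper argues, and it needs neither the a priori bound on fixed points nor compactness of the graph family.
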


%\vspace{-2mm}
Random initialization distinguishes nodes even on symmetric graphs;
Figure~\ref{fig:toy-result} shows its influence in the early iterations.
Contraction eventually erases these differences: all trajectories approach one equilibrium.
On the symmetric triangle, equivariance forces this equilibrium
to preserve the graph's symmetries, preventing single-source selection.
For independently weighted graphs, the equilibrium need not be
symmetric, but its continuous dependence on the graph extends
the failure to nearby graphs with positive probability.
The failure probability depends on the operator; no uniform lower bound across models is claimed. See Appendix~\ref{app:contractive-obstruction} for the proof and extension.

%\vspace{-2mm}
\section{Scientific applications}
\label{sec:applications}
%\vspace{-2mm}

We test whether energy or residual training learns multiple high-quality predictions and numerically convergent dynamics. These empirical tests, including scalar states outside Theorem~\ref{thm:gnn-realization-main}'s $q\ge2$ setting, do not certify asymptotic convergence or complete coverage. Baselines use the reported budgets; inference computation is not matched.

\subsection{Ising Models }
\label{ising:main}
%\vspace{-2mm}

\paragraph{Problem.}
The Ising model is a classical model of interacting magnetic spins and is a standard
discrete optimization problem. In the zero-field setting, each node of the weighted graph
$G=(V,E,J)$ carries a spin $s_i\in\{-1,+1\}$, with energy and ground states
\begin{equation}
\label{ising:problem}
 \mathcal S(G)=\mathop{\rm arg\,min}_{s\in\{-1,+1\}^{|V|}}E_G(s),
 \qquad E_G(s)=-\sum_{\{i,j\}\in E}J_{ij}s_is_j.
\end{equation}
We use open triangular
lattices with antiferromagnetic couplings ($J_{i,j}<0$), which favor
opposite neighboring spins. These preferences conflict on triangles,
creating competing low-energy configurations; global spin reversal
also preserves energy. This multiplicity naturally tests set-valued
prediction: we seek diverse low-energy configurations per graph
rather than one arbitrarily selected target.

\paragraph{Models and baselines.}
Our multi-attractor model (Ours in the tables) uses
$y_{t+1}=\mathrm{GNN}_{\theta}(G,y_t)$ from
$y_0\sim\mathcal N(0,I)$, refining continuous spin estimates and
applying the sign function to $y_T$ to predict discrete spins $\hat{s}$.
Training minimizes $\mathbb E[E_G(y_{T})]$ with $T=10$ shared updates,
encouraging low energy without prescribing a solution for each
initialization. Testing uses 200 updates.

We compare four baseline groups.
\emph{(A) Single-target supervision} minimizes MSE to one Gurobi solution
per graph using the same recurrent architecture, testing the effect of
a designated target on accuracy and diversity.
\emph{(B) Unique-equilibrium models} are energy-trained IGNNs
\citep{gu2020implicit} with contraction constraints and increasing hidden
widths, comparing quality and multiplicity under enforced uniqueness
at several capacities.
\emph{(C) Feedforward models} stack 1, 3, or 10 independently
parameterized copies of our block with the same energy objective,
comparing untied depth with shared recurrence.
\emph{(D) Distributional models} learn sampling distributions or
generative processes, whereas ours learns a shared update with
initialization-dependent solutions. We compare quality and diversity
with Mean-field GNN, Annealed Bernoulli GNN, DIFUSCO, and VAG (VAG-CO)
\citep{karalias2020erdos,sun2022annealed,sun2023difusco,
sanokowski2023variational} as alternative approaches to set-valued
prediction. Appendix~\ref{ising:appendix} provides all details.

\begin{table}\centering
\small
\caption{
Ising results on 500 test graphs with 20 candidates per graph
(mean $\pm$ standard deviation over three training seeds).
$\bar e$: mean energy per node (lower is better);
Hit10: percentage of candidates within 10\% of the certified minimum energy;
$C_{200}$: percentage of trajectories converged by step 200;
$D_{20}$: mean number of distinct optimal configurations found per graph, counting global spin flips separately;
$G_{20}$: percentage of graphs with at least two such configurations,
up to numerical tolerance. Details in Appendix~\ref{ising:appendix}.
}
\label{ising:main-table}
\setlength{\tabcolsep}{2.9pt}
\begin{tabular}{lrrcrr}
\toprule
Method & $\bar e\downarrow$ & Hit10 $\uparrow$ & $C_{200}\uparrow$ & $D_{20}\uparrow$ & $G_{20}\uparrow$ \\
\midrule
Ours & $-1.0794\pm 0.0028$ & $83.98\pm 0.93$ & $99.96\pm 0.02$ & $2.16\pm 0.23$ & $68.73\pm 5.19$ \\
\midrule
Single-target Supervision & $0.2077\pm 0.5878$ & $9.45\pm 4.37$ & $10.11\pm 3.38$ & $0.10\pm 0.05$ & $0.80\pm 0.99$ \\
IGNN adapter (H16) & $-0.4581\pm 0.2538$ & $5.20\pm 3.31$ & $100.00\pm 0.00$ & $0.03\pm 0.02$ & $0.00\pm 0.00$ \\
IGNN adapter (H32) & $-0.7516\pm 0.0185$ & $8.00\pm 0.71$ & $100.00\pm 0.00$ & $0.06\pm 0.00$ & $0.00\pm 0.00$ \\
IGNN adapter (H64) & $-0.7954\pm 0.0143$ & $13.20\pm 2.14$ & $100.00\pm 0.00$ & $0.08\pm 0.01$ & $0.00\pm 0.00$ \\
\midrule
Feedforward ($\times 1$) & $-0.9720\pm 0.0035$ & $36.13\pm 0.75$ & -- & $1.31\pm 0.03$ & $36.67\pm 1.23$ \\
Feedforward ($\times 3$) & $-1.0515\pm 0.0016$ & $69.07\pm 0.66$ & -- & $0.95\pm 0.95$ & $22.40\pm 31.68$ \\
Feedforward ($\times 10$) & $-1.0538\pm 0.0012$ & $70.93\pm 1.52$ & -- & $0.28\pm 0.01$ & $0.00\pm 0.00$ \\
\midrule
Mean-field GNN & $-0.9736\pm 0.0026$ & $36.71\pm 0.78$ & -- & $1.34\pm 0.06$ & $37.80\pm 1.31$ \\
Annealed Bernoulli GNN & $-0.9766\pm 0.0011$ & $37.71\pm 0.86$ & -- & $1.37\pm 0.03$ & $38.27\pm 0.66$ \\
DIFUSCO (matched) & $-0.9033\pm 0.0023$ & $37.64\pm 0.23$ & -- & $1.56\pm 0.03$ & $45.87\pm 0.09$ \\
DIFUSCO (large) & $-1.0362\pm 0.0024$ & $69.82\pm 1.64$ & -- & $2.60\pm 0.10$ & $68.00\pm 2.95$ \\
VAG (matched) & $-0.8893\pm 0.0075$ & $17.18\pm 2.02$ & -- & $0.65\pm 0.11$ & $17.40\pm 4.98$ \\
VAG (large) & $-0.8814\pm 0.0547$ & $17.34\pm 5.69$ & -- & $0.56\pm 0.17$ & $14.00\pm 5.74$ \\
\bottomrule
\end{tabular}
\end{table}

\paragraph{Quality and multiple solutions.}
Our model achieves the lowest mean energy and highest Hit10
(Table~\ref{ising:main-table}): $83.98\%$ of candidates are within 10\%
of optimum. It finds $2.16$ distinct optima per graph on average and
at least two on $68.73\%$ of graphs. Numerical convergence reaches
$99.96\%$ by step 200 and 100\% by step 500 for the same trajectories
(Appendix~\ref{ising:appendix}). Single-target supervision has substantially poorer quality and diversity;
the IGNN adapters converge reliably but return one configuration per graph with low
Hit10. Deeper untied models improve quality but lag in Hit10 and
distinct-optimum discovery. DIFUSCO (large) finds more optima
($D_{20}=2.60$), but has worse mean energy despite approximately 606 times
as many parameters. Consistent with our theory, \textit{one shared update trained on energy
can converge to multiple high-quality solutions}, offering a favorable
combination of convergence, quality, and diversity over single-target
training or enforced uniqueness in these experiments.

%\vspace{-1mm}
\subsection{Structural module detection in protein graphs}
\label{sec:proteins}
%\vspace{-1mm}

\paragraph{Problem.}
We seek structural modules in protein graphs: groups more internally
connected than expected from their node degrees. We replace graph
classification on the 1,113 PROTEINS graphs
\citep{borgwardt2005protein,morris2020tudataset} with instance-wise
modularity maximization \citep{newman2006modularity}.
For an undirected graph $G=(V,E)$ with node degree $d_i$ and node assignments
$z_i\in\{1,\ldots,K\}$, the objective and solution set are
\[
 Q_G(z) = \frac{1}{|E|}\sum_{\{i,j\}\in E}\mathbf1[z_i=z_j] - \sum_{k=1}^{K}\left(\frac{\sum_i d_i\mathbf1[z_i=k]}{2|E|}\right)^2, ~~
 \mathcal S(G)=\mathop{\rm arg\,max}_{z\in\{1,\ldots,K\}^{|V|}} Q_G(z).
\]
We allow at most four modules ($K=4$, with empty modules permitted).
Partitions can have similar energies even modulo module-label permutations,
motivating distinct low-energy predictions without one selected target.

\textbf{Models and baselines.}
Our shared recurrent update
$P_{t+1}=\mathrm{GNN}_{\theta}(G,P_t)$ refines a randomly initialized
\textit{soft partition}, where $P_{t,i}\in\mathbb R^K$ contains node $i$'s
assignment weights over $K$ modules. After $T$ iterations, each node
selects its highest-weight module, $z_i=\arg\max_k (P_{T,i})_k$.
Training minimizes $\mathbb E[e_G(P_T)]$ with $T=5$ over training graphs
and initializations, where $e_G$ is a differentiable soft-partition
energy derived from modularity.
At test time, we apply the learned update for 200 steps from
20 independent initializations per graph.

The four baseline groups (Section~\ref{ising:main}) test training objectives,
equilibrium uniqueness, weight sharing, and alternative solution generation.
Appendix~\ref{app:proteins} details the soft energy, architectures,
training, and evaluation protocols.

\textbf{Results.}
Our model has the highest mean modularity ($0.5560$) and Hit10
($77.49\%$) in Table~\ref{tab:proteins-main}, finding $5.83$ distinct
qualified partitions per graph on average and at least two on $80.78\%$
of graphs. Single-target supervision and unique-equilibrium IGNNs have
poorer quality and diversity. Untied depth improves performance, but
five independent blocks still lag in mean modularity, Hit10, and
partition discovery. DIFUSCO (large) finds more qualified partitions
($D^{.10}_{20}=11.02$) but has lower mean modularity with approximately
$155$ times as many parameters. Together with the convergence results
in Appendix~\ref{app:proteins} (Table~\ref{tab:proteins-long-horizon}), these findings support our
theoretical premise: energy training without prescribed targets lets
one shared update recover multiple high-quality solutions from
different initializations.

\begin{table}[t]
\centering
\small 
%\vspace{-3mm}
\caption{Structural module detection on 111 PROTEINS test graphs with 20 candidates per graph.
$\overline Q_G$ averages modularity over candidates and then graphs.
For optimal modularity $Q_G^\star$, Hit10 is the percentage of candidates
with relative gap $(Q_G^\star-Q_G(z))/|Q_G^\star|\leq0.10$.
$D^{.10}_{20}$: mean number of distinct partitions per graph with absolute gap
$Q_G^\star-Q_G(z)\leq0.10$; $G^{.10}_{20}$: percentage of graphs with
at least two such partitions; $C_{200}$: percentage of trajectories numerically
converged at $t=200$. Results: mean $\pm$ standard deviation over training
seeds 0/1/2 on the same test set. Full protocol in Appendix~\ref{app:proteins}.}
\label{tab:proteins-main}
\setlength{\tabcolsep}{3.5pt}
\begin{tabular}{lrrcrr}
\toprule
Method & $\overline Q_G\uparrow$ & Hit10 $\uparrow$ & $C_{200}\uparrow$ & $D^{.10}_{20}\uparrow$ & $G^{.10}_{20}\uparrow$ \\
\midrule
Ours & $0.5560\pm0.0024$ & $77.49\pm2.06$ & $88.05\pm2.23$ & $5.83\pm0.38$ & $80.78\pm0.42$ \\
\midrule
Single-target Supervision & $0.4911\pm0.0114$ & $41.28\pm7.16$ & $50.81\pm10.84$ & $1.11\pm0.08$ & $33.03\pm2.25$ \\
IGNN (H16) & $0.4147\pm0.0079$ & $8.11\pm2.65$ & $100.00\pm0.00$ & $0.20\pm0.04$ & $0.00\pm0.00$ \\
IGNN (H32) & $0.4196\pm0.0044$ & $11.71\pm2.21$ & $100.00\pm0.00$ & $0.22\pm0.03$ & $0.00\pm0.00$ \\
IGNN (H64) & $0.4235\pm0.0121$ & $8.41\pm2.36$ & $100.00\pm0.00$ & $0.22\pm0.06$ & $0.00\pm0.00$ \\
\midrule
Feedforward ($\times 1$) & $0.4436\pm0.0048$ & $16.52\pm2.22$ & -- & $1.35\pm0.23$ & $30.93\pm6.26$ \\
Feedforward ($\times 2$) & $0.4846\pm0.0156$ & $29.50\pm3.61$ & -- & $2.09\pm0.26$ & $53.15\pm6.62$ \\
Feedforward ($\times 3$) & $0.5088\pm0.0108$ & $39.92\pm5.17$ & -- & $2.95\pm0.62$ & $71.47\pm9.71$ \\
Feedforward ($\times 4$) & $0.5285\pm0.0022$ & $51.35\pm1.68$ & -- & $4.24\pm0.51$ & $82.88\pm6.62$ \\
Feedforward ($\times 5$) & $0.5305\pm0.0044$ & $54.97\pm3.59$ & -- & $4.32\pm0.47$ & $78.68\pm4.49$ \\
\midrule
Mean-field GNN & $0.4020\pm0.0122$ & $8.53\pm2.01$ & -- & $0.69\pm0.05$ & $16.22\pm0.00$ \\
Annealed Bernoulli GNN & $0.4131\pm0.0103$ & $9.91\pm2.35$ & -- & $0.80\pm0.01$ & $18.32\pm1.53$ \\
DIFUSCO (matched) & $0.5115\pm0.0127$ & $41.85\pm7.97$ & -- & $8.47\pm1.20$ & $97.30\pm1.27$ \\
DIFUSCO (large) & $0.5515\pm0.0014$ & $74.49\pm1.66$ & -- & $11.02\pm0.03$ & $97.00\pm0.42$ \\
VAG (matched) & $0.3952\pm0.0024$ & $8.05\pm0.14$ & -- & $0.57\pm0.01$ & $12.01\pm0.85$ \\
VAG (large) & $0.2577\pm0.0237$ & $0.44\pm0.31$ & -- & $0.35\pm0.15$ & $7.21\pm1.47$ \\
\bottomrule
\end{tabular}
\end{table}

%\vspace{-1mm}
\subsection{Chemical reaction-network multistationarity}
\label{chem:main}
%\vspace{-1mm}

\textbf{Problem.}
A chemical reaction network has $s$ species $i$ and $R$ reactions $r$.
The matrices $\alpha,\beta\in\mathbb Z_{\ge0}^{R\times s}$ give the
numbers of molecules of species $i$ consumed ($\alpha_{ri}$) and produced
($\beta_{ri}$) by reaction $r$; $\kappa\in\mathbb R_{>0}^{R}$ contains
rate constants. The graph $G=(\alpha,\beta,\kappa)$ has one node per
species and reaction, connected bidirectionally when
$\alpha_{ri}+\beta_{ri}>0$. Edges carry $(\alpha_{ri},\beta_{ri})$;
reaction nodes encode $\log\kappa_r$.
Under mass-action kinetics, the species concentrations
$c\in\mathbb R_{>0}^{s}$ evolve according to
\[
  \dot c=f_G(c)=Nv(c),\qquad
  N_{ir}=\beta_{ri}-\alpha_{ri},\qquad
  v_r(c)=\kappa_r\prod_{i=1}^{s}c_i^{\alpha_{ri}}.
\]
We seek stationary solutions
$\mathcal S(G)=\{c\in\mathbb R_{>0}^{s}:f_G(c)=0\}$.
Multiple positive solutions for one $G$ constitute multistationarity,
a set-valued prediction task. The solver seeks roots of $f_G$ without
requiring physical stability under the chemical ODE.
Our kinetic instances use published network structures
\cite{yao2025understanding}, with two numerically verified stationary
witnesses per instance.

\textbf{Models and baselines.}
Species log concentrations follow the shared update
$z_{t+1}=\mathrm{GNN}_{\theta}(G,z_t)$, with output $c_T=\exp z_T$.
Training minimizes equation residuals without prescribed targets, using
randomized unrolls of at most 32 steps; the main comparison uses $T=200$.
The four baseline groups (Section~\ref{ising:main}) are single-target MSE,
physics-trained contractive IGNNs, untied feedforward depth controls,
and distributional models. For continuous states, the latter comprise
DDPM, DDIM, and DIS with matched and larger upstream-recipe graph
adaptations; details are in Appendix~\ref{chem:appendix}.

\begin{table}[t]
\centering\small
\setlength{\tabcolsep}{3.5pt}
\caption{Chemistry results on 236 test graphs with 20 candidates per graph.
$\overline E_{\rm res}$: mean Huber penalty on the mass-action residual (lower is better);
Hit$_{10^{-3}}$: percentage of final states with normalized physical residual $R\le10^{-3}$;
$C$: percentage of recurrent trajectories with next-step log-concentration
RMS change below $10^{-3}$ at $T=200$;
$D_{20}^{\rm raw}$: mean number of distinct accepted prediction clusters per graph;
$G_{20}^{\rm raw}$: percentage of graphs with at least two. These count separated accepted predictions, not certified distinct roots.
Entries: mean $\pm$ population standard deviation over training seeds 0/1/2.
}
\label{chem:main-results}
\begin{tabular}{lrrrrr}
\toprule
Method & $\overline E_{\rm res}\downarrow$ & Hit$_{10^{-3}}\uparrow$ &
 $C\uparrow$ & $D_{20}^{\rm raw}\uparrow$ & $G_{20}^{\rm raw}\uparrow$\\
\midrule
Ours & 0.0188 $\pm$ 0.0046 & 83.77 $\pm$ 3.44 & 91.99 $\pm$ 0.35 & 3.506 $\pm$ 0.314 & 41.67 $\pm$ 17.01 \\
\midrule
Single-target Supervis. & 0.0373 $\pm$ 0.0059 & 18.78 $\pm$ 3.88 & 88.48 $\pm$ 4.29 & 1.371 $\pm$ 0.137 & 11.30 $\pm$ 0.72 \\
IGNN (H16) & 0.1214 $\pm$ 0.0823 & 0.00 $\pm$ 0.00 & 100.00 $\pm$ 0.00 & 0.000 $\pm$ 0.000 & 0.00 $\pm$ 0.00 \\
IGNN (H32) & 0.1180 $\pm$ 0.0709 & 0.00 $\pm$ 0.00 & 100.00 $\pm$ 0.00 & 0.000 $\pm$ 0.000 & 0.00 $\pm$ 0.00 \\
IGNN (H64) & 0.0960 $\pm$ 0.0056 & 0.00 $\pm$ 0.00 & 100.00 $\pm$ 0.00 & 0.000 $\pm$ 0.000 & 0.00 $\pm$ 0.00 \\
\midrule
Feedforward ($\times 1$) & 0.1082 $\pm$ 0.0081 & 0.00 $\pm$ 0.00 & --- & 0.000 $\pm$ 0.000 & 0.00 $\pm$ 0.00 \\
Feedforward ($\times 2$) & 0.0586 $\pm$ 0.0050 & 0.01 $\pm$ 0.02 & --- & 0.003 $\pm$ 0.004 & 0.14 $\pm$ 0.20 \\
Feedforward ($\times 3$) & 0.0441 $\pm$ 0.0065 & 0.11 $\pm$ 0.10 & --- & 0.021 $\pm$ 0.019 & 0.28 $\pm$ 0.20 \\
Feedforward ($\times 4$) & 0.0295 $\pm$ 0.0016 & 0.42 $\pm$ 0.35 & --- & 0.076 $\pm$ 0.061 & 0.42 $\pm$ 0.35 \\
Feedforward ($\times 5$) & 0.0255 $\pm$ 0.0022 & 0.58 $\pm$ 0.41 & --- & 0.110 $\pm$ 0.078 & 0.71 $\pm$ 0.53 \\
\midrule
DDPM / matched & 0.0576 $\pm$ 0.0047 & 0.03 $\pm$ 0.01 & --- & 0.006 $\pm$ 0.002 & 0.00 $\pm$ 0.00 \\
DDIM / matched & 0.0732 $\pm$ 0.0055 & 0.01 $\pm$ 0.01 & --- & 0.003 $\pm$ 0.002 & 0.00 $\pm$ 0.00 \\
DIS / matched & 0.9315 $\pm$ 0.0007 & 0.00 $\pm$ 0.00 & --- & 0.000 $\pm$ 0.000 & 0.00 $\pm$ 0.00 \\
DDPM / large & 0.1197 $\pm$ 0.0040 & 0.01 $\pm$ 0.01 & --- & 0.003 $\pm$ 0.002 & 0.00 $\pm$ 0.00 \\
DDIM / large & 0.1317 $\pm$ 0.0030 & 0.00 $\pm$ 0.00 & --- & 0.000 $\pm$ 0.000 & 0.00 $\pm$ 0.00 \\
DIS / large & 0.0463 $\pm$ 0.0003 & 0.02 $\pm$ 0.02 & --- & 0.004 $\pm$ 0.003 & 0.00 $\pm$ 0.00 \\
\bottomrule
\end{tabular}
\end{table}

\textbf{Results.}
Our model achieves $83.77\%$ candidate hits and $3.506$ accepted
prediction clusters per graph on average, with at least two on $41.67\%$
of graphs (Table~\ref{chem:main-results}). Numerical settling increases
from $91.99\%$ at $T=200$ to $96.15\%$ at $T=5{,}000$; remaining
oscillations and unresolved cases appear in
Table~\ref{tab:chemistry-extended-dynamics}. Single-target supervision reaches only $18.78\%$ hits; IGNNs settle in
their hidden-state operators but yield no accepted predictions,
illustrating that convergence alone does not ensure validity.
Untied depth improves accuracy, but five independent stages reach only
$0.58\%$ hits; tested distributional models also have low hit rates.
These results demonstrate multiple residual-qualified predictions and high numerical settling rates. They do not certify distinct exact roots or complete root coverage. Baselines differ in objectives and inference budgets, so the comparison does not isolate recurrence at equal computational cost.

%\vspace{-2mm}
\section{Conclusions}
%\vspace{-2mm}

Multi-attractor dynamics provide a mechanism for representing set-valued graph tasks with recurrent, weight-tied GNNs. Under the stated assumptions, one shared update can converge arbitrarily close to different valid solutions, while continuous, equivariant contraction can obstruct recovery of even one. Across three scientific tasks, energy or residual training produces multiple high-quality predictions and high numerical convergence rates without solution labels. These findings motivate preserving access to multiple equilibria when tasks admit multiple answers. This general framework opens opportunities across graph applications, with success depending on problem-specific objectives and optimization. Understanding how training shapes convergence and solution coverage remains an important direction.

\bibliographystyle{plainnat}
\bibliography{references}

\newpage

\appendix

\section*{Appendix Contents}
\startcontents[appendix]
\printcontents[appendix]{}{1}[2]{}

\section{Ideal Operator Existence}
\label{app:ideal}

In the appendices, $S(G)$ and $\mathcal Y_n$ denote $\mathcal S(G)$ and $Y_n$ from the main text. Fix the number of nodes $n$. We represent a graph instance as
\[
    G=(A,X)\in\mathcal E_n
    :=
    \R^{n\times n\times d_e}
    \times
    \R^{n\times d_v},
\]
where
\[
    A=(A^{(1)},\ldots,A^{(d_e)})
\]
contains the edge features and $X$ contains the node features. The output space
is
\[
    \mathcal Y_n=\R^{n\times q}\cong\R^m.
\]
Both spaces are equipped with their Euclidean norms.

For a permutation $\pi$ of the $n$ nodes, let $P_\pi$ denote its permutation
matrix, with $P_\pi e_i=e_{\pi(i)}$. Relabeling the nodes of a graph and an
output is defined by
\[
    \pi G
    :=
    \left(
        \bigl(P_\pi A^{(r)}P_\pi^\top\bigr)_{r=1}^{d_e},
        P_\pi X
    \right),
    \qquad
    \pi y:=P_\pi y.
\]
Since permutation matrices are orthogonal, these relabelings preserve Euclidean
distances.

Throughout, $\mathcal G\subseteq\mathcal E_n$ is closed under node relabeling:
\[
    G\in\mathcal G
    \quad\Longrightarrow\quad
    \pi G\in\mathcal G
\]
for every node permutation $\pi$. The domain $\mathcal G$ need not be bounded,
closed, connected, or compact.

\subsection{Basic concepts and main result}

\begin{definition}[Permutation-equivariant finite-branch solution map]
\label{def:equivariant-branches}
A finite-branch solution map on $\mathcal G$ is represented, for some integer
$M\ge 1$, by labeled branches
\[
    f_1,\ldots,f_M:\mathcal G\to\mathcal Y_n
\]
through
\[
    S(G):=\{f_1(G),\ldots,f_M(G)\}.
\]
Distinct branch indices are allowed to have coincident values. Since
$S(G)$ is a set, if $f_i(G)=f_j(G)$ for $i\ne j$, their common value
appears only once in $S(G)$.

We say that this representation is permutation equivariant if, for every node
permutation $\pi$, there exists a permutation $\tau_\pi$ of the branch indices
$\{1,\ldots,M\}$ such that
\[
    f_{\tau_\pi(i)}(\pi G)
    =
    \pi f_i(G)
    \qquad
    \text{for every }G\in\mathcal G
    \text{ and }i\in\{1,\ldots,M\}.
\]
Thus, relabeling the nodes may relabel the solution branches, but it does not
change the underlying solution set. In particular,
\[
    S(\pi G)
    =
    \pi S(G)
    :=
    \{\pi s:s\in S(G)\}.
\]
\end{definition}

\begin{definition}[Regular and stable graph instances]
\label{def:regular-stable}
For each branch, let
\[
    D_i:=\{G\in\mathcal G:f_i\text{ is not locally Lipschitz at }G\},
    \qquad
    D:=\bigcup_{i=1}^M D_i.
\]
A graph $G_0$ has a \emph{stable collapse pattern} if some relatively open
neighborhood $V$ of $G_0$ admits a partition
\[
    \{1,\ldots,M\}=C_1\mathbin{\dot\cup}\cdots\mathbin{\dot\cup}C_r
\]
such that, for every $G\in V$,
\[
    f_i(G)=f_j(G)
    \quad\Longleftrightarrow\quad
    i,j\in C_\alpha
    \text{ for some }\alpha.
\]
Let $U$ be the set of graphs without a stable collapse pattern and define
\[
    \mathcal G_\star:=\mathcal G\setminus(D\cup U).
\]
We call $\mathcal G_\star$ the \emph{regular stable domain}. Both $D$ and $U$
are relatively closed in $\mathcal G$. No emptiness, finiteness, or
measure-zero assumption is imposed on $D$.
\end{definition}

\noindent\textbf{Remark (Equivariant branch labels).}
At the level of the underlying set-valued map, the fixed label
permutations in Definition~\ref{def:equivariant-branches} impose no
additional restriction. Indeed, given any finite representation
$S(G)=\{f_1(G),\ldots,f_M(G)\}$ satisfying $S(\pi G)=\pi S(G)$, define
\[
    \widetilde f_{(i,\sigma)}(G):=\sigma f_i(\sigma^{-1}G),
    \qquad
    \widetilde\tau_\pi(i,\sigma):=(i,\pi\sigma),
    \qquad \sigma\in\mathfrak S_n.
\]
These $Mn!$ labels represent the same solution set and satisfy
Definition~\ref{def:equivariant-branches}. If $D$ and $\mathcal G_\star$
are computed from the original labels, the symmetrized representation has
\[
    \widetilde D=\bigcup_{\sigma\in\mathfrak S_n}\sigma D,
    \qquad
    \widetilde{\mathcal G}_\star
    =\bigcap_{\sigma\in\mathfrak S_n}\sigma\mathcal G_\star.
\]
For the second identity, a stable partition restricts to each
$\sigma$-block; conversely, local continuity and separation of the
distinct solution values make matches between these blocks locally
constant. Thus symmetrization may shrink the regular stable domain,
and probability assumptions on that domain must be rechecked. If the
original representation already satisfies
Definition~\ref{def:equivariant-branches}, both domains are invariant
by Lemma~\ref{lem:bad-set-invariance} and remain unchanged.

\begin{definition}[Switching set and strict basins]
\label{def:switching}
For $G\in\mathcal G_\star$, the \emph{reduced switching set} is
\[
    \Sigma_G^\circ
    :=
    \left\{
        y\in\mathcal Y_n:
        \#\operatorname*{argmin}_{s\in S(G)}\|y-s\|\ge2
    \right\},
\]
where coincident branch values are counted only once. For $s\in S(G)$, its
strict Voronoi basin is
\[
    V_s(G)
    :=
    \{y\in\mathcal Y_n:
      \|y-s\|<\|y-r\|
      \text{ for every }r\in S(G)\setminus\{s\}\}.
\]
\end{definition}

\begin{theorem}[Jointly globally Lipschitz equivariant multi-attractor dynamics]
\label{thm:joint-multi-attractor}
Let $S$ be an equivariant finite-branch solution map in the sense of
Definition~\ref{def:equivariant-branches}, with regular stable domain and
switching geometry given by Definitions~\ref{def:regular-stable}
and~\ref{def:switching}. Then there exists
\[
    T:\mathcal G\times\mathcal Y_n\to\mathcal Y_n,
    \qquad (G,y)\mapsto T_G(y),
\]
with the following properties.

\begin{enumerate}[leftmargin=5mm]
    \item \textbf{Regularity and symmetry.}
    The map $T$ is jointly globally Lipschitz and permutation equivariant. In
    particular, for some $L_T<\infty$,
    \begin{equation}
        \|T_G(y)-T_H(z)\|
        \le
        L_T\sqrt{\|G-H\|^2+\|y-z\|^2},
        \qquad
        T_{\pi G}(\pi y)=\pi T_G(y).
        \label{eq:ideal-joint-regularity}
    \end{equation}

    \item \textbf{Exceptional configurations.}
    One may choose $T_G(y)=y$ whenever $G\in D\cup U$, or whenever
    $G\in\mathcal G_\star$ and $y\in\Sigma_G^\circ$.

    \item \textbf{Basin-wise convergence.}
    Fix $G\in\mathcal G_\star$ and $y_0\notin\Sigma_G^\circ$, and let
    \[
        s:=\operatorname*{argmin}_{r\in S(G)}\|y_0-r\|.
    \]
    Then $y_{t+1}=T_G(y_t)$ converges to $s$, and there is a
    trajectory-dependent constant $c(G,y_0)\in(0,1)$ such that
    \begin{equation}
        \|y_t-s\|
        \le
        \bigl(1-c(G,y_0)\bigr)^t\|y_0-s\|.
        \label{eq:ideal-geometric-convergence}
    \end{equation}
    No uniform positive lower bound on $c(G,y_0)$ is claimed.

    \item \textbf{Almost-everywhere recovery.}
    For every $G\in\mathcal G_\star$, $\mathcal L^m(\Sigma_G^\circ)=0$. 
    Let $\mu$ be a Borel probability measure on $\mathcal Y_n$ with $\mu\ll\mathcal L^m$, and draw $y_0\sim\mu$. Then the iteration $y_t$ converges to $S(G)$ for $\mu$-almost every initialization. If in addition $\mu(V_s(G))>0$ for every $s\in S(G)$, then
\begin{equation}
\supp\mathcal L(y_\infty\mid G)=S(G),
    \label{eq:ideal-full-support}
\end{equation}
    where $y_\infty$ denotes the almost-sure iteration limit.
\end{enumerate}
\end{theorem}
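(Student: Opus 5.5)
The plan is to build $T$ explicitly as a damped step toward the nearest solution. The step size is chosen to vanish on the switching set and near the bad graphs, which restores global Lipschitz continuity despite the discontinuity of the nearest-point selection. For $G\in\mathcal G_\star$ and $y\in\mathcal Y_n$, let $d_1(G,y)\le d_2(G,y)$ be the smallest and second smallest values of $\|y-r\|$ over the \emph{distinct} elements $r\in S(G)$, with $d_2:=+\infty$ if $S(G)$ is a singleton. Let $s^*(G,y)$ be a minimizer. Define the normalized gap
\[
h(G,y):=\frac{d_2-d_1}{d_1+d_2}\in[0,1],
\]
with $h:=1$ when $d_2=\infty$. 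The candidate operator is
\[
T_G(y):=y-c\,\psi(G)\,h(G,y)\,\bigl(y-s^*(G,y)\bigr).
\]
Here $c\in(0,1)$ is a constant and $\psi:\mathcal G\to[0,1]$ is a cutoff that vanishes on $D\cup U$. Set $T_G(y)=y$ whenever $\psi(G)h(G,y)=0$; this gives item~2 directly. Permutation equivariance follows because $d_1,d_2,h$ depend only on the set $S(G)$ and on Euclidean distances, both of which are preserved by $\pi$ through $S(\pi G)=\pi S(G)$. We also take $\psi$ permutation invariant, which is possible by Lemma~\ref{lem:bad-set-invariance}, since $D\cup U$ is invariant.

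For regularity in $y$ with $G$ fixed, the key estimate works as follows. Inside each Voronoi cell the map $y\mapsto h(y)(y-s^*)$ has Lipschitz constant bounded by an absolute constant, because $d_1$ and $d_2$ are $1$-Lipschitz and $h\,d_1\le d_2-d_1$. Across the bisector between two competing solutions $s,r$, the vector $y-s^*$ jumps by $s-r$, with $\|s-r\|\le d_1+d_2$. At the same time $h\to0$ there, so $h\,\|y-s^*\|\le d_2-d_1$ is controlled. Hence the product is continuous and Lipschitz on segments crossing finitely many bisectors. Regularity in $G$ is where the cutoff matters. On $\mathcal G_\star$ the collapse pattern is locally constant, so the distinct solutions move locally Lipschitz and the quantities $d_1,d_2,s^*$ move Lipschitz except at label swaps, which are handled by the same gap argument. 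I would choose $\psi$ by a McShane-type construction: $\psi(G)\le\min\{1,\dist(G,D\cup U)\}$ divided by a locally bounded majorant of the local Lipschitz constants of the branches. Lowering $\psi$ further in this way keeps it globally Lipschitz while making the product $\psi\cdot h(y-s^*)$ jointly globally Lipschitz. Unboundedness in $y$ is harmless because $\|h(y-s^*)\|\le d_2-d_1\le\max\|s-r\|$ over pairs in $S(G)$. This step, getting a \emph{single} global constant $L_T$ uniformly over a possibly unbounded, non-closed $\mathcal G$ with non-uniform branch Lipschitz constants, is where I expect the main obstacle. If a direct majorant fails, I would fall back on a locally finite partition of unity on $\mathcal G_\star$ subordinate to the stable neighbourhoods.

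For item~3, fix $G\in\mathcal G_\star$ with $\psi(G)>0$ and $y_0\notin\Sigma_G^\circ$, and write $s=s^*(G,y_0)$. One step moves $y$ along the segment toward $s$, so $d_1$ is multiplied by $1-c\psi h$. By the triangle inequality, every other distance decreases by at most $c\psi h\,d_1$. It follows that $d_2-d_1$ does not decrease and $d_1+d_2$ does not increase, so $h(G,y_t)\ge h(G,y_0)$ and $y_t$ stays in $V_s(G)$. Iterating gives \eqref{eq:ideal-geometric-convergence} with $c(G,y_0)=c\,\psi(G)\,h(G,y_0)\in(0,1)$. Note that $\psi>0$ on $\mathcal G_\star$, since $\mathcal G_\star$ is relatively open.

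For item~4, observe that $\Sigma_G^\circ$ is contained in the finite union of bisecting hyperplanes $\{\|y-s\|=\|y-r\|\}$ over pairs of distinct $s,r\in S(G)$, so it is Lebesgue-null. Absolute continuity of $\mu$ then gives almost-sure convergence by item~3, and the limit equals the nearest solution to $y_0$. Therefore $\mathbb P(y_\infty=s)=\mu(V_s(G))>0$ for each $s\in S(G)$. Since $S(G)$ is finite, this yields \eqref{eq:ideal-full-support}.
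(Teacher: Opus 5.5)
\textbf{The singleton convention breaks item~1.} Your fixed-$G$ estimate (Lipschitz constant at most $3$ for $y\mapsto h(y-s^*)$) and your monotonicity argument for item~3 ($d_2-d_1$ nondecreasing, $d_1+d_2$ nonincreasing) are correct. The problem is the convention $h:=1$ when $S(G)$ has a single distinct element: with it, $T$ is not jointly globally Lipschitz, and no graph-only cutoff $\psi(G)$ can fix this.

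On such graphs $T_G(y)=y-c\psi(G)(y-s(G))$, so your bound $\|h(y-s^*)\|\le d_2-d_1$ no longer applies and the step is unbounded in $y$. For two such graphs $G,H$ and a common state $y$,
\[
T_G(y)-T_H(y)=c\bigl(\psi(H)-\psi(G)\bigr)y+c\bigl(\psi(G)s(G)-\psi(H)s(H)\bigr).
\]
Letting $\|y\|\to\infty$ forces $\psi(G)=\psi(H)$. Likewise, if $H\in D\cup U$ then $\|T_G(y)-T_H(y)\|=c\psi(G)\|y-s(G)\|$, which forces $\psi(G)=0$.

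A concrete failure: take $M=1$, $\mathcal G=\mathcal E_n$, and $f_1(G)=\|G\|^2\mathbf 1v^\top$ with $v\neq0$. This map is equivariant and locally Lipschitz, so $D=U=\varnothing$ and $\mathcal G_\star=\mathcal G$. Joint Lipschitz continuity then forces $\psi$ to be a constant with $c\psi f_1$ globally Lipschitz, hence $\psi\equiv0$, and item~3 fails.

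The missing idea is that the step $T_G(y)-y$ must be bounded uniformly in $y$ on every graph. Your gap factor gives this automatically when $S(G)$ has at least two points, since $hd_1\le d_2-d_1\le\operatorname{diam}S(G)$, but not for singletons. In the paper this is handled by the $y$-dependent factor $\beta(G,y)$ from the joint flattening on $\Omega$, which gives $\|T_G(y)-y\|=\|q(G,y)\|\le M_A$. In your framework the repair is cheap: use, e.g., $h:=1/(1+d_1)$ on singleton graphs. Then $\|h(y-s)\|\le1$, this map is $1$-Lipschitz in both $y$ and $s$, and your item~3 argument still works because $d_1$ decreases along the orbit.

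\textbf{The cutoff step is also not yet a proof.} Even after this repair, what must be controlled is not just the branch Lipschitz constants. The relevant object is the map $G\mapsto\Phi(G,\cdot):=h(G,\cdot)\,(\cdot-s^*(G,\cdot))$ into bounded functions with the sup norm over $y$. Two quantities matter: its local size, which involves $\operatorname{diam}S(G)$, and its local Lipschitz modulus, which also depends on the separation between distinct solution values, since your label-swap argument in $G$ needs the sites to stay distinct. Moreover, dividing by a merely locally bounded majorant need not preserve Lipschitz continuity. And since $\mathcal G$ need not be closed or convex, nearby graphs need not lie in a common good neighbourhood.

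One way to close this: for $G\in\mathcal G_\star$, let $\rho(G)$ be the supremum of $r\in(0,1]$ such that $B(G,r)\cap\mathcal G\subseteq\mathcal G_\star$ and, on this set, $G'\mapsto\Phi(G',\cdot)$ is $r^{-1}$-Lipschitz and $r^{-1}$-bounded in sup norm; set $\rho:=0$ on $D\cup U$. Then $\rho$ is invariant and $1$-Lipschitz. Taking $\psi:=\rho^2$ makes $\psi\Phi$ globally Lipschitz: split into the cases $\|G-H\|<\max\{\rho(G),\rho(H)\}/2$ and its complement.

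With these two changes, your explicit gap-damped operator becomes a genuine alternative to the paper's abstract joint flattening. It also yields the explicit rate $c(G,y_0)=c\,\psi(G)\,h(G,y_0)$ in place of the paper's compactness-based minimum of $\beta$ along the segment.
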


\subsection{Bad set and displacement functions}

Before the proof, we state some definitions and lemmas.

\begin{definition}[Joint bad set]
\label{def:joint-bad-set}
Let $Z:=\mathcal G\times\mathcal Y_n$ carry the Euclidean product metric, and
define
\[
    \boldsymbol\Sigma^\circ
    :=
    \{(G,y):G\in\mathcal G_\star,\ y\in\Sigma_G^\circ\}.
\]
The \emph{joint bad set} and its complement are
\begin{equation}
    B
    :=
    \bigl((D\cup U)\times\mathcal Y_n\bigr)
    \cup
    \overline{\boldsymbol\Sigma^\circ}^{\,Z},
    \qquad
    \Omega:=Z\setminus B.
    \label{eq:ideal-joint-bad-set}
\end{equation}
Here, $B$ contains singular and unstable graph instances and the
closed joint switching graph.
\end{definition}

\begin{definition}[Selector and normalized displacement]
\label{def:selector-displacement}
Define $d:Z\to[0,1]$ by
\[
d(G,y):=
\begin{cases}
\min\{1,\dist((G,y),B)\}, & B\ne\varnothing,\\
1, & B=\varnothing.
\end{cases}
\]
For $(G,y)\in\Omega$, the nearest distinct solution is unique; define the branch selector $P$ and normalized displacement $H$ as follows:
\begin{equation}
P(G,y):=\operatorname*{argmin}_{s\in S(G)}\|y-s\|,
\qquad
H(G,y):=\frac{P(G,y)-y}{d(G,y)}.
    \label{eq:ideal-selector-displacement}
\end{equation}
\end{definition}

First, let's consider the invariance/equivariance properties of the above definitions.

\begin{lemma}
\label{lem:bad-set-invariance}
The bad set $B$ and its complement $\Omega$ are both permutation invariant.
\end{lemma}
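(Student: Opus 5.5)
The plan is to prove invariance of each constituent of $B$ under the joint action $\pi\cdot(G,y):=(\pi G,\pi y)$ on $Z$. This action is an isometry of $Z$, since permutation matrices are orthogonal on both factors. Once $B$ is invariant, $\Omega=Z\setminus B$ is invariant because $\pi$ acts bijectively (with inverse $\pi^{-1}$). Throughout we use $S(\pi G)=\pi S(G)$ and the label relation $f_{\tau_\pi(i)}(\pi G)=\pi f_i(G)$ from Definition~\ref{def:equivariant-branches}.

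\emph{Step 1: $D$ is invariant.} We have $f_{\tau_\pi(i)}(H)=\pi f_i(\pi^{-1}H)$, and $H\mapsto\pi^{-1}H$ is an isometry of $\mathcal G$ onto itself, since $\mathcal G$ is closed under relabeling. Hence $f_i$ is locally Lipschitz at $G$ if and only if $f_{\tau_\pi(i)}$ is locally Lipschitz at $\pi G$. So $\pi D_i=D_{\tau_\pi(i)}$, and taking the union over $i$ gives $\pi D=D$.

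\emph{Step 2: $U$ is invariant.} Suppose $G_0$ has a stable collapse pattern on a neighborhood $V$ with partition $\{C_\alpha\}$. Then $\pi V$ is a relatively open neighborhood of $\pi G_0$. Take the partition $\{\tau_\pi(C_\alpha)\}$ on it. For $H=\pi G\in\pi V$, the equation $f_{\tau_\pi(i)}(\pi G)=f_{\tau_\pi(j)}(\pi G)$ is equivalent to $\pi f_i(G)=\pi f_j(G)$, hence to $f_i(G)=f_j(G)$, hence to $i,j$ lying in a common $C_\alpha$. So $\pi G_0$ has a stable pattern. The converse follows by applying the same argument with $\pi^{-1}$. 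Therefore $\pi U=U$, and consequently $\pi\mathcal G_\star=\mathcal G_\star$.

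\emph{Step 3: the switching set is equivariant.} For $G\in\mathcal G_\star$ we have $\|\pi y-\pi s\|=\|y-s\|$. Since $S(\pi G)=\pi S(G)$, the map $s\mapsto\pi s$ is a bijection $\operatorname{argmin}_{s\in S(G)}\|y-s\|\to\operatorname{argmin}_{r\in S(\pi G)}\|\pi y-r\|$. Cardinalities are preserved, and coincident values are counted once on both sides because $\pi$ is injective. Hence $\Sigma^\circ_{\pi G}=\pi\Sigma^\circ_G$, and therefore $\pi\boldsymbol\Sigma^\circ=\boldsymbol\Sigma^\circ$.

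\emph{Step 4: closures and the product.} Because $\pi$ acts as a homeomorphism (indeed an isometry) of $Z$, it commutes with closure: $\pi\overline{\boldsymbol\Sigma^\circ}^{\,Z}=\overline{\pi\boldsymbol\Sigma^\circ}^{\,Z}=\overline{\boldsymbol\Sigma^\circ}^{\,Z}$. Also $\pi\bigl((D\cup U)\times\mathcal Y_n\bigr)=(D\cup U)\times\mathcal Y_n$, since $\pi$ is bijective on $\mathcal Y_n$. Taking the union gives $\pi B=B$, and hence $\pi\Omega=\Omega$.

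The only step needing care is Step 2. The difficulty is the bookkeeping with the branch relabeling $\tau_\pi$: the partition must be transported correctly, and the relatively open neighborhood must stay inside $\mathcal G$. Both are handled by $\mathcal G$ being relabeling-closed and by $\tau_\pi$ being a bijection of labels.
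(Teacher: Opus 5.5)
Your proposal is correct and follows the paper's proof essentially step for step. Both arguments obtain $\pi D=D$ from $f_{\tau_\pi(i)}=\pi\circ f_i\circ\pi^{-1}$, get $\pi U=U$ by transporting the stable partition to $\{\tau_\pi(C_\alpha)\}$ on $\pi V$ (with $\pi^{-1}$ for the converse), show $\pi\boldsymbol\Sigma^\circ=\boldsymbol\Sigma^\circ$ by isometric transport of nearest-solution ties, and finish by noting that the homeomorphism $\pi$ commutes with closure.
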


\begin{proof}
Fix a node permutation $\pi$. Its action on $\mathcal G$ is an
isometric bijection and therefore maps relatively open neighborhoods
to relatively open neighborhoods. Definition~\ref{def:equivariant-branches}
gives
\[
    f_{\tau_\pi(i)}=\pi\circ f_i\circ\pi^{-1}.
\]
Consequently, $f_i$ is locally Lipschitz at $G$ if and only if
$f_{\tau_\pi(i)}$ is locally Lipschitz at $\pi G$.
Since $\tau_\pi$ is a bijection of the labels, $\pi D=D$.
If $V$ and $\{C_\alpha\}$ give a stable collapse pattern at $G_0$,
then $\pi V$ and $\{\tau_\pi(C_\alpha)\}$ give one at $\pi G_0$.
Applying the same argument to $\pi^{-1}$ yields $\pi U=U$.
In particular, $\pi\mathcal G_\star=\mathcal G_\star$, and
\[
    \pi((D\cup U)\times\mathcal Y_n)=(D\cup U)\times\mathcal Y_n.
\]

Moreover, solution-set equivariance and distance preservation imply $\pi(\boldsymbol\Sigma^\circ) =\boldsymbol\Sigma^\circ$. 
Indeed, let $(G,y)\in\boldsymbol\Sigma^\circ$. Then there exist distinct
$s_1,s_2\in S(G)$ such that
\[
    \|y-s_1\|=\|y-s_2\|=\dist(y,S(G)).
\]
Since $S(\pi G)=\pi S(G)$ and the permutation action is an isometry,
$\pi s_1$ and $\pi s_2$ are distinct nearest elements of $S(\pi G)$ to
$\pi y$. Since $\pi G\in\mathcal G_\star$, it follows that
\[
    \pi(G,y)=(\pi G,\pi y)\in\boldsymbol\Sigma^\circ.
\]
Thus
\[
    \pi(\boldsymbol\Sigma^\circ)
    \subseteq\boldsymbol\Sigma^\circ.
\]
Applying the same argument to $\pi^{-1}$ gives the reverse inclusion and
therefore
\[
    \pi(\boldsymbol\Sigma^\circ)
    =\boldsymbol\Sigma^\circ.
\]
Since the permutation mapping is a homeomorphism,
\[
    \pi\left(
        \overline{\boldsymbol\Sigma^\circ}^{\,Z}
    \right)
    =
    \overline{
        \pi(\boldsymbol\Sigma^\circ)
    }^{\,Z}
    =
    \overline{\boldsymbol\Sigma^\circ}^{\,Z}.
\]
Using $B=((D\cup U)\times\mathcal Y_n) \cup\overline{\boldsymbol\Sigma^\circ}^{\,Z}$
now gives $\pi B=B$. In addition, 
\[
    \pi\Omega
    =
    \pi(Z\setminus B)
    =
    \pi Z\setminus\pi B
    =
    Z\setminus B
    =
    \Omega.
\]
which shows the invariance of $\Omega$ and finishes the proof.
\end{proof}

\begin{lemma}
\label{lem:selector-equivariance}
Function $d$ is permutation invariant, and $P$ and $H$ are permutation equivariant.
\end{lemma}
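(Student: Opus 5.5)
The plan is to derive all three claims from Lemma~\ref{lem:bad-set-invariance} and the fact that each permutation $\pi$ acts on $Z=\mathcal G\times\mathcal Y_n$ as an isometric bijection, $\pi(G,y)=(\pi G,\pi y)$. We take $d$ first, then $P$, then $H$.

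\emph{Invariance of $d$.} Fix $(G,y)\in Z$. If $B=\varnothing$, then $d\equiv 1$ and there is nothing to prove. Otherwise, Lemma~\ref{lem:bad-set-invariance} gives $\pi B=B$. Since $\pi$ is a distance-preserving bijection of $Z$,
\[
\dist(\pi(G,y),B)=\dist(\pi(G,y),\pi B)=\dist((G,y),B).
\]
Taking the minimum with $1$ gives $d(\pi G,\pi y)=d(G,y)$.

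\emph{Equivariance of $P$.} First, the domain is preserved: $(G,y)\in\Omega$ if and only if $(\pi G,\pi y)\in\Omega$, again by Lemma~\ref{lem:bad-set-invariance}. So $P$ and $H$ are defined at $\pi(G,y)$ exactly when they are defined at $(G,y)$. For $(G,y)\in\Omega$, let $s=P(G,y)$ be the unique nearest distinct element of $S(G)$. Set-valued equivariance (Definition~\ref{def:equivariant-branches}) gives $S(\pi G)=\pi S(G)$, and $\pi$ is a Euclidean isometry on $\mathcal Y_n$. Hence, for every $r\in S(G)$,
\[
\|\pi y-\pi r\|=\|y-r\|.
\]
The map $r\mapsto\pi r$ is a bijection from $S(G)$ onto $S(\pi G)$ that preserves distances to the query point. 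Therefore $\pi s$ is the unique minimizer over $S(\pi G)$ of the distance to $\pi y$, that is, $P(\pi G,\pi y)=\pi P(G,y)$. Uniqueness at $(\pi G,\pi y)$ also follows independently from $(\pi G,\pi y)\in\Omega$. Coincident branch values cause no difficulty, because $P$ is defined through the set $S(G)$ rather than through the labels.

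\emph{Equivariance of $H$.} The action $y\mapsto P_\pi y$ is linear, and by the previous two steps the denominator is unchanged. Hence
\[
H(\pi G,\pi y)=\frac{\pi P(G,y)-\pi y}{d(G,y)}=\pi H(G,y).
\]
The expression is well defined: $\Omega$ is the complement of the closed set $B$, so $d>0$ on $\Omega$.

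\emph{Main obstacle.} The step needing the most care is the domain bookkeeping: confirming that $\Omega$ is mapped onto itself so that both sides of each identity are defined, and that $d$ is positive there. Both points are supplied by Lemma~\ref{lem:bad-set-invariance} and the closedness of $B$. The remaining steps are routine isometry arguments.
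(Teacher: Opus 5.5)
Your proposal is correct and follows essentially the same route as the paper. Invariance of $d$ comes from $\pi B=B$ plus isometry; equivariance of $P$ comes from $S(\pi G)=\pi S(G)$, distance preservation, and uniqueness of the nearest solution on $\Omega$; and $H$ follows by direct substitution. Your extra bookkeeping (invariance of $\Omega$, the case $B=\varnothing$, and $d>0$ on $\Omega$ from closedness of $B$) makes explicit what the paper leaves implicit.
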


\begin{proof}
Since the permutation action is an isometry and $\pi B=B$,
\[
    d(\pi G,\pi y)=d(G,y).
\]
Let $s_\star=P(G,y)$. By solution-set equivariance,
\[
    S(\pi G)=\pi S(G).
\]
Thus every $\widetilde s\in S(\pi G)$ has the form $\widetilde s=\pi s$
for some $s\in S(G)$. Since permutations preserve distances,
\[
    \|\pi y-\widetilde s\|
    =\|y-s\|
    \ge \|y-s_\star\|
    =\|\pi y-\pi s_\star\|.
\]
Hence $\pi s_\star$ is a nearest solution to $\pi y$. Its uniqueness on
$\Omega$ gives
\[
    P(\pi G,\pi y)=\pi P(G,y).
\]
Finally, by Definition \ref{def:selector-displacement},
\[
    H(\pi G,\pi y)=\frac{P(\pi G,\pi y)-\pi y}{d(\pi G,\pi y)} =\pi\frac{P(G,y)-y}{d(G,y)}
    =\pi H(G,y),
\]
which finishes the proof.
\end{proof}

To further establish the properties of displacement $H$, we state and use the following lemma.

\begin{lemma}[{\cite[Theorem~A.4]{liu2026expressive}}]
\label{lem:flattening}
Let $X$ be an arbitrary subset of a Euclidean space, equipped with
the induced metric, and let $h:X\to\mathbb R$ be locally Lipschitz
intrinsically on $X$. Then there exists a function $\varepsilon:X\to(0,1)$ such that both $\varepsilon$ and $\varepsilon h$ are globally Lipschitz on $X$.
\end{lemma}

Here intrinsic local Lipschitzness means Lipschitzness on a relative
neighborhood $X\cap B(x,r)$ of each $x\in X$.
The cited theorem requires no openness, closedness, boundedness, or
local compactness of $X$, so it applies to $X=\Omega$.

We first extend this result to the vector-valued and equivariant form required below. Consider a general group setting where $\Gamma$ is a finite group with orthogonal representations:

\[
\rho_X:\Gamma\to \mathrm{O}(p),     
\qquad     
\rho_Y:\Gamma\to \mathrm{O}(m).
\]

For brevity, given $\gamma\in\Gamma$, $x\in\R^p$, and $v\in\R^m$, we write:
\[
\gamma x:=\rho_X(\gamma)x,     \qquad     \gamma v:=\rho_Y(\gamma)v.
\]

The previously defined permutation $\pi$ is an example of such an action; it applies to both $G$ and $y$, acting as an orthogonal transformation on each domain.

\begin{lemma}[Equivariant bounded flattening]
\label{lem:equivariant-flattening}
Let $X\subseteq\R^p$ be $\Gamma$-invariant.
Suppose that $H:X\to\R^m$ is locally Lipschitz and $\Gamma$-equivariant.
Then there exists a $\Gamma$-invariant function $\beta:X\to(0,1)$ such that both $\beta$ and $\beta H$ are bounded and globally Lipschitz on $X$.
Moreover, $\beta H$ is $\Gamma$-equivariant.
\end{lemma}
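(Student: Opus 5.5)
Our plan is to reduce to the scalar, non-equivariant flattening of Lemma~\ref{lem:flattening}, and then symmetrize over the finite group $\Gamma$.

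\textbf{Step 1: scalar reduction.} Define the scalar function $h:X\to\R$ by $h(x):=\|H(x)\|$. This function is locally Lipschitz intrinsically on $X$, since $|\,\|H(x)\|-\|H(x')\|\,|\le\|H(x)-H(x')\|$. Because each $\rho_Y(\gamma)$ is orthogonal and $H$ is equivariant,
\[
h(\gamma x)=\|\gamma H(x)\|=h(x),
\]
so $h$ is $\Gamma$-invariant. Lemma~\ref{lem:flattening} then gives $\varepsilon:X\to(0,1)$ such that $\varepsilon$ and $\varepsilon h$ are globally Lipschitz. However, $\varepsilon h$ being Lipschitz does not by itself control $\varepsilon H$, because the direction of $H$ can rotate quickly. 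We therefore instead apply Lemma~\ref{lem:flattening} to each coordinate $h_k:=H_k$, $k=1,\ldots,m$, together with $h_0:=\|H\|$. This yields functions $\varepsilon_0,\ldots,\varepsilon_m$. Set $\varepsilon_\ast:=\min_k \varepsilon_k$. A minimum of finitely many Lipschitz functions is Lipschitz, but we must check that $\varepsilon_\ast H_k$ remains Lipschitz. To secure this, we plan to use the cleaner route of applying Lemma~\ref{lem:flattening} to the single locally Lipschitz function $\tilde h:=1+\sum_k |H_k| + \|H\|$ and then taking $\beta_0:=\varepsilon/\tilde h$ type products. Concretely, write $\beta_0 H_k=(\varepsilon\tilde h)\cdot(H_k/\tilde h^2)\cdot\tilde h$, so it suffices that $\varepsilon$ and $\varepsilon\tilde h$ are Lipschitz and bounded.

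\textbf{Step 2: boundedness and Lipschitz.} The key step is to ensure both boundedness and the Lipschitz property. To handle this, we replace $\varepsilon$ by $\varepsilon_1:=\varepsilon/(1+\varepsilon\tilde h)$. The map $(a,b)\mapsto a/(1+b)$ is Lipschitz on $[0,1]\times[0,\infty)$, so $\varepsilon_1$ is Lipschitz. Next, $\varepsilon_1\tilde h=\varepsilon\tilde h/(1+\varepsilon\tilde h)\in[0,1)$ is Lipschitz for the same reason. For the vector quantity, we have
\[
\varepsilon_1 H=\frac{\varepsilon\tilde h}{1+\varepsilon\tilde h}\cdot\frac{H}{\tilde h}.
\]
The first factor is bounded and Lipschitz, while $H/\tilde h$ is bounded by $1$. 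However, $H/\tilde h$ is only locally Lipschitz, with its local constant comparable to $\Lip(H)/\tilde h$. This is the main obstacle: controlling the Lipschitz constant of the direction field. Our first attempt will be to pass through the construction underlying Lemma~\ref{lem:flattening}, where $\varepsilon$ decays like the reciprocal of the local Lipschitz modulus and the distance to the boundary of locality. Applying the lemma to the locally Lipschitz function $h_\ast(x):=\tilde h(x)+\lambda(x)$, with $\lambda$ a locally Lipschitz majorant of the local Lipschitz constant of $H$, then makes $\varepsilon\cdot\Lip_{\mathrm{loc}}(H)$ bounded. On segments inside a neighborhood, a product rule then gives a global bound; for pairs of points that are far apart relative to the locality radii, we use boundedness of $\varepsilon_1 H$ together with $|\varepsilon_1H(x)-\varepsilon_1H(x')|\le 2\sup\|\varepsilon_1 H\|$. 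If the locality radii shrink, we additionally fold $1/r(x)$ into $h_\ast$ so that the far-pair case also costs only a constant times the distance.

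\textbf{Step 3: symmetrization.} Finally, set
\[
\beta(x):=\frac{1}{|\Gamma|}\sum_{\gamma\in\Gamma}\varepsilon_1(\gamma x).
\]
This function takes values in $(0,1)$, is $\Gamma$-invariant, and is Lipschitz because $\rho_X$ is orthogonal. Using equivariance and orthogonality,
\[
\beta H(x)=\frac1{|\Gamma|}\sum_\gamma \gamma^{-1}\bigl(\varepsilon_1H\bigr)(\gamma x).
\]
This is an average of bounded Lipschitz maps, and hence is bounded and Lipschitz. Since $\beta$ is invariant and $H$ is equivariant, $\beta H$ is equivariant.
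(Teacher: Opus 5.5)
\textbf{The gap: Steps 1--2 never produce the damping factor.} You need a non-equivariant $\varepsilon_1:X\to(0,1)$ such that $\varepsilon_1$ and $\varepsilon_1H$ are bounded and globally Lipschitz, and your proposal does not construct one.

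Applying Lemma~\ref{lem:flattening} to the single scalar $\tilde h$ only controls $\varepsilon\tilde h$; it says nothing about the direction $H/\tilde h$. For example, suppose $\tilde h$ is constant while $H$ moves along a level set of $\tilde h$ with unbounded local Lipschitz constant. Then the lemma is allowed to return a constant $\varepsilon$, and $\varepsilon_1H$ is just a constant multiple of $H$, which is not globally Lipschitz.

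Your repair then relies on properties Lemma~\ref{lem:flattening} does not provide:
\begin{itemize}
\item It gives $\varepsilon h_\ast$ globally Lipschitz, not bounded. So $\varepsilon\lambda$ need not be bounded when $X$ is unbounded.
\item The locally Lipschitz majorant $\lambda$ of the local Lipschitz modulus, and a suitably regular locality radius $r$ on an arbitrary, possibly non-locally-compact $X$, are asserted rather than constructed.
\item The far-pair estimate needs $\|\varepsilon_1H(x')\|\lesssim\|x-x'\|$, which depends on regularity of $r$ that you have not established.
\end{itemize}
As written, this amounts to announcing a vector-valued version of the cited theorem without proving it.

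\textbf{The missing idea: use a product, not a minimum.} The coordinatewise route you dropped in Step~1 works if you combine the factors by a product and apply your own Step~2 normalization coordinatewise.
\begin{itemize}
\item Apply Lemma~\ref{lem:flattening} to each $H_j$. This gives $\varepsilon_j$ with $\varepsilon_j$ and $u_j:=\varepsilon_jH_j$ globally Lipschitz.
\item Set $a_j:=\varepsilon_j/(1+|u_j|)\in(0,1)$. It is bounded and Lipschitz, and $a_jH_j=u_j/(1+|u_j|)$ is bounded and Lipschitz.
\item Set $a_0:=\prod_j a_j$. Then $a_0H_j=(a_jH_j)\prod_{k\ne j}a_k$ is a product of bounded Lipschitz functions, hence bounded and Lipschitz.
\end{itemize}
This is what the paper does. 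With a minimum, you would have $\varepsilon_\ast H_j=(\varepsilon_\ast/\varepsilon_j)\,u_j$, and the ratio $\varepsilon_\ast/\varepsilon_j$ need not be Lipschitz. The product avoids this because it only multiplies the already-controlled quantity $a_jH_j$ by bounded Lipschitz factors.

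\textbf{Step 3 is fine.} Your averaging is a valid alternative to the paper's product symmetrization $\beta(x)=\prod_{\gamma\in\Gamma}a_0(\gamma x)$. The identity $\beta H(x)=|\Gamma|^{-1}\sum_{\gamma}\gamma^{-1}(a_0H)(\gamma x)$ gives boundedness, the Lipschitz property, and equivariance directly. So with $a_0$ in place of your $\varepsilon_1$, your Step~3 completes the proof.
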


\begin{proof}
Write
\[
    H=(H_1,\ldots,H_m).
\]
For every coordinate $H_j$, apply Lemma \ref{lem:flattening} to obtain a function
\[
    \varepsilon_j:X\to(0,1)
\]
such that $\varepsilon_j$ and
\[
    u_j:=\varepsilon_jH_j
\]
are globally Lipschitz. Define
\[
    a_j(x)
    :=
    \frac{\varepsilon_j(x)}{1+|u_j(x)|}.
\]
The scalar map $r\mapsto(1+|r|)^{-1}$ is bounded and globally Lipschitz. Since $\varepsilon_j$ is also bounded and globally Lipschitz, $a_j$ is bounded and globally Lipschitz, and
\[
    0<a_j<1.
\]
Moreover,
\[
    a_jH_j
    =
    \frac{u_j}{1+|u_j|}.
\]
Because $r\mapsto r/(1+|r|)$ is bounded and globally Lipschitz, $a_jH_j$ is bounded and globally Lipschitz as well.
Set
\[
    a_0(x):=\prod_{j=1}^m a_j(x).
\]
A finite product of bounded globally Lipschitz functions is bounded and globally Lipschitz. For every coordinate $j$,
\[
    a_0H_j
    =
    (a_jH_j)\prod_{k\ne j}a_k,
\]
which is again bounded and globally Lipschitz. Hence $a_0H$ is bounded and globally Lipschitz.

Finally, define
\[
    \beta(x):=\prod_{\gamma\in\Gamma}a_0(\gamma x).
\]
Since $X$ is $\Gamma$-invariant, each $a_0\circ\gamma$ is well defined
on $X$. Since the representation $\rho_X$ is orthogonal,
\[
    \Lip(a_0\circ\gamma)\le\Lip(a_0),
    \qquad
    \|a_0\circ\gamma\|_\infty\le\|a_0\|_\infty.
\]
The finiteness of $\Gamma$ therefore implies that $\beta$ is bounded
and globally Lipschitz.

For every $\gamma_0\in\Gamma$,
\[
        \beta(\gamma_0x) =
        \prod_{\gamma\in\Gamma}a_0(\gamma\gamma_0x) =
        \prod_{\gamma\in\Gamma}a_0(\gamma x)
        =\beta(x),
\]
where the second equality follows because right multiplication by $\gamma_0$ permutes the elements of $\Gamma$. Thus $\beta$ is $\Gamma$-invariant. Furthermore,
\[
    \beta(x)H(x)
    =
    a_0(x)H(x)
    \prod_{\substack{\gamma\in\Gamma\\\gamma\ne e}}
    a_0(\gamma x),
\]
which is a finite product of bounded globally Lipschitz factors. Thus $\beta H$ is bounded and globally Lipschitz. Moreover, the equivariance of $\beta H$ follows from the invariance of $\beta$ and equivariance of $H$. This proves the lemma.
\end{proof}

\subsection{Proof of the main result}

\begin{proof}[Proof of Theorem~\ref{thm:joint-multi-attractor}]
We construct $T$ and verify the four asserted properties.

\medskip
\noindent
\textbf{Step 1: The set $B$ is closed and permutation-invariant.}

For each branch, the locus on which it is locally Lipschitz is relatively open. Therefore $D$ is relatively closed in $\mathcal G$. The set of points having a stable collapse pattern is also relatively open by definition, so $U$ is relatively closed. It follows that
\[
    (D\cup U)\times\mathcal Y_n
\]
is relatively closed in $Z$. The set $\overline{\boldsymbol\Sigma^\circ}^{\,Z}$ is closed by definition. Hence $B$ is closed in $Z$.

In addition, Lemma \ref{lem:bad-set-invariance} provides the permutation-invariance of the bad set $B$ and its complement.

\medskip
\noindent
\textbf{Step 2: The nearest-solution selector is locally Lipschitz on $Z\setminus B$.}

Fix $(G_0,y_0)\in\Omega$. Then $G_0\notin D\cup U$, the collapse
pattern is stable near $G_0$, and $y_0$ has a unique nearest distinct
branch value. Let $C_\star$ be the stable cluster realizing this nearest
value, and choose any representative $i_\star\in C_\star$. Branches in
$C_\star$ agree identically throughout some neighborhood of $G_0$.

If $C_\star$ is the only stable cluster, then all branch values agree
throughout that neighborhood, and hence
\[
    P(G,y)=f_{i_\star}(G)
\]
there; the local Lipschitz conclusion follows immediately.

Otherwise, choose one representative $i_\alpha$ from every other stable
cluster. Since the nearest branch value at $(G_0,y_0)$ is unique, the
margin
\[
    \delta
    :=
    \min_{\alpha\ne\star}
    \left(
        \|y_0-f_{i_\alpha}(G_0)\|
        -
        \|y_0-f_{i_\star}(G_0)\|
    \right)
\]
is strictly positive. All regular branches are continuous near $G_0$. After shrinking the neighborhood of $(G_0,y_0)$ if necessary, $C_\star$ remains the unique nearest cluster throughout that neighborhood. Hence
\[
    P(G,y)=f_{i_\star}(G)
\]
there. Because $G_0\notin D$, the representative branch $f_{i_\star}$ is locally Lipschitz near $G_0$. Thus $P$ is jointly locally Lipschitz near $(G_0,y_0)$. Since $(G_0,y_0)$ was arbitrary,
\[
    P:\Omega\to\mathcal Y_n
\]
is jointly locally Lipschitz. This is the same local mechanism used in \cite[Lemma~A.12]{jore2026bifurcation}, but no boundedness or compactness of the graph domain is required here.

\medskip
\noindent
\textbf{Step 3: Construction of a damping factor that removes the singularities.}

Since $B$ is relatively closed in $Z$, for every $z\in Z\setminus B$ we have $\dist(z,B)>0$; otherwise $z\in\overline{B}^{\,Z}=B$. Hence it holds that
\[
    d(z)>0\quad\text{for }z\in\Omega,
    \qquad
    d(z)=0\quad\text{for }z\in B.
\]
Moreover, $d$ is bounded, $1$-Lipschitz, and permutation-invariant. 
Since $d$ is $1$-Lipschitz and strictly positive on $\Omega$, it is locally bounded away from zero there. Hence $d^{-1}$ is locally Lipschitz on $\Omega$. Together with the local Lipschitzness of $P$, this implies that $H=(P-y)/d$ is locally Lipschitz.
In addition, thanks to Lemma \ref{lem:selector-equivariance}, $H$ is permutation equivariant.

Apply Lemma \ref{lem:equivariant-flattening} on $\Omega$ and permutation action group. There exists a permutation-invariant function
\[
    \beta:\Omega\to(0,1)
\]
such that
\[
    A:=\beta H
\]
is bounded and globally Lipschitz on $\Omega$. Define
\[
    q(G,y)
    :=
    \begin{cases}
        d(G,y)A(G,y),&(G,y)\in\Omega,\\
        0,&(G,y)\in B.
    \end{cases}
\]
For $(G,y)\in\Omega$,
\[
    q(G,y)
    =
    \beta(G,y)\bigl(P(G,y)-y\bigr).
\]
Finally, define
\[
    T_G(y):=y+q(G,y).
\]
Equivalently,
\begin{equation}
    T_G(y)
    =
    \begin{cases}
        (1-\beta(G,y))y+\beta(G,y)P_G(y),
        &(G,y)\notin B,\\[1mm]
        y,&(G,y)\in B.
    \end{cases}
    \label{eq:ideal-operator-construction}
\end{equation}
The freezing property on $B$ is immediate from this definition.

\medskip
\noindent
\textbf{Step 4: Joint global Lipschitz continuity of $T$.}

Let
\[
    \|A\|_\infty\le M_A,
    \qquad
    \Lip(A)\le L_A.
\]
For any $z,z'\in\Omega$,
\[
    \begin{aligned}
        \|q(z)-q(z')\|
        &=
        \|d(z)A(z)-d(z')A(z')\|\\
        &\le
        |d(z)-d(z')|\,\|A(z)\|
        +
        d(z')\|A(z)-A(z')\|\\
        &\le
        (M_A+L_A)\|z-z'\|,
    \end{aligned}
\]
where we used $0\le d\le1$ (This follows from Definition \ref{def:selector-displacement}) and the $1$-Lipschitz property of $d$.

If $z\in\Omega$ and $z'\in B$, then
\[
        \|q(z)-q(z')\|
        =\|q(z)\|
        =d(z)\|A(z)\|
        \le M_A\dist(z,B)
        \le M_A\|z-z'\|.
\]
If both points lie in $B$, the difference is zero. Thus $q$ is globally Lipschitz on all of $Z$. Since the coordinate projection $(G,y)\mapsto y$ is $1$-Lipschitz and
\[
    T_G(y)=y+q(G,y),
\]
$T$ is jointly globally Lipschitz. In particular, one may take
\[
    L_T\le 1+M_A+L_A.
\]
\medskip
\noindent
\textbf{Step 5: Permutation equivariance.}

The functions $d$ and $\beta$ are invariant, whereas $H$ is equivariant. Therefore
\[
    A(\pi G,\pi y)=\pi A(G,y)
\]
on $\Omega$. Since $B$ is invariant, it follows that
\[
    q(\pi G,\pi y)=\pi q(G,y)
\]
on all of $Z$. Consequently,
\[
T_{\pi G}(\pi y)=\pi y+q(\pi G,\pi y)=\pi y+\pi q(G,y)=\pi T_G(y).
\]

\medskip
\noindent
\textbf{Step 6: On a regular stable graph, taking the closure does not enlarge the switching section.}

Fix $G\in\mathcal G_\star$. We claim that
\begin{equation}
    \{y:(G,y)\in B\}=\Sigma_G^\circ.
    \label{eq:ideal-bad-set-section}
\end{equation}
The inclusion ``$\supseteq$'' follows immediately from the definition of the closure. For the reverse inclusion, it suffices to show that any point in the $G$-section of $\overline{\boldsymbol\Sigma^\circ}^{\,Z}$ belongs to $\Sigma_G^\circ$. Suppose that
\[
    (G_k,y_k)\in\boldsymbol\Sigma^\circ,
    \qquad
    (G_k,y_k)\to(G,y).
\]
For every $k$, choose two labels whose distinct branch values attain the common minimum distance from $y_k$. Because there are only finitely many pairs of labels, after passing to a subsequence we may assume that these labels are fixed, say $i\ne j$.

Since $G$ is a stable-collapse point, if $i$ and $j$ belonged to the same stable cluster, then the two branches would agree identically throughout a neighborhood of $G$. For all sufficiently large $k$, this would contradict the fact that they represent distinct branch values at $G_k$. Therefore $i$ and $j$ belong to different stable clusters, and in particular
\[
    f_i(G)\ne f_j(G).
\]
Because $G\notin D$, every branch is continuous at $G$. Passing to the limit in the equal-minimum relations gives
\[
    \|y-f_i(G)\|
    =
    \|y-f_j(G)\|
    =
    \min_\ell\|y-f_\ell(G)\|.
\]
Thus $y\in\Sigma_G^\circ$, proving the reverse inclusion. 

\medskip
\noindent
\textbf{Step 7: Invariance of the strict Voronoi segment.}

Fix
\[
    G\in\mathcal G_\star,
    \qquad
    y_0\notin\Sigma_G^\circ,
\]
and let
\[
    s=P_G(y_0).
\]
For any other distinct solution $r\in S(G)\setminus\{s\}$,
\[
    \|y_0-s\|<\|y_0-r\|.
\]
For $a\in[0,1]$, define
\[
    z_a=(1-a)y_0+as.
\]
Then
\[
    \|z_a-s\|=(1-a)\|y_0-s\|.
\]
On the other hand, the reverse triangle inequality yields
\[
    \begin{aligned}
        \|z_a-r\|
        &\ge
        \|y_0-r\|-\|z_a-y_0\|\\
        &=
        \|y_0-r\|-a\|y_0-s\|\\
        &>
        (1-a)\|y_0-s\|\\
        &=
        \|z_a-s\|.
    \end{aligned}
\]
Therefore
\[
    P_G(z_a)=s
    \qquad\text{and}\qquad
    z_a\notin\Sigma_G^\circ
\]
for every $a\in[0,1]$. Hence
\[
    P_G(z)=s
    \qquad
    \text{for every }z\in[y_0,s],
\]
and Step~6 implies
\[
    \{G\}\times[y_0,s]\subseteq\Omega.
\]

\medskip
\noindent
\textbf{Step 8: Convergence.}

The set $\{G\}\times[y_0,s]$ is a compact subset of $\Omega$. Since $\beta$ is continuous and strictly positive on $\Omega$,
\[
    c(G,y_0)
    :=
    \min_{z\in[y_0,s]}\beta(G,z)
    >0.
\]
Furthermore, $0<\beta<1$. If $y_t\in[y_0,s]$, then Step~7 gives $P_G(y_t)=s$, and therefore
\[
y_{t+1}=(1-\beta(G,y_t))y_t+\beta(G,y_t)s\in[y_t,s]\subseteq[y_0,s].
\]
By induction, every iterate remains on $[y_0,s]$. Moreover,
\begin{equation}
    y_{t+1}-s
    =
    \bigl(1-\beta(G,y_t)\bigr)(y_t-s).
    \label{eq:ideal-radial-recurrence}
\end{equation}
It follows that
\[
    \|y_{t+1}-s\|
    \le
    \bigl(1-c(G,y_0)\bigr)\|y_t-s\|.
\]
Iterating this estimate gives
\[
    \|y_t-s\|
    \le
    \bigl(1-c(G,y_0)\bigr)^t\|y_0-s\|.
\]
Hence $y_t\to s$.

\medskip
\noindent
\textbf{Step 9: Almost-everywhere convergence and full support.}

Fix $G\in\mathcal G_\star$. For any two distinct solutions $s,r\in S(G)$, the equidistance set
\[
    \{y:\|y-s\|=\|y-r\|\}
\]
is an affine hyperplane, because the defining equality is equivalent to
\[
    2\langle y,r-s\rangle
    =
    \|r\|^2-\|s\|^2.
\]
Therefore
\[
    \Sigma_G^\circ
    \subseteq
    \bigcup_{\substack{s,r\in S(G)\\s\ne r}}
    \{y:\|y-s\|=\|y-r\|\}
\]
is contained in a finite union of Lebesgue-null affine hyperplanes. Thus
\[
    \mathcal L^m(\Sigma_G^\circ)=0.
\]
If $\mu\ll\mathcal L^m$, then
\[
    \mu(\Sigma_G^\circ)=0.
\]
Step~8 therefore shows that the iteration converges to a point of $S(G)$ for $\mu$-almost every initialization.

Finally, if every strict Voronoi basin has positive $\mu$-mass, then for every $s\in S(G)$,
\[
    \Pr(y_\infty=s\mid G)
    =
    \mu(V_s(G))
    >0.
\]
Since $S(G)$ is finite and $y_\infty\in S(G)$ almost surely,
\[
    \supp\mathcal L(y_\infty\mid G)=S(G).
\]
This proves all the conclusions.
\end{proof}

\section{Family-wide asymptotic realization by a recurrent MP-GNN}
\label{app:init-only-family-wide-mpgnn}

We now realize the ideal dynamics of Appendix~\ref{app:ideal} using one
message-passing GNN block that is reused at every recurrent step:
\begin{equation}
  y_{t+1}=\operatorname{GNN}(G,y_t).
  \label{eq:family-init-cell-recurrence}
\end{equation}
We suppress the parameter subscript $\theta$ in this appendix. The random initialization supplies both the initial node addresses and the
choice of solution basin. At later steps, the block receives only the graph
and the current state. Throughout this appendix, write $d_y:=q\geq2$ and
$m=nd_y$, using the graph and output spaces of Appendix~\ref{app:ideal}.

\subsection{Architecture and main result}
\label{sec:family-architecture-result}

\textbf{The recurrent block.}
For one call, write $y^+=\operatorname{GNN}(G,y)$. Choose a finite number
of hidden layers $L$, finite hidden widths, and a continuous encoder
\[
  \operatorname{enc}:\mathbb R^{d_v}\times\mathbb R^{d_y}
  \longrightarrow\mathbb R^{d_0}.
\]
The block has the form
\begin{equation}
  \begin{aligned}
  h_i^{(0)}&=\operatorname{enc}(X_i,y_i),
\\
  m_i^{(\ell)}
  &=\sum_{j=1}^n
    M_\ell\!\left(h_i^{(\ell)},h_j^{(\ell)},A_{ji}\right),
\\
  h_i^{(\ell+1)}
  &=U_\ell\!\left(h_i^{(\ell)},m_i^{(\ell)}\right),
  \qquad 0\leq\ell<L,
\\
  g^{(L)}&=\sum_{j=1}^n R_L\!\left(h_j^{(L)}\right),
\\
  y_i^+&=O\!\left(h_i^{(L)},g^{(L)}\right).
  \end{aligned}
  \label{eq:family-init-mp-initialization}
\end{equation}
Here the aggregation runs over all $n$ senders, including $j=i$,
since the edge tensor $A$ assigns a feature vector to every ordered pair.
The component maps, $\{M_\ell, U_\ell\}_{\ell=0}^{L-1}, R_L, O$, are arbitrary continuous maps between finite-dimensional Euclidean spaces and are shared across nodes. The only global operation is the final invariant sum: no global aggregate is formed or broadcast in the hidden layers. The entire $L$-layer block, including its encoder and readout, is denoted as $\operatorname{GNN}(G,\cdot)$ and is reused with the same parameters at every recurrent time. Hidden representations are recomputed within each call; only $y_t$ is carried to the next call.

\noindent\textbf{Remark (Sparse graphs).}
For an edge set $E(G)\subseteq[n]\times[n]$, encode
$A_{ji}=(e_{ji},a_{ji})$, where
$e_{ji}=\mathbf 1_{\{(j,i)\in E(G)\}}$ and $a_{ji}=0$
whenever $e_{ji}=0$. The class of such tensors is closed under
relabeling, so Appendix~\ref{app:ideal} applies to any
permutation-invariant domain of these graphs.
The block constructed in
Lemma~\ref{lem:family-wide-realization-initialization} has
$M_0=c_i\otimes c_j\otimes A_{ji}$, which vanishes on every
absent pair, even outside the compact realization family.
Hence its all-pairs message sum agrees exactly with the neighbor
sum over $(j,i)\in E(G)$. On the compact realization family,
$\mathcal R_{\rm edge}$ records $(1,a_{ji})$ in present-edge
slots and zero in absent-edge slots. Consequently, the block in
Theorem~\ref{thm:family-wide-convergent-initialization} can use
neighbor-only hidden message passing followed by the
readout in \eqref{eq:family-init-mp-initialization}.

It's straightforward that reindexing the message and readout sums gives the equivariance:
\begin{equation}
  \operatorname{GNN}(\pi G,\pi y)
  =\pi\operatorname{GNN}(G,y).
  \label{eq:family-seed-free-equivariance}
\end{equation}

\textbf{Facts about the ideal update.}
Fix the operator $T$ constructed in Theorem~\ref{thm:joint-multi-attractor}. Let $\Omega$ and $P$ be as in Definitions~\ref{def:joint-bad-set} and~\ref{def:selector-displacement}. The proof in Appendix~\ref{app:ideal} gives
\begin{equation}
  T_G(y)=(1-\beta(G,y))y+\beta(G,y)P(G,y),
  \qquad 0<\beta(G,y)<1,
  \qquad (G,y)\in\Omega,
  \label{eq:family-radial-ideal}
\end{equation}
where $\beta$ is continuous. The selector is continuous on $\Omega$ and
is constant along the segment from $y$ to $P(G,y)$. We will also use the
bound
\begin{equation}
  \|T_G(y)-y\|\leq\operatorname{dist}(y,S(G))
  \qquad(G\in\mathcal G_\star,\ y\in\mathcal Y_n).
  \label{eq:family-ideal-displacement-bound}
\end{equation}
Indeed, off the switching set this follows from
\eqref{eq:family-radial-ideal}; on the switching set $T_G(y)=y$.

\begin{theorem}[Family-wide asymptotic realization from random initialization]
\label{thm:family-wide-convergent-initialization}
Assume $d_y\geq2$. Let $(G,Y_0)$ have a tight Borel probability law
$\mathbb P$ on $\mathcal G\times\mathcal Y_n$, and let $\mu_G$ be a
regular conditional distribution of $Y_0$ given $G$.
Such a Borel kernel exists by disintegrating the joint law on the
ambient Euclidean product and restricting the graph variable to
$\mathcal G$. Assume:
\begin{enumerate}[leftmargin=5mm]
\item $G\in\mathcal G_\star$ almost surely;
\item conditionally on $G$, the law $\mu_G$ of $Y_0$ satisfies
      $\mu_G\ll\mathcal L^m$ almost surely;
\item for almost every $G$ and every $s\in S(G)$,
\[
  \mu_G(V_s(G))>0.
\]
\end{enumerate}
All solution-dependent probability events are understood on the
full-measure set $G\in\mathcal G_\star$.
Fix a limit tolerance $\varepsilon>0$, an operator tolerance $\rho>0$,
and a probability loss $\delta\in(0,1)$. Then there exist a compact
permutation-invariant event
$E_{\varepsilon,\rho,\delta}\subseteq\mathcal G\times\mathcal Y_n$
with
\begin{equation}
  \mathbb P(E_{\varepsilon,\rho,\delta})\geq1-\delta
  \label{eq:family-init-probability}
\end{equation}
and one deterministic, autonomous, weight-tied MP-GNN block such that, for
every $(G,y_0)\in E_{\varepsilon,\rho,\delta}$, the trajectory
\[
  \widehat y_0=y_0,
  \qquad
  \widehat y_{t+1}=\operatorname{GNN}(G,\widehat y_t),
  \qquad t\geq0,
\]
converges to a limit $\widehat y_\infty$ satisfying
\begin{equation}
  \begin{aligned}
  \operatorname{dist}(\widehat y_\infty,S(G))&<\varepsilon,
\\
  \sup_{t\geq0}
  \left\|\operatorname{GNN}(G,\widehat y_t)-T_G(\widehat y_t)\right\|
  &<\rho.
  \end{aligned}
  \label{eq:family-init-complete-error}
\end{equation}
The block receives only $(G,\widehat y_t)$ at every recurrent time.

The graph projection
\[
  \mathcal K_{\varepsilon,\rho,\delta}
  :=\operatorname{proj}_{\mathcal G}E_{\varepsilon,\rho,\delta}
\]
is compact, and the same fixed collection of continuous component maps
works for every graph in this projection on its corresponding retained
initialization fiber. The block is permutation equivariant as in
\eqref{eq:family-seed-free-equivariance}. In particular,
\[
  \mathbb P\!\left(
    \widehat y_t\text{ converges and }
    \operatorname{dist}(\widehat y_\infty,S(G))<\varepsilon
  \right)\geq1-\delta.
\]
Moreover, the good event may be chosen so that every labeled branch has
positive joint capture probability:
\begin{equation}
  \mathbb P\!\left(
    Y_0\in V_{f_a(G)}(G),\quad
    \widehat y_t\text{ converges to }\widehat y_\infty,\quad
    \|\widehat y_\infty-f_a(G)\|<\varepsilon
  \right)>0,
  \qquad a=1,\ldots,M.
  \label{eq:family-init-joint-branch-capture}
\end{equation}
Coincident labels refer to the same distinct target. The learned limit is
required to be $\varepsilon$-close to the solution set; it need not belong
exactly to $S(G)$.
\end{theorem}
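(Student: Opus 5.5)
The plan is to compare the learned trajectory with the ideal trajectory $z_t=T_G^t(y_0)$ of Theorem~\ref{thm:joint-multi-attractor}, and to let the learned block approximate $T_G$ only on a compact, permutation-invariant set. That set collects the states visited for a finite number of steps $N$. After step $N$, a thresholding device freezes the learned iteration, so it converges in finitely many further moves.

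\textbf{Step 1 (compact good event).} By tightness and assumptions (1)--(2), pick a compact $K\subseteq\mathcal G_\star\times\mathcal Y_n$ with $\mathbb P(K)\ge1-\delta/2$. Shrink it so that each $y_0$ lies off $\Sigma_G^\circ$ with a uniform margin. Symmetrize $K$ over $\mathfrak S_n$, which is harmless because $B$ and $\Omega$ are invariant by Lemma~\ref{lem:bad-set-invariance}. On $K$, the segment $\{G\}\times[y_0,P_G(y_0)]$ stays in $\Omega$ by Step~7 of the proof of Theorem~\ref{thm:joint-multi-attractor}. By continuity of $\beta$ and compactness, the constant $c(G,y_0)$ in \eqref{eq:ideal-geometric-convergence} is bounded below by some $c_K>0$. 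Hence there is a uniform $N$ with $\|z_N-s\|<\varepsilon/4$ on $K$.

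\textbf{Step 2 (evolving node identifiers).} The ideal trajectory is radial:
\[
z_t-s=\lambda_t(y_0-s),\qquad \lambda_t=\prod_{k<t}(1-\beta(G,z_k))\in(0,1].
\]
Rows $i$ and $j$ of $z_t$ therefore collide only if $(s_i-s_j)+\lambda_t(y_{0,i}-y_{0,j})=0$. Since $d_y\ge2$, for fixed $G$ and fixed $s$ this forces $y_{0,i}-y_{0,j}$ onto a line through the origin in $\mathbb R^{d_y}$. That line is a Lebesgue-null set, so by $\mu_G\ll\mathcal L^m$ the event has probability zero. This is where $q\ge2$ enters: for $q=1$ the scalar $\lambda_t$ could hit the root. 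Countably many $t$ and finitely many pairs $(i,j)$ and solutions $s$ give the almost-sure distinctness claimed in the main text. Intersecting $K$ with a compact event on which
\[
\min_{t\le N}\min_{i\ne j}\|z_{t,i}-z_{t,j}\|\ge\eta>0
\]
costs at most another $\delta/4$ of probability. On the final event, the ideal states up to time $N$ lie in a compact invariant set $\mathcal C$ of pairs $(G,y)$ with pairwise distinct rows. A $\eta/3$-neighbourhood of $\mathcal C$ retains distinct rows.

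\textbf{Step 3 (MP-GNN approximation with distinct rows).} On the set in Step~2, I will show that the block form \eqref{eq:family-init-mp-initialization} uniformly approximates any continuous equivariant map; the intended application is to $q(G,y)=T_G(y)-y$. The route is a standard identifier argument. The encoder lifts $y_i$ to features $c_i$ that separate nodes. The first layer forms $c_i\otimes c_j\otimes A_{ji}$, so each node aggregates an injective, Stone--Weierstrass-dense encoding of the whole relabeled graph relative to itself. Continuous $U_\ell$ and $O$ then output an approximation of row $i$ of the target map, and equivariance \eqref{eq:family-seed-free-equivariance} is automatic. This is the main obstacle. It requires an injective continuous encoding of multisets of distinct points, together with compactness, to get uniform approximation with finite width. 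I would first try to reuse a universality result for distinct node features in the style of \cite{abboud2020surprising}, working on the compact neighbourhood so that the required continuity is uniform.

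\textbf{Step 4 (perturbation and forced convergence).} Let $\widehat q$ be the approximation of $q$ from Step~3 with error $\rho'\ll\min(\rho,\varepsilon)$. Set
\[
\operatorname{GNN}(G,y)=y+\phi(\|\widehat q\|)\,\widehat q,
\]
where $\phi$ is a continuous soft threshold that vanishes below $\theta$ and equals $1$ above $2\theta$. The factor $\phi(\|\widehat q\|)$ is invariant, so equivariance is kept, and it can be realized inside $O$ using the global sum. By Lipschitz continuity of $T$ and Gronwall-type accumulation over $N$ steps, $\widehat y_t$ stays within $\eta/3$ of $z_t$ for $t\le N$ when $\rho'$ is small. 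After that, $\|\widehat y_t-s\|$ decreases: on the ray toward $s$, $q$ points at $s$ with magnitude comparable to $\beta\|y-s\|$. So every active step moves by at least $\theta$ while the distance to $s$ shrinks, and only finitely many active steps can occur before $\|\widehat q\|<\theta$ and the iteration freezes. The limit then exists, and by \eqref{eq:family-ideal-displacement-bound} together with the choice $\theta,\rho'\ll\varepsilon$ it lies within $\varepsilon$ of $s$. Keeping the post-$N$ states inside the region where Step~3 applies requires enlarging the compact set to a tube around $[z_N,s]$ where rows may coincide. To handle this, I would arrange that for $t>N$ the block only needs to approximate $q$ to accuracy $\rho$, which is trivially achieved once $\|q\|<\rho/2$ by the thresholded zero update.

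\textbf{Step 5 (branch capture).} For each $a$, the set
\[
\{(G,y_0):y_0\in V_{f_a(G)}(G)\}\cap E
\]
has positive probability by assumption (3), provided $E$ is chosen to keep a positive-mass piece of each basin. On that piece the limit is within $\varepsilon$ of $f_a(G)$, which gives \eqref{eq:family-init-joint-branch-capture}.
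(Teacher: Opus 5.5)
Your Steps~1--2 follow the paper: compact localization inside $\Omega$, a uniform rate from $\min\beta>0$, and the collinearity argument for row separation. Step~3 is plausible. With distinct rows, polynomial-moment codes already make the one-layer record injective, and the compact-quotient plus Tietze argument then gives exact, not just approximate, realization. The genuine gap is in Step~4: \emph{nothing keeps the learned trajectory inside the set where the block is controlled}.

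\begin{itemize}
\item Step~3 controls $\widehat q$ only on a compact set of states with uniformly distinct rows, essentially an $\eta/3$-neighbourhood of $z_0,\ldots,z_N$. Off that set the block outputs whatever the extension gives.
\item Your remedy for late times, that accuracy $\rho$ is ``trivially achieved once $\|q\|<\rho/2$ by the thresholded zero update'', is circular. The update vanishes only where $\|\widehat q\|<\theta$, and $\|\widehat q\|$ is known only where Step~3 applies.
\item Near $s$ the rows can coincide; rows $2$ and $3$ of $f_1(G)$ on an equal-weight triangle are an example. So the tube around $[z_N,s]$ is not covered by Step~3, and there the block may push the state anywhere. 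Neither convergence nor the $\rho$-bound follows.
\item Your ordering makes this unavoidable. You chose $N$ from $\varepsilon/4$ but $\theta\ll\varepsilon$, so the trajectory is generally still moving after time $N$.
\item The pre-$N$ analysis has the same defect. Once $\phi<1$, the learned step differs from $T_G$ by up to $2\theta+\rho'$. Once frozen, $\widehat y_t$ stays put while $z_t$ moves on. So the Gronwall claim that $\widehat y_t$ stays within $\eta/3$ of $z_t$ for all $t\le N$ fails whenever the freeze comes early.
\item Steps in the band $\theta\le\|\widehat q\|\le2\theta$ can be arbitrarily short, so ``finitely many active steps'' does not follow from a soft threshold.
\end{itemize}

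The paper avoids tracking error by realizing a pasted map \emph{exactly} on a compact set that provably contains every learned state.
\begin{itemize}
\item It picks a switching radius $r$ whose level sets $\{d_t=r\}$ are null for every $t$. It then discards a small-probability set so that the ideal distances avoid $(r-\zeta,r+\zeta)$ up to a uniform horizon.
\item The update is $T_G$ on transient states (distance at least $r+\zeta$). On terminal segments (distance below $r$) it is $\Psi(y)=b(y)+\tfrac12(y-b(y))$, where $b$ sends each node to the centre of its box in one fixed, positively separated grid.
\item The two pieces live on disjoint compact sets, so the pasted map is continuous and equivariant.
\item Each node stays in its box, so addressing persists up to and including the limit $b(\widehat y_h)$. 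The exact-realization lemma then gives a block that reproduces the trajectory with no drift.
\end{itemize}

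Your freeze idea can be repaired, but only with analogous bookkeeping.
\begin{itemize}
\item Fix $\theta$ before the horizon.
\item Take the controlled region to be a neighbourhood of truncated ideal segments $\{s+a(y_0-s):a\ge\underline a\}$. Choose $\underline a>0$ from $\theta$, $R_0$ and $\max\beta<1$ so that no state reached before freezing is closer to $s$. After a small-probability removal, your own collinearity argument gives uniformly separated rows there.
\item Bound the total drift by $\rho'\sum_t\phi_t\le\rho'R_0/(\theta-2\rho')$. This uses the per-step decrease $\|\widehat y_{t+1}-s\|\le\|\widehat y_t-s\|-\phi_t(\theta-2\rho')$.
\end{itemize}

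Finally, for branch capture you must adjoin compact positive-mass pieces $C_a$ of every basin and keep each later removal below $\tfrac12\mathbb P(C_a)$. The condition $\mathbb P(K)\ge1-\delta/2$ alone does not ensure this.
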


The probability law need not be permutation invariant. Equivariance is a
property of the block, and the retained event is made invariant by its
construction. The branch-capture guarantee is joint over graphs and
initializations; it does not assert positive retained mass in every basin
of every individual graph.

\subsection{Exact realization using the current node states}
\label{sec:family-state-realization}

In this subsection, we establish conditions under which a continuous
equivariant operator can be exactly realized by a message-passing GNN
in our setting. We begin with the relevant symmetry-breaking condition.
Without suitable symmetry-breaking features, standard message-passing
GNNs may fail to distinguish graph inputs that the target operator
distinguishes, limiting their ability to represent arbitrary continuous
equivariant operators (see \cite{xu2018powerful}). We therefore introduce a condition under which
the current node states provide distinct node addresses, and prove that
these addresses enable exact realization on a compact family. This
result provides the basis for constructing a message-passing GNN that
follows the ideal dynamics introduced in Appendix~\ref{app:ideal}.

\begin{definition}[Robust state-based node symmetry breaking]
\label{def:family-robust-individualization}
Let $K\subseteq\mathcal G\times\mathcal Y_n$ be compact and
permutation invariant. We say that $K$ admits
\emph{robust state-based node symmetry breaking} if there exist
finitely many compact address regions
$\mathcal A_1,\ldots,\mathcal A_J\subseteq\mathbb R^{d_y}$
and $\eta>0$ such that
\[
  \operatorname{dist}(\mathcal A_j,\mathcal A_k)\geq\eta
  \qquad(j\neq k),
\]
and, for every $(G,y)\in K$, the $n$ node arguments
$y_1,\ldots,y_n$ belong to $n$ different address regions.
\end{definition}

Because there are finitely many compact address regions, pairwise
disjointness already guarantees a positive separation margin;
$\eta$ simply denotes a common margin.
The regions $\mathcal A_1,\ldots,\mathcal A_J$ and their separation margin
$\eta$ are fixed for the whole family $K$, while the assignment of nodes
to regions may vary between inputs. Since the node states
$y_1,\ldots,y_n$ belong to different address regions, each $y_i$ serves
as a unique node identifier within the current input $(G,y)$.

For example, take $d_y=2$ and a fixed two-node graph $G_0$ that is
unchanged by swapping its nodes. Write
$u_\theta=(\cos\theta,\sin\theta)$ and define two opposite arcs:
\[
  \mathcal A_1=\{u_\theta:\theta\in[-\pi/4,\pi/4]\},
  \qquad
  \mathcal A_2=-\mathcal A_1.
\]
These compact regions lie on the right and left sides of the unit
circle, respectively, with
$\operatorname{dist}(\mathcal A_1,\mathcal A_2)=\sqrt{2}$.
The compact permutation-invariant family
\[
  K=\left\{
    (G_0,(u,-u)),\ (G_0,(-u,u)):
    u\in\mathcal A_1
  \right\}
\]
therefore satisfies Definition \ref{def:family-robust-individualization} (i.e., admits robust state-based node symmetry breaking) with $J=2$ and $\eta=\sqrt{2}$:
one node lies in each region, although their assignments can be swapped.

In contrast, the following full-circle family does not satisfy Definition \ref{def:family-robust-individualization}
\[
  K_{\rm circle}=\left\{
    \bigl(G_0,(u_\theta,-u_\theta)\bigr):
    \theta\in[0,2\pi]
  \right\}.
\]
The two node states are still always distance $2$ apart, but each now
ranges over the entire circle. There is no finite collection of compact regions that simultaneously (i) are positively separated,
(ii) cover the entire unit circle, and (iii) place $u_\theta$ and
$-u_\theta$ in different regions for every $\theta$. Indeed, because
the circle is connected, conditions (i) and (ii) force it to lie
entirely in one region, contradicting (iii).

In the remainder of this subsection, we show how continuous equivariant operators can be exactly realized by message-passing GNNs under robust state-based node symmetry breaking. Subsections~\ref{sec:family-separation-boxes} and~\ref{sec:family-asymptotic-proof} then establish this condition for the compact families used in our construction.

\begin{lemma}[Exact MP-GNN realization on a compact family]
\label{lem:family-wide-realization-initialization}
\label{lem:family-wide-realization-persistent}
Let $K\subseteq\mathcal G\times\mathcal Y_n$ be compact, permutation
invariant, and admit robust state-based node symmetry breaking. Fix
integers $u,v\geq0$ with $u+v\geq1$ and, when $v>0$, positive integers
$p_1,\ldots,p_v$. For $\ell=1,\ldots,u$, let
$\varphi_\ell:K\to\mathbb R$ be continuous and permutation invariant.
For $b=1,\ldots,v$, let
$F_b:K\to\mathbb R^{n\times p_b}$ be continuous and permutation
equivariant. Then there exists a MP-GNN, with output width $u+\sum_{b=1}^v p_b$ and
the structural form
\eqref{eq:family-init-mp-initialization}
realizes all these maps simultaneously and exactly on $K$: its output row
at node $i$ is
\[
  \tau(G,y,i):=
  \Bigl(\varphi_1(G,y),\ldots,\varphi_u(G,y),
    [F_1(G,y)]_i,\ldots,[F_v(G,y)]_i\Bigr).
\]
Empty output blocks are omitted. The component maps and all widths are
chosen once for the whole compact family.
\end{lemma}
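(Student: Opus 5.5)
The plan is to convert the robust address regions into continuous one-hot node codes. We then use a single round of message passing plus the final invariant sum to assemble a canonical, address-indexed copy of the whole input $(G,y)$ at every node. The target maps are then read off by continuous decoders obtained from a quotient/Tietze argument.

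\textbf{Step 1 (address codes).} Fix the regions $\mathcal A_1,\ldots,\mathcal A_J$ and margin $\eta$ from Definition~\ref{def:family-robust-individualization}. By Urysohn's lemma, choose continuous $\chi_j:\mathbb R^{d_y}\to[0,1]$ with $\chi_j\equiv1$ on $\mathcal A_j$ and $\chi_j\equiv0$ on the $\eta/3$-neighbourhood of $\bigcup_{k\ne j}\mathcal A_k$. Set $c(y_i):=(\chi_1(y_i),\ldots,\chi_J(y_i))\in\mathbb R^J$. For $(G,y)\in K$, each $c(y_i)$ is exactly a standard basis vector $e_{a(i)}$, and $i\mapsto a(i)$ is injective. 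Let the encoder be $h_i^{(0)}=(c_i,X_i,y_i)$.

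\textbf{Step 2 (canonical table).} Take $L=1$ with message
\[
M_0(h_i,h_j,A_{ji})=c_i\otimes c_j\otimes A_{ji},
\]
which vanishes on absent pairs, as in the Remark on sparse graphs. Let $U_0$ append the aggregated message to $h_i^{(0)}$. Node $i$ then holds $e_{a(i)}\otimes(A_{ji})_{a(j)}$, i.e.\ the edge features into $i$ placed in address slots. Next choose
\[
R_1(h_j)=\bigl(c_j\otimes(X_j,y_j),\ h^{(1)}_j\text{'s edge block}\bigr).
\]
Its sum over $j$ is a tensor $\Phi(G,y)\in\mathbb R^N$: a $J$-slot encoding of all node features, states and edges, with zeros in unused slots. $\Phi$ is continuous on $K$ (indeed polynomial in the codes and inputs), and $\Phi(\pi G,\pi y)=\Phi(G,y)$. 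Conversely, $\Phi(G,y)=\Phi(G',y')$ on $K$ forces $(G',y')=\pi(G,y)$ for the permutation matching equal addresses. The occupied slots are readable from the nonzero pattern of the slot indicator, since we include $c_j$ itself in $R_1$.

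\textbf{Step 3 (decoders).} Define $\Psi:=\{(\Phi(G,y),c_i):(G,y)\in K,\ i\in[n]\}$, a compact subset of $\mathbb R^{N+J}$. On $\Psi$, set
\[
\Theta(\Phi(G,y),c_i):=\tau(G,y,i).
\]
This is well defined: if $(\Phi(G,y),c_i)=(\Phi(G',y'),c'_{i'})$, then $(G',y')=\pi(G,y)$ with $\pi(i)=i'$ (equal addresses). Invariance of $\varphi_\ell$ and equivariance of $F_b$ then give $\tau(G',y',i')=\tau(G,y,i)$. Continuity of $\Theta$ follows from compactness. The map $K\times[n]\to\Psi$ is a continuous surjection from a compact space onto a Hausdorff space, hence a quotient map, and $\tau$ is continuous and constant on its fibres. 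Extend $\Theta$ to $\mathbb R^{N+J}$ by Tietze, coordinatewise, and set $O(h_i^{(1)},g^{(1)}):=\Theta(g^{(1)},c_i)$. The block then outputs exactly $\tau(G,y,i)$ on $K$, with all widths fixed once for the whole family.

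\textbf{Main obstacle.} The delicate point is the injectivity-modulo-permutation of $\Phi$ together with the well-definedness of $\Theta$. This is exactly where distinct addresses are needed: without them two nodes could share a slot and be summed together, and $\Phi$ would not identify inputs only up to relabeling. Everything else, namely continuity and the extension, is routine once compactness of $K$ and the positive margin $\eta$, which make the codes exactly one-hot, are in place.
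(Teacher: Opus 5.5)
Your proposal is correct and follows essentially the same route as the paper. Both build one-hot address codes by continuously extending a locally constant map off the separated regions, use a single layer with messages $c_i\otimes c_j\otimes A_{ji}$ followed by the final invariant sum to form an address-indexed record of $(G,y)$, show the decoder on the compact image is continuous via the compact-to-Hausdorff quotient argument, and extend it by Tietze. Just make your displayed $R_1$ actually include $c_j$ (equivalently, the paper's constant-one coordinate in $c_j\otimes(1,X_j,y_j)$), since your injectivity-up-to-relabeling step relies on that occupancy indicator.
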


\begin{proof}
We construct one message-passing layer that records the graph and state
in node-code order, and then define a continuous readout on that record.

\textbf{Step 1: Construct the message-passing representation.}
Fix one collection of compact address regions
$\mathcal A_1,\ldots,\mathcal A_J$ satisfying
Definition~\ref{def:family-robust-individualization} simultaneously
for all $(G,y)\in K$. These regions remain fixed throughout the
construction; only the region containing each node state may vary
with the input. The map that equals the $j$th standard basis vector
$e_j\in\mathbb R^J$ on $\mathcal A_j$ is continuous on their closed
union. The Tietze extension theorem \cite[Theorem~35.1(b)]{munkres2014topology}, applied coordinatewise, gives a continuous map
\[
  c:\mathbb R^{d_y}\longrightarrow\mathbb R^J,
  \qquad c(w)=e_j\quad(w\in\mathcal A_j).
\]
For $(G,y)\in K$, the codes $c_i:=c(y_i)$ are distinct standard basis
vectors. Set $d_0=d_v+d_y+J$ and define
\[
  \operatorname{enc}(X_i,y_i):=(X_i,y_i,c(y_i)).
\]
For the single local layer, take
\[
  \begin{aligned}
  M_0\!\left(h_i^{(0)},h_j^{(0)},A_{ji}\right)
  &:=c_i\otimes c_j\otimes A_{ji},
\\
  U_0\!\left(h_i^{(0)},m_i^{(0)}\right)
  &:=\bigl(c_i,\ c_i\otimes(1,X_i,y_i),\ m_i^{(0)}\bigr).
  \end{aligned}
\]
Thus $h_i^{(1)}=(c_i,N_i,E_i)$, where
\[
  N_i:=c_i\otimes(1,X_i,y_i),\qquad
  E_i:=\sum_{j=1}^n c_i\otimes c_j\otimes A_{ji}.
\]
Set $R_1(c_i,N_i,E_i):=(N_i,E_i)$. The final global sum is
\[
  g^{(1)}=(\mathcal R_{\rm node},\mathcal R_{\rm edge}),\qquad
  \mathcal R_{\rm node}:=\sum_i c_i\otimes(1,X_i,y_i),\qquad
  \mathcal R_{\rm edge}:=\sum_{i,j}c_i\otimes c_j\otimes A_{ji}.
\]
All tensors are vectorized before concatenation. The displayed component
maps are continuous and shared across nodes.

\textbf{Step 2: Recover the graph up to relabeling.}
For a standard basis vector $e_k$, the tensor $e_k\otimes z$ places $z$
in its $k$th block and zero in the remaining blocks. Hence
$\mathcal R_{\rm node}$ records $(1,X_i,y_i)$ in the slot indexed by
$c_i$, and $\mathcal R_{\rm edge}$ records $A_{ji}$ in the slot indexed
by the ordered pair $(c_i,c_j)$.

Note that, the roles of the constant-one coordinates are therefore as follows: The 1 in $(1,X_i,y_i)$ is a node-occupancy indicator. Without it, a node satisfying $X_i=0$ and $y_i=0$ would produce a zero block, indistinguishable from an unused address slot. This ensures that the global tensors reconstruct nodes and edges even when their features are zero; they do not carry solution information or participate in symmetry breaking. 

Suppose $(G,y)$ and $(G',y')$ produce the same two global tensors.
Both inputs have the same number $n$ of nodes, each occupying
a distinct address slot marked by the constant-one coordinate.
Matching these occupied slots therefore defines a unique
permutation $\pi\in\mathfrak S_n$ such that
\[
  c'_{\pi(i)}=c_i,\qquad i\in[n].
\]
Equality of the corresponding blocks then implies
\[
  X'_{\pi(i)}=X_i,\qquad y'_{\pi(i)}=y_i,\qquad
  A'_{\pi(j),\pi(i)}=A_{ji}.
\]
Thus $(G',y')=(\pi G,\pi y)$. If the receiver codes also agree,
$c'_{i'}=c_i$, their distinctness gives $i'=\pi(i)$. Therefore
\[
  \Gamma(G,y,i):=(c_i,\mathcal R_{\rm node},\mathcal R_{\rm edge})
\]
determines the input and its receiving node up to simultaneous relabeling.

\textbf{Step 3: The target is determined by this representation.}
If $\Gamma(G,y,i)=\Gamma(G',y',i')$, Step~2 supplies a permutation
matching both inputs and receivers. Invariance of each $\varphi_\ell$
and equivariance of each $F_b$ imply
\[
  \tau(G,y,i)=\tau(G',y',i').
\]
Consequently, the prescription
\[
  D_0\bigl(\Gamma(G,y,i)\bigr):=\tau(G,y,i)
\]
defines a single-valued map on the reconstruction image
$\mathcal I:=\Gamma(K\times[n])$.

\textbf{Step 4: Construct a continuous output map.}
Both $\Gamma$ and $\tau$ are continuous on the compact space $K\times[n]$,
where $[n]$ has the discrete topology. The induced map $D_0$ is continuous.
For completeness, if $w_k\to w$ in $\mathcal I$, choose preimages
$z_k\in K\times[n]$. Every subsequence of these preimages has a further
convergent subsequence, say $z_{k_j}\to z$. Continuity gives
$\Gamma(z)=w$ and
\[
  D_0(w_{k_j})=\tau(z_{k_j})\longrightarrow\tau(z)=D_0(w).
\]
If $D_0(w_k)$ did not converge to $D_0(w)$, a subsequence staying a
positive distance away would contradict this argument. Thus $D_0$ is
continuous. This is the compact-domain quotient argument in explicit form.

The image $\mathcal I$ is compact and hence closed in its Euclidean
ambient space. Coordinatewise Tietze extension gives a continuous ambient
map $\widetilde D$ agreeing with $D_0$ on $\mathcal I$. Define
\[
  O\bigl((c_i,N_i,E_i),g^{(1)}\bigr)
  :=\widetilde D(c_i,\mathcal R_{\rm node},\mathcal R_{\rm edge}).
\]
On $K$, this output equals $\tau(G,y,i)$, as required.
\end{proof}

Lemma~\ref{lem:family-wide-realization-persistent} shows that, once robust node-state symmetry breaking is available, any prescribed finite collection of continuous permutation-invariant scalar maps and continuous permutation-equivariant node maps can be realized simultaneously and exactly on $K$.  This proof differs from the Stone--Weierstrass density arguments commonly used in GNN universality theory, e.g., \cite{keriven2019universal,azizian2021expressive,geerts2022expressiveness}. It constructs a finite-dimensional representation $(c_i,\mathcal R_{\rm node},\mathcal R_{\rm edge})$ that reconstructs $(G,y,i)$ up to simultaneous node relabeling.  The target therefore factors continuously through this representation, and the Tietze extension theorem provides an ambient continuous readout.  Because the present MP-GNN class allows arbitrary continuous component maps, this construction yields exact realization.  By contrast, Stone--Weierstrass uses a function algebra that contains the constants and separates the relevant points or equivalence classes to establish density in in the relevant continuous-function space, yielding uniform approximation rather than exact equality.

\subsection{Node separation and a common collection of boxes}
\label{sec:family-separation-boxes}

To apply Lemma~\ref{lem:family-wide-realization-persistent}, we need the node states to provide robust addresses (symmetry breaking). The next two lemmas establish this condition: the first derives node separation from random initialization, and the second places the separated states in common address boxes, allowing arbitrarily small probability losses.

\begin{lemma}[Finite-time node separation for the ideal dynamics]
\label{lem:init-undamped-row-collision-null}
Assume $d_y\geq2$, $G\in\mathcal G_\star$ almost surely, and
$\mu_G\ll\mathcal L^m$ almost surely. Put $Y_t:=T_G^t(Y_0)$. Then
\begin{equation}
  \mathbb P\!\left(
    (Y_t)_i\neq(Y_t)_j\text{ for all }t\in\mathbb N_0
    \text{ and }i\neq j\,\middle|\,G
  \right)=1
  \quad\text{for almost every }G.
  \label{eq:family-all-finite-times-separated}
\end{equation}
In particular, this event has joint probability one.

Let $E\subseteq\mathcal G_\star\times\mathcal Y_n$ be compact and
permutation invariant, let $H\in\mathbb N_0$, and let $\gamma>0$.
There exist $\eta>0$ and a compact invariant $E'\subseteq E$ such that
\[
  \begin{aligned}
  \mathbb P(E\setminus E')&<\gamma,
\\
  \|[T_G^t(y_0)]_i-[T_G^t(y_0)]_j\|&\geq\eta
  \quad((G,y_0)\in E',\ 0\leq t\leq H,\ i\neq j).
  \end{aligned}
\]
If finitely many measurable subsets $C_a\subseteq E$ have positive
probability, $E'$ may also be chosen with
$\mathbb P(C_a\cap E')>0$ for every $a$.
\end{lemma}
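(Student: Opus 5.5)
The plan is to exploit the explicit radial structure of the ideal operator, \eqref{eq:family-radial-ideal}, to show that every ideal trajectory lies on a line segment. A collision of two rows then forces the difference of the two initial rows into a one-dimensional set, which is null because $d_y\ge2$. The quantitative second part then follows by continuity and compactness.

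\textbf{Step 1: trajectories are segments.} Fix $G\in\mathcal G_\star$. If $y_0\in\Sigma_G^\circ$, then $T_G^t(y_0)=y_0$ for all $t$, so a collision can only happen if $(y_0)_i=(y_0)_j$. This is a codimension-$d_y$ linear condition and hence $\mathcal L^m$-null. Otherwise, Steps 7--8 in the proof of Theorem~\ref{thm:joint-multi-attractor} show that, with $s=P_G(y_0)\in S(G)$,
\[
  T_G^t(y_0)=s+\lambda_t(y_0-s),\qquad \lambda_t=\prod_{k<t}\bigl(1-\beta(G,y_k)\bigr)\in(0,1].
\]
Here $\lambda_t>0$ because $\beta<1$. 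Hence a collision of rows $i\neq j$ at some time $t$ gives
\[
  (y_0)_i-(y_0)_j=-\tfrac{1-\lambda_t}{\lambda_t}\,(s_i-s_j),
\]
with $\lambda_t\in(0,1]$. Therefore $(y_0)_i-(y_0)_j$ lies in the closed ray $\{-c(s_i-s_j):c\ge0\}\subseteq\mathbb R^{d_y}$. When $s_i=s_j$, this ray degenerates to the origin.

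\textbf{Step 2: the collision set is null.} Since $d_y\ge2$, each such ray is $\mathcal L^{d_y}$-null. The map $y_0\mapsto(y_0)_i-(y_0)_j$ is a linear surjection $\mathbb R^m\to\mathbb R^{d_y}$, so by Fubini the preimage of a null set is $\mathcal L^m$-null. Taking the finite union over $s\in S(G)$, over pairs $i\ne j$, and over the switching set $\Sigma_G^\circ$ (null by Theorem~\ref{thm:joint-multi-attractor}(4)) yields a null set $N_G$. Every $y_0\notin N_G$ has separated rows for \emph{all} $t\in\mathbb N_0$ simultaneously. The key point is that the countable union over $t$ costs nothing, because the bad set does not depend on $t$. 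Since $\mu_G\ll\mathcal L^m$ for a.e.\ $G$, this gives \eqref{eq:family-all-finite-times-separated}. Integrating the conditional statement against the graph marginal, using the regular conditional kernel, gives joint probability one. Measurability of the event holds because each $T_G^t$ is jointly continuous, $T$ being globally Lipschitz.

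\textbf{Step 3: uniform margin on a compact set.} Define
\[
  m(G,y_0):=\min_{0\le t\le H}\ \min_{i\ne j}\bigl\|[T_G^t(y_0)]_i-[T_G^t(y_0)]_j\bigr\|.
\]
This is continuous on $E$ because $T$ is jointly Lipschitz. It is permutation invariant, since equivariance of $T$ means relabeling only permutes the pairs $(i,j)$. Put $E'_\eta:=E\cap\{m\ge\eta\}$, which is a closed subset of a compact set and hence compact, and which is invariant. As $\eta\downarrow0$, the sets $E\setminus E'_\eta$ decrease to $E\cap\{m=0\}$, which has probability zero by Step 2. Continuity of measure therefore gives $\mathbb P(E\setminus E'_\eta)<\gamma$ for small $\eta$. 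Likewise $\mathbb P(C_a\cap E'_\eta)\uparrow\mathbb P(C_a\cap\{m>0\})=\mathbb P(C_a)>0$, and since there are finitely many $a$, one sufficiently small $\eta$ works for all of them.

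\textbf{Main obstacle.} I expect the main difficulty to be Step 1: one must carefully justify that $P_G$ stays constant along the whole trajectory, which is Step 7 of the earlier proof, and that $\lambda_t$ never vanishes. Together these reduce the infinite-horizon collision event to a single time-independent null set.
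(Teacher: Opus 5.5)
Your proposal is correct and follows essentially the same route as the paper. Both use the radial form $y_t=s+a_t(y_0-s)$ with $a_t>0$ to reduce every finite-time collision to a single time-independent null set $N_G$. You confine the row difference to a closed ray and use Fubini, where the paper uses $\operatorname{span}\{s_i-s_j\}$ and a rank--nullity count; this difference is immaterial. Your compact restriction via the continuous invariant margin over $0\le t\le H$ and continuity of measure matches the paper's argument.
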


\begin{proof}
For $n=1$ there are no node pairs; take $E'=E$ and any $\eta>0$.
Assume $n\geq2$, and fix a graph $G\in\mathcal G_\star$ for which the
conditional initialization law is absolutely continuous.
The key observation is that a collision forces an initial node state difference onto a fixed line determined by a solution. In two or more dimensions, such an event has zero measure.

For $i<j$, the node-state-difference map
\[
  D_{ij}:\mathcal Y_n\to\mathbb R^{d_y},\qquad D_{ij}(y)=y_i-y_j,
\]
is linear and surjective. For any $s\in S(G)$, define
\[
  N_{s,ij}:=D_{ij}^{-1}\bigl(\operatorname{span}\{s_i-s_j\}\bigr),
\]
which consists of states whose node difference $y_i-y_j$ is a scalar multiple of the target difference $s_i-s_j$.
The span on the right has dimension at most one. 

Write $W=\operatorname{span}\{s_i-s_j\}$. Since $D_{ij}$ is
surjective, the rank--nullity theorem gives
$\dim\ker D_{ij}=m-d_y$. Its restriction to
$N_{s,ij}=D_{ij}^{-1}(W)$ is a surjective linear map onto $W$
with the same kernel. Applying rank--nullity to this restriction yields
\[
  \dim N_{s,ij}
  =\dim\ker D_{ij}+\dim W
  =m-d_y+\dim\operatorname{span}\{s_i-s_j\}.
\]
The span has dimension zero if $s_i=s_j$ and one otherwise.
Thus, since $d_y\geq2$,
\[
  \dim N_{s,ij}\leq m-d_y+1\leq m-1<m.
\]
Thus $N_{s,ij}$ is a proper linear subspace and has zero Lebesgue measure.
Define set
\[
  N_G:=\Sigma_G^\circ\bigcup
       \left(\bigcup_{s\in S(G)}\ \bigcup_{i<j}N_{s,ij}\right)
\]
where $\Sigma_G^\circ\subseteq\mathcal Y_n$ is the switching set for the fixed graph $G$, as defined in Definition~\ref{def:switching}. 
The set $N_G$ collects exceptional initializations excluded to ensure
a unique selected branch from $\{f_1(G),\cdots,f_M(G)\}$ and rule out node-state collisions at
every finite time.
The set $N_G$ is Lebesgue null: the switching set is Lebesgue null by
Theorem~\ref{thm:joint-multi-attractor}, and the remaining union is finite.
Absolute continuity gives $\mu_G(N_G)=0$.

For $y_0\notin N_G$, let $s=P(G,y_0)$. Basin preservation and
\eqref{eq:family-radial-ideal} give
\begin{equation}
  y_t=s+a_t(y_0-s),\qquad
  a_t:=\prod_{k=0}^{t-1}(1-\beta(G,y_k))>0,
  \qquad a_0:=1.
  \label{eq:family-finite-time-radial-factor}
\end{equation}
If $(y_t)_i=(y_t)_j$ at a finite time, then
\[
  0=a_t\bigl((y_0)_i-(y_0)_j\bigr)+(1-a_t)(s_i-s_j).
\]
Since $a_t>0$, the initial row difference belongs to
$\operatorname{span}\{s_i-s_j\}$, contradicting $y_0\notin N_G$.
The same null set works for all finite times. This proves the conditional
claim \eqref{eq:family-all-finite-times-separated}, and integration over $G$ proves the joint claim.

For the compact restriction, define
\[
  \Delta_H(G,y_0)
  :=\min_{\substack{0\leq t\leq H\\ i<j}}
     \|[T_G^t(y_0)]_i-[T_G^t(y_0)]_j\|.
\]
Finite iterates of $T$ are jointly continuous and equivariant, so
$\Delta_H$ is continuous and invariant. The probability-one statement
implies $\mathbb P(E\cap\{\Delta_H=0\})=0$, and hence
\[
  \mathbb P(E\cap\{\Delta_H<\eta\})\longrightarrow0
  \qquad(\eta\downarrow0).
\]
Choose $\eta>0$ so that this loss is below $\gamma$ and, when subsets
$C_a$ are prescribed, below $\tfrac12\min_a\mathbb P(C_a)$.
Then $E':=E\cap\{\Delta_H\geq\eta\}$ is compact, invariant, and has
all the required properties.
\end{proof}

\begin{lemma}[Separated boxes for finitely many state maps]
\label{lem:family-grid-boxes}
\label{lem:family-separated-grid-boxes}
Let $E$ be a compact metric space with a finite Borel measure $\mu$.
Let $\mathcal T$ be a nonempty finite index set, and let
\[
  V_t=(v_{t,1},\ldots,v_{t,n}):E\longrightarrow\mathbb R^{n\times d_y},
  \qquad t\in\mathcal T,
\]
be continuous. (Here $i$ denotes the node index.) Suppose, for some $\eta>0$,
\[
  \|v_{t,i}(x)-v_{t,j}(x)\|\geq\eta
  \qquad(x\in E,\ t\in\mathcal T,\ i\neq j).
\]
For any $\omega,\gamma>0$, there exist a compact $E'\subseteq E$ and
finitely many pairwise positively separated compact axis-aligned boxes
$Q_1,\ldots,Q_J\subset\mathbb R^{d_y}$, with centers
$b_1,\ldots,b_J$, such that:
\begin{enumerate}[leftmargin=5mm]
\item $\mu(E\setminus E')<\gamma$;
\item every $v_{t,i}(x)$ lies in exactly one box, and the
      $n$ items of each $V_t(x)$ lie in different boxes;
\item replacing every item in $V_t(x)$ by its box center changes the full state by
      less than $\omega$:
\[
  \|V_t(x)-b(V_t(x))\|<\omega
  \qquad(x\in E',\ t\in\mathcal T).
\]
\end{enumerate}
If finitely many measurable subsets $C_a\subseteq E$ have positive measure, $E'$ can preserve positive measure in every $C_a$. If $E$ carries a continuous node-permutation action and $V_t(\pi x)=\pi V_t(x)$, then $E'$ can also be chosen permutation invariant.
\end{lemma}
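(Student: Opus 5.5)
The approach is a uniform grid: choose a mesh small relative to $\eta$ and $\omega$, then delete a thin neighborhood of the grid hyperplanes. Fix a mesh $h>0$ with $h\sqrt{d_y}<\min\{\eta/2,\ \omega/\sqrt{n}\}$. For an offset $\theta\in[0,h)^{d_y}$, consider the shifted lattice of half-open cubes $\theta+h(k+[0,1)^{d_y})$, $k\in\mathbb Z^{d_y}$, and shrink each to the closed concentric cube of side $h-2\kappa$ for a small $\kappa>0$. These shrunken cubes are compact, axis-aligned, and pairwise separated by at least $2\kappa$. Each has diameter less than $\eta/2$, so two items at distance at least $\eta$ cannot share a box. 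Replacing each of the $n$ rows by its box center moves it by less than $h\sqrt{d_y}/2$, so the whole state moves by less than $\sqrt n\,h\sqrt{d_y}/2<\omega$. This gives items (2) and (3) for every $x$ whose items all avoid the $\kappa$-slab around the shifted grid hyperplanes.

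By compactness and continuity, $\bigcup_{t,i}v_{t,i}(E)$ is bounded, so only finitely many cubes are ever used; I keep those, which gives $J<\infty$. Define $E'$ as the set of $x\in E$ such that every coordinate of every $v_{t,i}(x)$ is at distance at least $\kappa$ from $\theta_\ell+h\mathbb Z$. This is a finite intersection of preimages of closed sets under continuous maps, hence compact. The remaining task is to choose $\theta$ and $\kappa$ so that $\mu(E\setminus E')<\gamma$.

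The main obstacle is exactly this measure-loss step, since the pushforward of $\mu$ under $v_{t,i}$ may have atoms or mass on hyperplanes. I would handle it by a Fubini averaging over the offset. For each pair $(t,i)$ and coordinate $\ell$, the scalar continuous function $x\mapsto (v_{t,i}(x))_\ell$ lies within $\kappa$ of $\theta_\ell+h\mathbb Z$ for at most a $2\kappa/h$ fraction of $\theta_\ell\in[0,h)$. Integrating over $\theta$ uniform on $[0,h)^{d_y}$ therefore gives
\[
\int \mu\bigl(E\setminus E'_{\theta,\kappa}\bigr)\,\frac{d\theta}{h^{d_y}}\le |\mathcal T|\,n\,d_y\,\frac{2\kappa}{h}\,\mu(E).
\]
Choosing $\kappa$ so that the right side is below $\gamma$, some offset $\theta$ achieves loss below $\gamma$. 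For the sets $C_a$, I apply the same bound with $\kappa$ small enough that the expected loss on each $C_a$ is below $\mu(C_a)/(2\cdot\#\{a\})$. By Markov's inequality and a union bound, together with the condition on $E$, a common $\theta$ then works for all of them, so each $C_a\cap E'$ keeps positive measure.

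For permutation invariance, note that the defining condition of $E'$ is a constraint over all items $v_{t,i}(x)$, $i\in[n]$, and this set of items is preserved by $V_t(\pi x)=\pi V_t(x)$, which only permutes rows. Hence $\pi E'=E'$ automatically, with no symmetrization needed, provided $E$ itself is invariant (as it is in the intended application). The boxes do not depend on $x$ and are the same for all $t\in\mathcal T$, as required.
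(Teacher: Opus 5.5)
Your proof is correct. It shares the paper's skeleton: a single shifted grid with mesh tied to $\eta$ and $\omega$, shrunken closed cubes separated by twice the slab width, and removal of inputs whose node states fall in a thin slab around the grid hyperplanes. Permutation invariance comes, as in the paper, from the fact that the defining condition ranges over all rows.

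Where you differ is in controlling the lost measure. The paper chooses the offset deterministically. For each coordinate it observes that the levels $b$ with $\mu\{(v_{t,i})_s=b\}>0$ form a countable set, and it picks $a_s$ so that no grid hyperplane meets such a level. The boundary event is then $\mu$-null, and a slab width $\tau$ comes from continuity of measure from above as the sets $N_\tau$ decrease to that null event when $\tau\downarrow0$. You instead fix $\kappa$ first and average over the offset. By Tonelli (legitimate, since the bad set is open in $E\times[0,h)^{d_y}$), the mean loss is at most $2|\mathcal T|nd_y\kappa\,\mu(E)/h$, so some offset does at least this well.

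Your route gives an explicit trade-off between the box separation $2\kappa$ and the lost mass, valid for any finite $\mu$, and it shows that most offsets are good. The paper's route avoids product-measure considerations but produces $\tau$ only qualitatively, with no rate.

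For the sets $C_a$, the constants in your Markov-plus-union-bound step do not quite close as written. A mean loss merely below $\gamma$ bounds $\Pr_\theta[\mu(E\setminus E')\ge\gamma]$ only by a number less than $1$, which need not combine with the further failure probability $1/2$ from the $C_a$. Asking for a mean below $\gamma/2$ fixes this. More simply, Markov is not needed: since $C_a\setminus E'\subseteq E\setminus E'$, one offset with $\mu(E\setminus E')<\min\{\gamma,\min_a\mu(C_a)\}$ suffices, and your bound supplies it for small $\kappa$. This is exactly how the paper handles the $C_a$, with an extra factor $\tfrac12$.
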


\begin{proof}
We construct one common grid in the node-state space $\mathbb R^{d_y}$.
First, we choose its mesh small enough to separate different nodes and
control the error from replacing states by cell centers. Second, we
shift the grid so that the set of inputs placing any node state on a
grid boundary has measure zero. Third, we remove inputs whose node
states lie in a sufficiently thin neighborhood of these boundaries,
losing arbitrarily little measure. The remaining node states then lie
in finitely many separated compact boxes.

\textbf{Step 1: Choose a sufficiently small mesh.}
Since $E$ is compact and there are only finitely many continuous maps
$v_{t,i}$, the union of all their ranges is compact and hence bounded.
Choose a grid mesh $\ell$ satisfying
\[
  0<\ell<\min\left\{
    \frac{2\omega}{\sqrt{nd_y}},\frac{\eta}{\sqrt{d_y}}
  \right\}.
\]
For a shift vector $a\in[0,\ell)^{d_y}$, the grid cells are
\[
  a+\ell z+[0,\ell]^{d_y},
  \qquad z\in\mathbb Z^{d_y}.
\]
Each cell has diameter $\sqrt{d_y}\ell<\eta$, so it cannot contain
two node states from the same $V_t(x)$, whose distance is at least
$\eta$. The other bound on $\ell$ will control the center-replacement
error in Step~4.

\textbf{Step 2: Shift the grid to give its boundaries zero measure.}
Fix a coordinate $s\in[d_y]$. We first identify the coordinate values
that occur on a set of inputs of positive measure:
\[
  B_s:=\left\{b\in\mathbb R:
    \mu\{x\in E:(v_{t,i}(x))_s=b\}>0
    \text{ for some }t\in\mathcal T,\ i\in[n]\right\}.
\]
This set is at most countable. Indeed, for any fixed $(t,i)$ and
positive integer $p$, only finitely many values $b$ can satisfy
\[
  \mu\{x\in E:(v_{t,i}(x))_s=b\}\geq\frac1p.
\]
The corresponding level sets are disjoint, and infinitely many such
sets would contradict $\mu(E)<\infty$. Every positive-measure level
has measure at least $1/p$ for some $p$, so there are at most
countably many such levels. Taking the finite union over $(t,i)$
shows that $B_s$ is countable.

The grid boundaries in coordinate $s$ occur at the values
$a_s+k\ell$, $k\in\mathbb Z$. We want none of these values to belong
to $B_s$. A shift that places some $b\in B_s$ on a boundary must
satisfy $a_s=b-k\ell$ for some integer $k$. Thus all such shifts
belong to the countable set
\[
  \{b-k\ell:b\in B_s,\ k\in\mathbb Z\}.
\]
Since the interval $[0,\ell)$ is uncountable, it contains a shift
$a_s$ outside this set. For this choice, every boundary value
$a_s+k\ell$ lies outside $B_s$. By the definition of $B_s$,
\[
  \mu\{x\in E:(v_{t,i}(x))_s=a_s+k\ell\}=0
  \qquad(t\in\mathcal T,\ i\in[n],\ k\in\mathbb Z).
\]

Choose each coordinate shift in this way and fix
$a=(a_1,\ldots,a_{d_y})$ for the entire family. The union of all
grid-boundary hyperplanes is
\[
  \mathcal H
  :=\bigcup_{s=1}^{d_y}\ \bigcup_{k\in\mathbb Z}
    \{v\in\mathbb R^{d_y}:v_s=a_s+k\ell\}.
\]
By the countable union bound, the set of inputs for which any
$v_{t,i}(x)$ lies on $\mathcal H$ has measure zero.

\textbf{Step 3: Remove a thin neighborhood of the boundaries.}
For $0<\tau<\ell/2$, remove the inputs
\[
  N_\tau:=\left\{x\in E:
    \min_{t\in\mathcal T,\,i\in[n]}
    \operatorname{dist}(v_{t,i}(x),\mathcal H)<\tau\right\}.
\]
The set $\mathcal H$ is closed: for each coordinate $s$, the lattice
$a_s+\ell\mathbb Z$ is closed, and there are only finitely many
coordinates. The displayed minimum is continuous, so $N_\tau$ is
relatively open in $E$. As $\tau\downarrow0$, these sets decrease
to the zero-measure event that some node state lies on $\mathcal H$.
Since $\mu$ is finite, continuity of measure from above gives
$\mu(N_\tau)\to0$.

Choose $\tau$ sufficiently small that
\[
  \mu(N_\tau)<\gamma
\]
and, if the subsets $C_a$ are prescribed, also
\[
  \mu(N_\tau)<\tfrac12\min_a\mu(C_a).
\]
Then $E':=E\setminus N_\tau$ is compact, loses less than $\gamma$
measure, and retains positive measure in every prescribed $C_a$.
Node permutations reorder the node states without changing their
internal coordinates. Since every node is tested against the same
fixed grid, equivariance gives
\[
  \min_{t,i}\operatorname{dist}(v_{t,i}(\pi x),\mathcal H)
  =
  \min_{t,i}\operatorname{dist}
    (v_{t,\pi^{-1}(i)}(x),\mathcal H)
  =
  \min_{t,j}\operatorname{dist}(v_{t,j}(x),\mathcal H).
\]
Thus $N_\tau$ and $E'=E\setminus N_\tau$ are permutation invariant,
regardless of whether $\mu$ is invariant.

\textbf{Step 4: Obtain separated boxes and bound the center error.}
Every retained node state lies at distance at least $\tau$ from
every grid boundary, and therefore belongs to one of the shrunken boxes
\[
  Q_z:=a+\ell z+[\tau,\ell-\tau]^{d_y}.
\]
Keep only boxes containing a node state $v_{t,i}(x)$ for some
$x\in E'$, and enumerate them as $Q_1,\ldots,Q_J$.
Only finitely many are needed because all node-state ranges are bounded.

These boxes are compact, and distinct boxes have distance at least
$2\tau>0$. Their diameters satisfy
\[
  \operatorname{diam}(Q_j)
  =\sqrt{d_y}(\ell-2\tau)<\eta.
\]
Consequently, the $n$ node states of each retained $V_t(x)$ belong
to different boxes.

Finally, each node state is within
$\sqrt{d_y}(\ell-2\tau)/2$ of its box center. Summing the squared
errors over the $n$ nodes gives
\[
  \|V_t(x)-b(V_t(x))\|
  \leq\frac{\sqrt{nd_y}}2(\ell-2\tau)<\omega.
\]
This proves all the claims.
\end{proof}

In the application below, $E'$ consists of retained graph--initialization
pairs, and $V_t(G,y_0)=T_G^t(y_0)$ for finitely many times.
The boxes therefore provide one fixed collection of address regions
for all retained graphs, initializations, and these times.
Subsection~\ref{sec:family-asymptotic-proof} then constructs terminal
dynamics that keep each node in its box and contract toward its center,
preserving the addressing condition while ensuring convergence
near the selected solution.

\subsection{Proof of the main result}
\label{sec:family-asymptotic-proof}

Given the preparations, we are now ready to prove Theorem~\ref{thm:family-wide-convergent-initialization}. 

\begin{proof}[Proof of Theorem~\ref{thm:family-wide-convergent-initialization}]
Choose positive probability budgets satisfying
\[
  \delta_{\rm loc}+\delta_{\rm stop}
  +\delta_{\rm sep}+\delta_{\rm grid}<\delta.
\]
These budgets allow us to exclude graph--initialization pairs with
total probability less than $\delta$, while preserving positive
probability for each labeled solution branch. The remaining pairs
form a compact family whose ideal trajectories all approach their
selected solutions within a common finite number of steps.
During these steps, different nodes have uniformly separated states,
which can be placed in small, separated boxes from one fixed
collection shared across all retained inputs.

We then construct an update rule that follows $T_G$ exactly until
the state is sufficiently close to its selected solution. From that
point onward, each node moves halfway toward the center of its
current box at every step. This ensures convergence while keeping
different nodes distinguishable. Switching close to the solution
and using small boxes controls both the final solution error and
the deviation from the ideal update. 
Lemma~\ref{lem:family-wide-realization-persistent} then realizes this
rule with a single fixed GNN block, whose repeated application
generates all retained trajectories.

\textbf{Step 1: Localize the initializations and the ideal trajectories.}
For $G \in \mathcal{G}_{\star}$, the proof of
Theorem~\ref{thm:joint-multi-attractor} identifies the $G$-section of the
joint bad set with $\Sigma_G^\circ$, even after taking the joint closure.
Its switching-set nullity and the first two probability assumptions
therefore give $\mathbb P(\Omega)=1$.
For each label $a$, define
\[
  B_a:=\{(G,y_0)\in\Omega:P(G,y_0)=f_a(G)\}.
\]
These sets are Borel, since $P$ and the regular branches are continuous on
their domains. For $G\in\mathcal G_\star$, the $G$-section of $B_a$
is $V_{f_a(G)}(G)$, and the section is empty otherwise.
Integration of the indicator of $B_a$ against the conditional kernel
is measurable in $G$, so disintegration and the positive-basin
assumption give, with $\mathbb P_G$ denoting the graph marginal,
\[
  \mathbb P(B_a)
  =\int_{\mathcal G_\star}
    \mu_G(V_{f_a(G)}(G))\,d\mathbb P_G(G)>0.
\]
A tight finite Borel measure on a metric space is inner regular by
compact sets: restrict first to a compact set of arbitrarily high
probability and use Borel regularity there. Hence choose compact sets $C_0\subseteq\Omega$ and
$C_a\subseteq B_a$ such that
\[
  \mathbb P(C_0)>1-\delta_{\rm loc},\qquad
  \mathbb P(C_a)>0\quad(a=1,\ldots,M).
\]
Finite symmetrization gives the compact invariant event
\[
  E^0:=\bigcup_{\pi\in\mathfrak S_n}
         \pi\left(C_0\cup\bigcup_{a=1}^M C_a\right)
  \subseteq\Omega,
  \qquad \mathbb P(E^0)>1-\delta_{\rm loc}.
\]
The compact sets $C_a$ are the branch subsets whose mass we preserve.

Write $s(G,y_0):=P(G,y_0)$. The union of the full ideal segments,
\[
  \mathcal C:=\left\{
    \bigl(G,(1-a)y_0+a\,s(G,y_0)\bigr):
    (G,y_0)\in E^0,\ a\in[0,1]\right\},
\]
is a compact subset of $\Omega$: it is the continuous image of
$E^0\times[0,1]$, and Appendix~\ref{app:ideal} shows that each segment
stays in its initial strict basin. Set
\[
  b_*:=\min_{(G,y)\in\mathcal C}\beta(G,y)\in(0,1),
  \qquad
  R_0:=\max_{(G,y_0)\in E^0}\|y_0-s(G,y_0)\|<\infty.
\]
Equation~\eqref{eq:family-radial-ideal} then gives the uniform bound
\begin{equation}
  \|T_G^t(y_0)-s(G,y_0)\|
  \leq(1-b_*)^tR_0
  \qquad((G,y_0)\in E^0,\ t\geq0).
  \label{eq:family-uniform-ideal-convergence}
\end{equation}

\textbf{Step 2: Choose an entrance threshold with a positive margin.}
We use distance to the solution set to decide when to switch from
the ideal update $T_G(\cdot)$ to the final convergent update. By excluding a
small-probability set of graph--initialization pairs, we ensure that
this distance stays uniformly away from the chosen threshold at
every time up to a common finite horizon. The resulting gap separates
the states where the two updates are used, allowing us to combine
them into a continuous rule.

For $x=(G,y_0)\in E^0$, define
\[
  y_t(x):=T_G^t(y_0),\qquad
  d_t(x):=\operatorname{dist}(y_t(x),S(G)).
\]
Recall that $E^0$ is a compact subset of $\Omega$, so its graph
projection is a compact subset of $\mathcal G_\star$. Each branch
$f_a$ is locally Lipschitz on $\mathcal G_\star$ and hence continuous
on this projection. Moreover, $y_0(x)$ is a coordinate projection,
and the recurrence $y_{t+1}(x)=T_G(y_t(x))$ uses the jointly continuous
map $(G,y)\mapsto T_G(y)$. Induction therefore shows that $y_t(x)$
is continuous in $x=(G,y_0)$ for every finite $t$. Consequently,
\[
  d_t(x)=\min_{a=1,\ldots,M}\|y_t(x)-f_a(G)\|
\]
is continuous as the minimum of finitely many continuous functions.

The distance $d_t$ is also permutation invariant. Indeed, equivariance
of $T$ gives $y_t(\pi x)=\pi y_t(x)$, while
$S(\pi G)=\pi S(G)$. Since node permutations preserve Euclidean
distances,
\[
  d_t(\pi x)
  =\operatorname{dist}(\pi y_t(x),\pi S(G))
  =\operatorname{dist}(y_t(x),S(G))
  =d_t(x).
\]
Finally, Appendix~\ref{app:ideal} shows that the ideal trajectory
remains in the strict basin of its selected solution $s(x)$.
Thus $s(x)$ remains the unique nearest solution, and
\[
  d_t(x)=\|y_t(x)-s(x)\|.
\]

Choose the radius before choosing a time horizon. For each fixed
$t\in\mathbb N_0$, there are at most countably many distance values
$b\geq0$ for which
\[
  \mathbb P\{x\in E^0:d_t(x)=b\}>0,
\]
by the same countability argument used in Step~2 of the proof of
Lemma~\ref{lem:family-grid-boxes}. Since there are only countably
many times $t$, the collection of all such values across all finite
times is still countable. The interval
$(0,\min\{\varepsilon,\rho/2\})$ is uncountable, so we can choose
a radius $r$ in this interval outside that collection. This ensures
\[
  \mathbb P\{x\in E^0:d_t(x)=r\}=0
  \qquad\text{for every }t\in\mathbb N_0.
\]
Now choose $H\in\mathbb N_0$ with
\[
  (1-b_*)^H R_0<r/4.
\]
In particular, every trajectory from $E^0$ is within $r/4$ of its selected
solution by time $H$.

For $\zeta>0$, the events
\[
  N_{\rm stop}(\zeta)
  :=\{x\in E^0:|d_t(x)-r|<\zeta
                  \text{ for some }0\leq t\leq H\}
\]
decrease to a null event as $\zeta\downarrow0$. Choose
$0<\zeta<r/2$ such that
\[
  \mathbb P(N_{\rm stop}(\zeta))
  <\min\left\{\delta_{\rm stop},
              \tfrac12\min_a\mathbb P(C_a)\right\}.
\]
The event
\[
  E_{\rm stop}:=E^0\setminus N_{\rm stop}(\zeta)
\]
is compact and permutation invariant, with
\[
  \mathbb P(E_{\rm stop})>1-\delta_{\rm loc}-\delta_{\rm stop},
  \qquad
  \mathbb P(C_a\cap E_{\rm stop})
  >\tfrac12\mathbb P(C_a)>0
  \quad(a=1,\ldots,M).
\]
On $E_{\rm stop}$, every ideal trajectory stays at least $\zeta$
away from the distance threshold $r$ through time $H$:
\[
  |d_t(x)-r|\geq\zeta
  \qquad(x\in E_{\rm stop},\ 0\leq t\leq H).
\]
Thus, at each of these times, the distance to the solution set is
either at most $r-\zeta$ or at least $r+\zeta$.

On $E_{\rm stop}$, define the entrance time
\[
  h(x):=\min\{0\leq t\leq H:d_t(x)\leq r-\zeta\}.
\]
It exists because $d_H<r/4<r-\zeta$. Every retained trajectory satisfies
\begin{equation}
  d_t(x)\geq r+\zeta\quad(0\leq t<h(x)),
  \qquad d_{h(x)}(x)\leq r-\zeta.
  \label{eq:family-entrance-gap}
\end{equation}
Thus it first visits the inner neighborhood of radius $r-\zeta$ after
remaining outside the larger neighborhood of radius $r+\zeta$. For each
$k\in\{0,\ldots,H\}$, its entrance-time level set is
\[
  L_k:=E_{\rm stop}\cap\{d_k\leq r-\zeta\}
       \cap\bigcap_{t<k}\{d_t\geq r+\zeta\}.
\]
Thus $L_k$ consists of graph--initialization pairs, $(G,y_0)$, whose ideal
trajectories first enter the $(r-\zeta)$-neighborhood of the solution
set at time $k$.
This is compact and invariant. When $k=0$, there are no pre-entrance
inequalities. The sets $L_k$ form a finite partition of
$E_{\rm stop}$. The horizon $H$ is a uniform upper bound on this retained
family, whereas $h(x)$ may be smaller and depends on the trajectory.

\textbf{Step 3: Obtain one collection of distinct node-address boxes.}
Apply Lemma~\ref{lem:init-undamped-row-collision-null} to
$E_{\rm stop}$, horizon $H$, loss $\delta_{\rm sep}$, and the subsets
$C_a\cap E_{\rm stop}$. It gives a compact invariant
$E_{\rm sep}\subseteq E_{\rm stop}$ and $\eta>0$ such that
\[
  \mathbb P(E_{\rm stop}\setminus E_{\rm sep})<\delta_{\rm sep},
  \qquad \mathbb P(C_a\cap E_{\rm sep})>0,
\]
and, uniformly over all retained inputs and times up to $H$,
distinct nodes have states at least $\eta$ apart:
\[
  \|[T_G^t(y_0)]_i-[T_G^t(y_0)]_j\|\geq\eta
  \qquad
  \bigl((G,y_0)\in E_{\rm sep},\ 0\leq t\leq H,\ i\neq j\bigr).
\]
Then choose any $0<\omega<\zeta$, apply
Lemma~\ref{lem:family-grid-boxes} on $E_{\rm sep}$ with the restricted
measure $\mathbb P$, tolerance $\omega$, loss $\delta_{\rm grid}$,
subsets $C_a\cap E_{\rm sep}$, and state maps
\[
  V_t(G,y_0):=T_G^t(y_0),\qquad t=0,\ldots,H.
\]
The maps are continuous and equivariant, and the row-separation hypothesis
was just established. The lemma gives boxes $Q_1,\ldots,Q_J$ and a
compact invariant final event
\begin{equation}
  E:=E_{\varepsilon,\rho,\delta}\subseteq E_{\rm sep},\qquad
  \mathbb P(E)>1-\delta,\qquad
  \mathbb P(C_a\cap E)>0\quad(a=1,\ldots,M).
  \label{eq:family-final-retained-event}
\end{equation}
The probability bound follows by adding the four losses. For every
$x\in E$ and $t\in\{0,\ldots,H\}$, there exist pairwise distinct
indices $j_1,\ldots,j_n\in[J]$ such that
\begin{equation}
  [y_t(x)]_i\in Q_{j_i}
  \qquad(i=1,\ldots,n).
\label{eq:robust-symmetry}
\end{equation}
Replacing each node state by the center of its box changes the full
state by less than $\omega$:
\[
  \left\|y_t(x)-(b_{j_1},\ldots,b_{j_n})\right\|<\omega,
\]
where $b_j$ is the center of $Q_j$. The indices $j_i$ may depend on
$(x,t)$, while the boxes and their centers are fixed for the whole family.

Using the final retained event $E$, define two families of
graph--state pairs:
\[
  \begin{aligned}
  K_{\rm tr}
  &:=\bigcup_{k=0}^H\ \bigcup_{t=0}^{k-1}
       \{(G,y_t(x)):x=(G,y_0)\in E\cap L_k\},
  \\
  K_{\rm ent}
  &:=\bigcup_{k=0}^H
       \{(G,y_k(x)):x=(G,y_0)\in E\cap L_k\}.
  \end{aligned}
\]
Here $K_{\rm tr}$ collects the states of each retained trajectory
\emph{before} its first visit to the $(r-\zeta)$-neighborhood of
$S(G)$; these are the transient states. The family $K_{\rm ent}$
collects the state \emph{at} that first visit for each trajectory;
these are the entrance states. By the definition of $L_k$,
\[
  \operatorname{dist}(y,S(G))\geq r+\zeta
  \quad\text{on }K_{\rm tr},
  \qquad
  \operatorname{dist}(y,S(G))\leq r-\zeta
  \quad\text{on }K_{\rm ent}.
\]
Both families are compact, being finite unions of continuous images
of the compact sets $E\cap L_k$. They are permutation invariant
because these sets are invariant and the maps $y_t$ are equivariant.
In every state from either family, different nodes belong to
different boxes from the same collection $Q_1,\ldots,Q_J$---i.e., see \eqref{eq:robust-symmetry}. Hence, $K_{\rm tr}$ and $K_{\rm ent}$ admit robust node-state symmetry breaking (Definition \ref{def:family-robust-individualization}).

\textbf{Step 4: Construct the terminal dynamics and realize both regimes.}
Let $Q:=\bigcup_{j=1}^J Q_j$, and define $b(v):=b_j~(v\in Q_j)$, 
where $b_j$ is the center of $Q_j$. This map is continuous on $Q$
because it is constant on each box and the boxes are positively
separated. For a state $y$ with every node state $y_i\in Q$, define
\[
  b(y):=(b(y_1),\ldots,b(y_n)),\qquad
  \Psi(y):=b(y)+\tfrac12(y-b(y)).
\]
The update $\Psi$ moves each node state halfway toward the center
of its current box. Since the boxes are convex, each node remains
in that same box, so
\[
  b(\Psi(y))=b(y).
\]
Repeated updates therefore converge to the fixed center pattern
$b(y)$, halving each node's distance to its box center at every step.

For each entrance state, include the entire line segment joining it
to its box-center pattern. These segments form the terminal family:
\begin{equation}
  K_{\rm term}:=\left\{
    \bigl(G,b(y)+a(y-b(y))\bigr):
    (G,y)\in K_{\rm ent},\ a\in[0,1]\right\}.
  \label{eq:family-terminal-compactum}
\end{equation}
Here $a=1$ gives the entrance state $y$, while $a=0$ gives the
center pattern $b(y)$ toward which the terminal updates converge. Fix an entrance state $y$. Its node states belong to distinct boxes: there are pairwise distinct indices $j_1,\ldots,j_n$ such that $y_i\in Q_{j_i}$.
For any state $z=b(y)+a(y-b(y))$ on the segment,
\[
  z_i=b_{j_i}+a(y_i-b_{j_i})\in Q_{j_i},
  \qquad a\in[0,1],
\]
because $Q_{j_i}$ is convex and contains both $y_i$ and its center
$b_{j_i}$. Thus each node remains in the box it occupied at entrance,
and different nodes continue to occupy different boxes---i.e., $K_{\rm term}$ admits robust node-state symmetry breaking (Definition \ref{def:family-robust-individualization}).

The family $K_{\rm term}$ is compact as a continuous image of
$K_{\rm ent}\times[0,1]$, and it is permutation invariant.
Moreover,
\[
  \Psi(z)=b(y)+\frac{a}{2}(y-b(y))\in K_{\rm term}(G),
\]
where $K_{\rm term}(G):=\{w:(G,w)\in K_{\rm term}\}$.
Thus every terminal update stays in this family, which contains
all terminal iterates and their limits.

For a terminal state $z=b(y)+a(y-b(y))$ arising from an entrance state
$y=y_{h(x)}(x)$, let $s=s(x)$ be that initialization's selected solution.
The grid bound and \eqref{eq:family-entrance-gap} imply
\begin{equation}
  \|z-y\|<\omega,\qquad
  \|z-b(y)\|<\omega,\qquad
  \|z-s\|\leq r-\zeta+\omega<r.
  \label{eq:family-terminal-error-bounds}
\end{equation}
These estimates hold also at both endpoints of the segment. In particular,
the solution-set distance on $K_{\rm term}$ is below $r$, whereas it is
at least $r+\zeta$ on $K_{\rm tr}$. The two compact families, $K_{\rm term}$ and $K_{\rm tr}$, are disjoint and, when both are nonempty, positively separated.

Set $K:=K_{\rm tr}\cup K_{\rm term}$ and prescribe
\begin{equation}
  F^*(G,y):=
  \begin{cases}
    T_G(y), &(G,y)\in K_{\rm tr},\\[1mm]
    \Psi(y), &(G,y)\in K_{\rm term}.
  \end{cases}
  \label{eq:family-pasted-cell}
\end{equation}
The two pieces are continuous and equivariant, and their domains are
disjoint compact subsets of $K$. Thus $F^*$ is a well-defined continuous
equivariant map on $K$. 

Since $K_{\rm tr}$ and $K_{\rm term}$ admit robust node-state symmetry breaking, $K$ admits robust state-based symmetry breaking, with address regions $\mathcal A_j=Q_j$. Apply Lemma~\ref{lem:family-wide-realization-initialization} with $u=0$, $v=1$, and $F_1=F^*$. It gives one MP-GNN satisfying
\begin{equation}
  \operatorname{GNN}(G,y)=F^*(G,y)
  \qquad((G,y)\in K).
  \label{eq:family-exact-pasted-realization}
\end{equation}
The block is defined on the entire input domain by the extension
steps in Lemma~\ref{lem:family-wide-realization-persistent}.
Its proof extends the address encoder to all of $\mathbb R^{d_y}$
and uses the Tietze extension theorem to extend the decoder from
the compact reconstruction image to its entire Euclidean ambient
space. The resulting GNN is therefore defined for every
$(G,y)\in\mathcal G\times\mathcal Y_n$, while its agreement with
$F^\star$ is guaranteed on $K$. In particular, it already realizes
$\Psi$ on $K_{\rm term}$, so no separate extension of $\Psi$ is needed.
Both regimes are implemented by the same fixed block using only
the current input $(G,y)$, without an additional time input or
stopping flag.

\textbf{Step 5: Verify the full trajectory and branch capture.}
Fix $x=(G,y_0)\in E$ and write $h=h(x)$. For $t<h$, each ideal update
from the transient family leads either to its next transient state or to
the entrance state. From the entrance state onward, $\Psi$ stays in
$K_{\rm term}$. Thus exact realization and induction give
\begin{equation}
  \widehat y_t=T_G^t(y_0)\quad(0\leq t\leq h),\qquad
  \widehat y_{h+k}=b(\widehat y_h)
       +2^{-k}\bigl(\widehat y_h-b(\widehat y_h)\bigr)
       \quad(k\geq0).
  \label{eq:family-asymptotic-tail}
\end{equation}
This includes the case $h=0$. The trajectory converges to
$\widehat y_\infty=b(\widehat y_h)$, and
\eqref{eq:family-terminal-error-bounds} gives
\[
  \|\widehat y_\infty-s(G,y_0)\|
  \leq r-\zeta+\omega<r<\varepsilon.
\]

The learned update equals $T_G$ on the transient. On the terminal family,
$b(z)=b(y)$ for its generating entrance state, so
\[
  \|\Psi(z)-z\|=\tfrac12\|z-b(z)\|<\omega/2.
\]
Using \eqref{eq:family-ideal-displacement-bound} and
\eqref{eq:family-terminal-error-bounds}, uniformly on $K_{\rm term}$,
\[
  \begin{aligned}
  \|\operatorname{GNN}(G,z)-T_G(z)\|
  &\leq\|\Psi(z)-z\|+\|z-T_G(z)\|\\
  & \leq \tfrac12\|z-b(z)\| + \|z - s\| \\
  &\leq\omega/2+r-\zeta+\omega
   <r+\omega/2<2r<\rho.
  \end{aligned}
\]
The right-hand bounds are constants independent of the trajectory and
time, proving the supremum bound in
\eqref{eq:family-init-complete-error}.

The preceding argument establishes convergence and the uniform
operator-error bound for every $(G,y_0)\in E$.
In particular, every retained trajectory converges to a limit within
$\varepsilon$ of its selected solution, and hence of $S(G)$.
Together with \eqref{eq:family-final-retained-event}, this gives
\[
  \mathbb P\!\left(
    \widehat y_t\to\widehat y_\infty,\ 
    \operatorname{dist}(\widehat y_\infty,S(G))<\varepsilon
  \right)
  \geq\mathbb P(E)\geq1-\delta.
\]

To verify branch capture, fix a label $a$. By construction,
$C_a\subseteq B_a$, so every $(G,y_0)\in C_a\cap E$ satisfies
\[
  y_0\in V_{f_a(G)}(G),
  \qquad s(G,y_0)=f_a(G).
\]
The limit estimate therefore gives
\[
  \|\widehat y_\infty-f_a(G)\|
  =\|\widehat y_\infty-s(G,y_0)\|<\varepsilon
  \qquad((G,y_0)\in C_a\cap E).
\]
Since each restriction preserved positive probability in $C_a$,
\[
  \mathbb P\!\left(
    Y_0\in V_{f_a(G)}(G),\
    \widehat y_t\to\widehat y_\infty,\
    \|\widehat y_\infty-f_a(G)\|<\varepsilon
  \right)
  \geq\mathbb P(C_a\cap E)>0.
\]
Thus every labeled branch has positive capture probability under
the joint law of graphs and initializations.

Finally, the graph projection $K_{\varepsilon,\rho,\delta}:=\operatorname{proj}_{\mathcal G}E$ is compact because it is the continuous image of the compact set $E$.
Lemma~\ref{lem:family-wide-realization-persistent} gives a single
GNN that agrees with $F^\star$ on all of $K$. For every retained
pair $(G,y_0)\in E$, the entire learned trajectory stays in $K$.
Thus the same GNN, with all component maps fixed, generates every
retained trajectory by repeated application.
Moreover, the shared nodewise maps and sum aggregations commute with node relabeling, giving the permutation equivariance: $\operatorname{GNN}(\pi G,\pi y) =\pi\operatorname{GNN}(G,y)$. This concludes the proof.
\end{proof}

\section{An obstruction to almost-sure validity for contractive equivariant dynamics}
\label{app:contractive-obstruction}

We use the graph space $\mathcal E_n$, output space
$\mathcal Y_n=\mathbb R^{n\times q}$, Euclidean norms, and permutation
actions of Appendix~\ref{app:ideal}. In particular, $\pi y=P_\pi y$,
and the graph domain $\mathcal G\subseteq\mathcal E_n$ is closed under
node relabeling. The Euclidean norm on a matrix is its Frobenius norm.
All continuity and neighborhood statements on $\mathcal G$ refer to
its relative topology.

For a graph $G\in\mathcal G$, define its automorphism group and the
corresponding fixed subspace by
\[
    \Gamma_G
    :=\{\pi\in\mathfrak S_n:\pi G=G\},
    \qquad
    \operatorname{Fix}(\Gamma_G)
    :=\{y\in\mathcal Y_n:\pi y=y
                      \text{ for every }\pi\in\Gamma_G\}.
\]
The latter is a closed linear subspace, since
\[
    \operatorname{Fix}(\Gamma_G)
    =
    \bigcap_{\pi\in\Gamma_G}
    \ker\bigl(y\mapsto P_\pi y-y\bigr).
\]
Let $\nu$ be a Borel probability measure on $\mathcal G$.
We write $G_0\in\operatorname{supp}\nu$ when every relatively open
neighborhood of $G_0$ has positive $\nu$-measure. Thus a full-support
graph law satisfies this condition at every $G_0\in\mathcal G$.

\subsection{The operator-level obstruction}

\begin{theorem}[Positive-probability failure of continuous contractive equivariant dynamics]
\label{thm:contractive-positive-failure}
Let $S(G)=\{f_1(G),\ldots,f_M(G)\}$ be a permutation-equivariant
finite-branch solution map in the sense of
Definition~\ref{def:equivariant-branches}. For this theorem, assume
additionally that every branch $f_a:\mathcal G\to\mathcal Y_n$ is
continuous. Suppose there is a graph
$G_0\in\operatorname{supp}\nu$ such that
\begin{equation}
    S(G_0)\cap\operatorname{Fix}(\Gamma_{G_0})
    =\varnothing.
    \label{eq:contractive-no-invariant-solution}
\end{equation}
Let
\[
    F:\mathcal G\times\mathcal Y_n\to\mathcal Y_n,
    \qquad (G,y)\mapsto F_G(y),
\]
be jointly continuous and permutation equivariant. Assume that, for
some $\kappa\in[0,1)$,
\begin{equation}
    \|F_G(y)-F_G(z)\|
    \le\kappa\|y-z\|,
    \qquad
    F_{\pi G}(\pi y)=\pi F_G(y),
    \label{eq:contractive-assumptions}
\end{equation}
for all $G\in\mathcal G$, $y,z\in\mathcal Y_n$, and node
permutations $\pi$.
Then the following statements hold.

\begin{enumerate}[leftmargin=5mm]
    \item \textbf{A unique, initialization-independent limit.}
    For every $G\in\mathcal G$, there is a unique fixed point
    $p_F(G)\in\mathcal Y_n$. Every trajectory
    $y_{t+1}=F_G(y_t)$ satisfies
    \begin{equation}
        \|y_t-p_F(G)\|
        \le\kappa^t\|y_0-p_F(G)\|,
        \qquad t\ge1.
        \label{eq:contractive-limit-rate}
    \end{equation}
    The map $p_F:\mathcal G\to\mathcal Y_n$ is continuous and
    permutation equivariant. In particular,
    \[
        p_F(G)\in\operatorname{Fix}(\Gamma_G)
        \qquad(G\in\mathcal G).
    \]

    \item \textbf{Failure on a neighborhood of the obstruction graph.}
    The number
    \begin{equation}
        \gamma_0
        :=
        \min_{1\le a\le M}
        \operatorname{dist}
        \bigl(f_a(G_0),\operatorname{Fix}(\Gamma_{G_0})\bigr)
        \label{eq:contractive-obstruction-margin}
    \end{equation}
    is strictly positive and does not depend on $F$.
    For every $\varepsilon\in(0,\gamma_0)$, there is a relatively
    open neighborhood $V_{F,\varepsilon}$ of $G_0$ such that
    \begin{equation}
        \operatorname{dist}\bigl(p_F(G),S(G)\bigr)>\varepsilon
        \qquad(G\in V_{F,\varepsilon}).
        \label{eq:contractive-open-failure}
    \end{equation}

    \item \textbf{Positive-probability failure under random initialization.}
    Let $(G,Y_0)$ have any Borel probability law $\mathbb P$ on
    $\mathcal G\times\mathcal Y_n$ with graph marginal $\nu$, and
    define
    \[
        Y_{t+1}=F_G(Y_t),\qquad t\ge0.
    \]
    For every $\varepsilon\in(0,\gamma_0)$, set
    $\delta_{F,\varepsilon}:=\nu(V_{F,\varepsilon})>0$.
    Then $Y_t$ converges for every $(G,Y_0)$, and its limit
    $Y_\infty=p_F(G)$ satisfies
    \begin{equation}
        \mathbb P\!\left(
            \operatorname{dist}(Y_\infty,S(G))>\varepsilon
        \right)
        \ge\delta_{F,\varepsilon}>0.
        \label{eq:contractive-positive-error}
    \end{equation}
    Consequently,
    \[
        \mathbb P\bigl(Y_\infty\in S(G)\bigr)<1.
    \]
    No independence, density, or full-support assumption is needed
    for the initialization law.
\end{enumerate}
\end{theorem}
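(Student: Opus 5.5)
The proof follows the three enumerated items in order. The core is the Banach fixed-point theorem together with a uniform-contraction argument for continuity of the fixed point. Equivariance and the separation of $S(G_0)$ from $\operatorname{Fix}(\Gamma_{G_0})$ then yield an open failure set around $G_0$.

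\textbf{Item 1.} For fixed $G$, $F_G$ is a $\kappa$-contraction of the complete space $\mathcal Y_n$, so Banach's theorem gives a unique fixed point $p_F(G)$. The rate \eqref{eq:contractive-limit-rate} follows from $\|y_{t}-p_F(G)\|=\|F_G(y_{t-1})-F_G(p_F(G))\|\le\kappa\|y_{t-1}-p_F(G)\|$.

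For continuity, fix $G_0'$ and let $p=p_F(G_0')$. For any $G$,
\[
\|p_F(G)-p\|\le\|F_G(p_F(G))-F_G(p)\|+\|F_G(p)-F_{G_0'}(p)\|\le\kappa\|p_F(G)-p\|+\|F_G(p)-F_{G_0'}(p)\|,
\]
so $\|p_F(G)-p\|\le(1-\kappa)^{-1}\|F_G(p)-F_{G_0'}(p)\|$. The right-hand side tends to zero as $G\to G_0'$ by joint continuity of $F$ at the single point $(G_0',p)$. Because the contraction constant is uniform in $G$, only pointwise continuity is needed here.

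For equivariance, $F_{\pi G}(\pi p_F(G))=\pi F_G(p_F(G))=\pi p_F(G)$, so uniqueness gives $p_F(\pi G)=\pi p_F(G)$. Taking $\pi\in\Gamma_G$, so that $\pi G=G$, gives $\pi p_F(G)=p_F(G)$, i.e.\ $p_F(G)\in\operatorname{Fix}(\Gamma_G)$.

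\textbf{Item 2.} Each $\operatorname{dist}(f_a(G_0),\operatorname{Fix}(\Gamma_{G_0}))$ is positive because the fixed subspace is closed and $f_a(G_0)\notin\operatorname{Fix}(\Gamma_{G_0})$ by \eqref{eq:contractive-no-invariant-solution}. The minimum of finitely many such numbers gives $\gamma_0>0$, and $\gamma_0$ involves only $S$ and $G_0$, not $F$.

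Since $p_F(G_0)\in\operatorname{Fix}(\Gamma_{G_0})$, we get $\operatorname{dist}(p_F(G_0),S(G_0))=\min_a\|p_F(G_0)-f_a(G_0)\|\ge\gamma_0>\varepsilon$. The function $G\mapsto\min_a\|p_F(G)-f_a(G)\|$ is continuous, as a finite minimum of continuous functions using continuity of $p_F$ and of each branch. Hence the set $V_{F,\varepsilon}=\{G:\min_a\|p_F(G)-f_a(G)\|>\varepsilon\}$ is a relatively open neighborhood of $G_0$.

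The point I would flag is that nearby $G$ typically have a trivial automorphism group, so symmetry cannot be invoked directly there. The argument never needs it: symmetry is used only at $G_0$, and continuity transports the failure to the neighborhood.

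\textbf{Item 3.} Convergence holds for every $(G,Y_0)$ by Item 1, with limit $Y_\infty=p_F(G)$, which is a continuous and hence Borel function of $G$. The failure event therefore contains $\{G\in V_{F,\varepsilon}\}$. Its probability is $\nu(V_{F,\varepsilon})$, which is positive because $G_0\in\operatorname{supp}\nu$ and $V_{F,\varepsilon}$ is a relatively open neighborhood of $G_0$.

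This argument uses only the graph marginal, so no assumption on the initialization law is needed. Finally, $\{Y_\infty\in S(G)\}$ is disjoint from the failure event, so $\mathbb P(Y_\infty\in S(G))\le1-\delta_{F,\varepsilon}<1$.
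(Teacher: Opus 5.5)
Your proposal is correct and follows the paper's proof essentially step by step: a Banach fixed point, continuity of $p_F$ via the bound $\|p_F(G)-p_F(H)\|\le(1-\kappa)^{-1}\|F_G(p_F(H))-F_H(p_F(H))\|$, equivariance from uniqueness, the positive margin $\gamma_0$, and the support argument. The only cosmetic difference is that you take $V_{F,\varepsilon}$ to be the superlevel set $\{G:\min_a\|p_F(G)-f_a(G)\|>\varepsilon\}$ of a continuous function, whereas the paper builds an explicit neighborhood with $\eta=(\gamma_0-\varepsilon)/2$ and the reverse triangle inequality; both are valid.
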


\begin{proof}
We verify the three conclusions.

\medskip
\noindent
\textbf{Step 1: Existence, uniqueness, and convergence for each graph.}
Fix $G\in\mathcal G$ and $y_0\in\mathcal Y_n$. Repeated use of
\eqref{eq:contractive-assumptions} gives
\[
    \|y_{t+1}-y_t\|
    \le\kappa^t\|y_1-y_0\|.
\]
For integers $r>s\ge0$, it follows that
\[
    \|y_r-y_s\|
    \le
    \sum_{t=s}^{r-1}\kappa^t\|y_1-y_0\|.
\]
The right-hand side tends to zero as $s\to\infty$, uniformly in
$r>s$, because the geometric series is summable. Hence $(y_t)$
is Cauchy. The space $\mathcal Y_n$ is complete, so $y_t$ has a
limit $p$. Continuity of $F_G$ yields
\[
    F_G(p)
    =
    \lim_{t\to\infty}F_G(y_t)
    =
    \lim_{t\to\infty}y_{t+1}
    =p.
\]
If $p$ and $\widetilde p$ are fixed points, then
\[
    \|p-\widetilde p\|
    \le\kappa\|p-\widetilde p\|,
\]
which forces $p=\widetilde p$ because $\kappa<1$.
Write this unique fixed point as $p_F(G)$. Its uniqueness makes
the limit independent of $y_0$, and iteration of
\[
    \|y_{t+1}-p_F(G)\|
    \le\kappa\|y_t-p_F(G)\|
\]
proves \eqref{eq:contractive-limit-rate}. If $\kappa=0$, the
trajectory reaches the fixed point after one step.

\medskip
\noindent
\textbf{Step 2: Continuity of the fixed-point map.}
For any $G,H\in\mathcal G$, the fixed-point equations and
\eqref{eq:contractive-assumptions} imply
\[
    \begin{aligned}
        \|p_F(G)-p_F(H)\|
        &=
        \|F_G(p_F(G))-F_H(p_F(H))\|\\
        &\le
        \kappa\|p_F(G)-p_F(H)\|
        +\|F_G(p_F(H))-F_H(p_F(H))\|.
    \end{aligned}
\]
Therefore
\begin{equation}
    \|p_F(G)-p_F(H)\|
    \le
    \frac{\|F_G(p_F(H))-F_H(p_F(H))\|}{1-\kappa}.
    \label{eq:contractive-graph-continuity}
\end{equation}
Fixing $H$ and letting $G\to H$, joint continuity of $F$ makes
the numerator tend to zero. This proves continuity of $p_F$
on $\mathcal G$.

\medskip
\noindent
\textbf{Step 3: Equivariance and preservation of graph automorphisms.}
By definition, $p_F(\pi G)$ is a fixed point of $\pi G$.
For any node permutation $\pi$,
\[
    F_{\pi G}(\pi p_F(G))
    =
    \pi F_G(p_F(G))
    =
    \pi p_F(G)
\]
which implies $\pi p_F(G)$ is also a fixed point of $\pi G$.
By uniqueness of the fixed point at $\pi G$,
\[
    p_F(\pi G)=\pi p_F(G).
\]
If $\pi\in\Gamma_G$, then $\pi G=G$, and consequently
$\pi p_F(G)=p_F(G)$. Thus
$p_F(G)\in\operatorname{Fix}(\Gamma_G)$.

\medskip
\noindent
\textbf{Step 4: A positive error persists on a graph neighborhood.}
The fixed subspace $\operatorname{Fix}(\Gamma_{G_0})$ is closed.
By \eqref{eq:contractive-no-invariant-solution}, each of the
finitely many points $f_a(G_0)$ lies outside this subspace.
Its distance to the subspace is therefore strictly positive,
and the finite minimum in
\eqref{eq:contractive-obstruction-margin} satisfies
$\gamma_0>0$. Step 3 gives
\[
    \|p_F(G_0)-f_a(G_0)\|\ge\gamma_0
    \qquad(1\le a\le M).
\]

Fix $\varepsilon\in(0,\gamma_0)$ and put
$\eta:=(\gamma_0-\varepsilon)/2>0$.
By continuity of $p_F$ and of the finitely many branches, there
is a relatively open neighborhood $V_{F,\varepsilon}$ of $G_0$
such that
\[
    \|p_F(G)-p_F(G_0)\|<\eta,
    \qquad
    \max_{1\le a\le M}\|f_a(G)-f_a(G_0)\|<\eta
    \qquad(G\in V_{F,\varepsilon}).
\]
For every such $G$ and every $a$, the reverse triangle
inequality gives
\[
    \begin{aligned}
        \|p_F(G)-f_a(G)\|
        &\ge
        \|p_F(G_0)-f_a(G_0)\|
        -\|p_F(G)-p_F(G_0)\|
        -\|f_a(G)-f_a(G_0)\|\\
        &>\gamma_0-2\eta
        =\varepsilon.
    \end{aligned}
\]
Taking the minimum over the finite branch set proves
\eqref{eq:contractive-open-failure}.

\medskip
\noindent
\textbf{Step 5: The probability statement.}
Because $G_0\in\operatorname{supp}\nu$,
$\delta_{F,\varepsilon}=\nu(V_{F,\varepsilon})>0$.
The function
\[
    G\longmapsto
    \operatorname{dist}(p_F(G),S(G))
    =
    \min_{1\le a\le M}\|p_F(G)-f_a(G)\|
\]
is continuous, so the events in the statement are measurable.
By Step 1, $Y_\infty=p_F(G)$ for every initial state.
Consequently,
\[
    \{G\in V_{F,\varepsilon}\}
    \subseteq
    \{\operatorname{dist}(Y_\infty,S(G))>\varepsilon\}.
\]
Taking $\mathbb P$-probabilities and using its graph marginal
$\nu$ proves \eqref{eq:contractive-positive-error}.
The last assertion follows because the event on the right
contains no valid limit.
\end{proof}

\noindent\textbf{Remark (Scope of the assumptions).}
The graph domain need not be compact or connected, and the graph law
need not be permutation invariant. Full support of $\nu$ is sufficient
but not necessary: the proof only uses
$G_0\in\operatorname{supp}\nu$.
The neighborhood argument only needs continuity of the branches at
$G_0$; continuity on $\mathcal G$ is imposed above to ensure that all
global error events are measurable without additional assumptions.
In particular, the branch assumption holds on a domain where the
branches are locally Lipschitz, as in the regular stable setting of
Appendix~\ref{app:ideal}.
The failure is uniform over initial states on
$V_{F,\varepsilon}$ only at the level of their limits; no common
finite convergence time over the unbounded space $\mathcal Y_n$
is asserted.

\subsection{Application to independently weighted triangles}

Fix $0<w_-<w_+<\infty$ and let
$W=[w_-,w_+]^3$. For
$w=(w_{12},w_{13},w_{23})\in W$, define
\[
    A(w)=
    \begin{pmatrix}
        0&w_{12}&w_{13}\\
        w_{12}&0&w_{23}\\
        w_{13}&w_{23}&0
    \end{pmatrix},
    \qquad
    G(w)=(A(w),0).
\]
Here $n=3$, $d_e=d_v=1$, and $X=0\in\mathbb R^{3\times1}$.
Set
\[
    \mathcal G_\triangle:=\{G(w):w\in W\},
    \qquad
    L_G:=\operatorname{diag}(A\mathbf1)-A,
    \qquad
    H_G:=I_3+L_G.
\]
We take $q=2$ and write $y=[u\ b]\in\mathcal Y_3
=\mathbb R^{3\times2}$, where $u,b\in\mathbb R^3$ are its
two columns. The single-source diffusion solution set is
\begin{equation}
    S(G)=\{f_1(G),f_2(G),f_3(G)\},
    \qquad
    f_k(G):=[H_G^{-1}e_k\ \ e_k].
    \label{eq:contractive-triangle-solutions}
\end{equation}
Thus a valid output satisfies $H_Gu=b$, with $b$ a one-hot
source indicator.

\begin{corollary}[Impossibility under independent edge weights]
\label{cor:contractive-independent-triangles}
Let the three edge weights be independent and uniform on
$[w_-,w_+]$, and let $\nu$ be the induced law on
$\mathcal G_\triangle$.
Suppose $F:\mathcal G_\triangle\times\mathcal Y_3\to\mathcal Y_3$
satisfies the continuity, equivariance, and global contraction
assumptions of Theorem~\ref{thm:contractive-positive-failure}.
For every
\[
    0<\varepsilon<\sqrt{\frac23},
\]
there is a number $\delta_{F,\varepsilon}>0$ such that, under
any joint law of $(G,Y_0)$ with graph marginal $\nu$, the
recurrence converges and
\[
    \mathbb P\!\left(
        \operatorname{dist}(Y_\infty,S(G))>\varepsilon
    \right)
    \ge\delta_{F,\varepsilon}.
\]
In particular, no such operator converges to a valid solution
almost surely over independently weighted graphs.
The same conclusion holds for any full-support Borel law on
$\mathcal G_\triangle$.
\end{corollary}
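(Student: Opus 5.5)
The corollary follows from Theorem~\ref{thm:contractive-positive-failure} once its hypotheses are checked for the triangle family with a suitable obstruction graph $G_0$. The natural choice is the symmetric triangle $G_0=G(w_0,w_0,w_0)$ with any $w_0\in[w_-,w_+]$. Since the product of uniform laws has full support on $W$, and $w\mapsto G(w)$ is a homeomorphism onto $\mathcal G_\triangle$, every relatively open neighborhood of $G_0$ has positive $\nu$-mass, so $G_0\in\operatorname{supp}\nu$. The same holds for any full-support law, which gives the last sentence of the corollary. I would also check that each branch $f_k(G)=[H_G^{-1}e_k\ e_k]$ is continuous, since $H_G=I_3+L_G$ is positive definite and depends continuously on $w$. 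The branches are also permutation equivariant: for $\pi G=G(\pi w)$ we have $H_{\pi G}=P_\pi H_GP_\pi^\top$, hence $f_{\pi(k)}(\pi G)=\pi f_k(G)$ with $\tau_\pi=\pi$. Finally, $\mathcal G_\triangle$ is closed under relabeling.

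Next I verify \eqref{eq:contractive-no-invariant-solution} and compute the margin $\gamma_0$. At $G_0$ the automorphism group is all of $\mathfrak S_3$, so $\operatorname{Fix}(\Gamma_{G_0})$ consists of matrices with three identical rows, $y=\mathbf 1 c^\top$ with $c\in\mathbb R^2$. The distance from $y$ to this subspace is the Frobenius norm of $y-\tfrac13\mathbf 1\mathbf 1^\top y$, i.e.\ of the column-wise centering. For $f_k(G_0)$, the $b$-column $e_k$ has centered norm $\|e_k-\tfrac13\mathbf 1\|=\sqrt{4/9+1/9+1/9}=\sqrt{2/3}$. The $u$-column $H_{G_0}^{-1}e_k$ contributes an additional nonnegative amount. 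Hence
\[
\gamma_0\ \ge\ \sqrt{2/3}>0 ,
\]
so no solution is $\Gamma_{G_0}$-fixed, and every $\varepsilon\in(0,\sqrt{2/3})$ lies in $(0,\gamma_0)$. The bound $\sqrt{2/3}$ is exactly the threshold in the statement, which suggests this is the intended calculation. The $u$-part can be bounded below by zero and need not be computed.

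Applying Theorem~\ref{thm:contractive-positive-failure}(3) then gives, for each such $\varepsilon$, a neighborhood $V_{F,\varepsilon}$ and $\delta_{F,\varepsilon}=\nu(V_{F,\varepsilon})>0$. Under any joint law with graph marginal $\nu$, every trajectory converges to $p_F(G)$, and the failure event has probability at least $\delta_{F,\varepsilon}$. The ``in particular'' clause follows because $\mathbb P(Y_\infty\in S(G))\le 1-\delta_{F,\varepsilon}<1$.

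The argument requires essentially no new work, and no step is a real obstacle. The only points needing care are the equivariance bookkeeping for the branch labels and the norm. The corollary states the contraction for an arbitrary fixed norm $\|\cdot\|_*$, while the theorem uses the Euclidean norm. Norm equivalence handles this: Banach's fixed-point theorem in $\|\cdot\|_*$ still gives a unique fixed point and convergence. Continuity of $p_F$ follows from the same estimate \eqref{eq:contractive-graph-continuity} computed in $\|\cdot\|_*$. The neighborhood argument itself is carried out in the Euclidean norm, so the proof of Theorem~\ref{thm:contractive-positive-failure} transfers verbatim.
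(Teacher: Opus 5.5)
Your proposal is correct and follows the paper's proof essentially step for step. You establish full support of $\nu$ via the homeomorphism $w\mapsto G(w)$, continuity and label-equivariance of the branches $f_k$, and choose the symmetric triangle $G_0$ with $\operatorname{Fix}(\Gamma_{G_0})$ equal to the constant-row matrices, so that the $b$-column alone gives $\gamma_0\ge\sqrt{2/3}$; your centering computation is the same as the paper's minimization of $\|e_k-\beta\mathbf 1\|$. You then apply Theorem~\ref{thm:contractive-positive-failure}(3), exactly as the paper does.

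Your closing norm-equivalence remark is valid but not needed here: the corollary inherits the Euclidean contraction hypothesis of that theorem, and the arbitrary norm $\|\cdot\|_*$ appears only in the main-text Theorem~\ref{thm:contractive-limits-main}, where your observation does usefully justify the transfer.
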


\begin{proof}
We verify the solution-map, support, and symmetry hypotheses.

\medskip
\noindent
\textbf{Step 1: The graph family and solution branches.}
Relabeling the nodes permutes the three edge coordinates, so
$\mathcal G_\triangle$ is permutation invariant.
For every $v\in\mathbb R^3$,
\[
    v^\top H_Gv
    =
    \|v\|_2^2+
    \sum_{1\le i<j\le3}w_{ij}(v_i-v_j)^2
    \ge\|v\|_2^2.
\]
Hence $H_G$ is invertible and
$\|H_G^{-1}\|_{\mathrm{op}}\le1$.
The map $G\mapsto H_G$ is linear in the edge matrix up to
the constant identity. For any $G,\widetilde G\in\mathcal G_\triangle$,
the resolvent identity gives
\[
    H_G^{-1}-H_{\widetilde G}^{-1}
    =
    H_G^{-1}(H_{\widetilde G}-H_G)H_{\widetilde G}^{-1}.
\]
Consequently,
\[
    \|H_G^{-1}-H_{\widetilde G}^{-1}\|_{\mathrm{op}}
    \le
    \|H_{\widetilde G}-H_G\|_{\mathrm{op}}.
\]
Since linear maps between finite-dimensional Euclidean spaces
are globally Lipschitz, every branch in
\eqref{eq:contractive-triangle-solutions} is globally Lipschitz.

Furthermore,
\[
    H_{\pi G}=P_\pi H_GP_\pi^\top,
    \qquad
    H_{\pi G}^{-1}=P_\pi H_G^{-1}P_\pi^\top.
\]
Using $P_\pi e_k=e_{\pi(k)}$ yields
\[
    f_{\pi(k)}(\pi G)
    =
    [P_\pi H_G^{-1}e_k\ \ P_\pi e_k]
    =
    \pi f_k(G).
\]
Thus Definition~\ref{def:equivariant-branches} holds with
$\tau_\pi(k)=\pi(k)$. The branches are distinct for every graph
because their second columns are distinct; indeed,
\[
    \|f_k(G)-f_\ell(G)\|
    \ge\|e_k-e_\ell\|_2
    =\sqrt2
    \qquad(k\ne\ell).
\]
Their collapse partition consists of three singleton sets
throughout the domain. Hence $D=U=\varnothing$ and
$(\mathcal G_\triangle)_\star=\mathcal G_\triangle$ in the
notation of Definition~\ref{def:regular-stable}.

\medskip
\noindent
\textbf{Step 2: The graph law has full support.}
The map $w\mapsto G(w)$ is a homeomorphism from $W$ onto
$\mathcal G_\triangle$. In fact,
\[
    \|G(w)-G(\widetilde w)\|
    =
    \sqrt2\,\|w-\widetilde w\|_2.
\]
Every nonempty relatively open subset of the cube $W$ has
positive three-dimensional Lebesgue measure: it contains
the intersection of $W$ with a sufficiently small Euclidean
ball, and this intersection has positive volume.
The uniform product law has constant positive density on
$W$. Its pushforward $\nu$ therefore has full support on
$\mathcal G_\triangle$.

\medskip
\noindent
\textbf{Step 3: A symmetric graph provides a fixed positive margin.}
Choose any $c\in(w_-,w_+)$ and set $G_0=G(c,c,c)$.
Every node permutation fixes $G_0$, so
$\Gamma_{G_0}=\mathfrak S_3$ and
\[
    \operatorname{Fix}(\Gamma_{G_0})
    =
    \{[\alpha\mathbf1\ \ \beta\mathbf1]:
                         \alpha,\beta\in\mathbb R\}.
\]
For any $k\in\{1,2,3\}$ and $\beta\in\mathbb R$,
\[
    \|e_k-\beta\mathbf1\|_2^2
    =
    (1-\beta)^2+2\beta^2
    =
    3\left(\beta-\frac13\right)^2+\frac23
    \ge\frac23.
\]
It follows that
\[
    S(G_0)\cap\operatorname{Fix}(\Gamma_{G_0})=\varnothing,
    \qquad
    \gamma_0\ge\sqrt{\frac23}.
\]
Step 2 gives $G_0\in\operatorname{supp}\nu$.
Applying Theorem~\ref{thm:contractive-positive-failure}
proves the claimed probability bound for every
$\varepsilon\in(0,\sqrt{2/3})$ and for any initialization law.
The proof uses only full support of the graph law, so the
last assertion follows as well.
\end{proof}

\subsection{Consequence for recurrent message-passing GNNs}

\begin{corollary}[Contractive recurrent MP-GNNs]
\label{cor:contractive-mpgnn}
Consider a deterministic, autonomous, weight-tied MP-GNN block
of the form \eqref{eq:family-init-mp-initialization}, with
continuous component maps and only $y_t$ carried between
recurrent calls. Suppose its complete update satisfies
\[
    \|\operatorname{GNN}(G,y)-\operatorname{GNN}(G,z)\|
    \le\kappa\|y-z\|
    \qquad
    (G\in\mathcal G,\ y,z\in\mathcal Y_n)
\]
for some $\kappa<1$.
Then the conclusions of
Theorem~\ref{thm:contractive-positive-failure} hold whenever
its solution-map and graph-law hypotheses hold.
In particular, Corollary~\ref{cor:contractive-independent-triangles}
applies on $\mathcal G_\triangle$, for every finite number
of hidden layers and every choice of finite hidden widths.
\end{corollary}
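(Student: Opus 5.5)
The plan is to reduce Corollary~\ref{cor:contractive-mpgnn} to Theorem~\ref{thm:contractive-positive-failure} by taking $F_G(y):=\operatorname{GNN}(G,y)$ and checking its three operator hypotheses: joint continuity, permutation equivariance, and global contraction. Contraction is assumed outright. Once the three hypotheses are in place, the theorem's conclusions (unique limit, open failure neighborhood, positive failure probability) transfer verbatim. Likewise, Corollary~\ref{cor:contractive-independent-triangles} applies on $\mathcal G_\triangle$, since its proof used only these operator properties together with the solution-map and graph-law facts already verified there.

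\textbf{Joint continuity.} The block \eqref{eq:family-init-mp-initialization} is a finite composition of continuous maps. First, $(X_i,y_i)\mapsto h_i^{(0)}$ via the continuous encoder. Then, for each $\ell<L$, the messages $M_\ell(h_i^{(\ell)},h_j^{(\ell)},A_{ji})$ are continuous in $(A,h^{(\ell)})$, their finite sums over $j$ are continuous, and $U_\ell$ is continuous. Finally, the readout sum $\sum_j R_L(h_j^{(L)})$ and $O$ are continuous. An induction over the $L$ layers therefore shows that $(G,y)\mapsto\operatorname{GNN}(G,y)$ is jointly continuous on $\mathcal E_n\times\mathcal Y_n$, and hence on $\mathcal G\times\mathcal Y_n$ with the relative topology. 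Because the block is deterministic, autonomous, and carries only $y_t$ between calls, the recurrence is exactly $Y_{t+1}=F_G(Y_t)$ for a single fixed map $F$. No hidden memory or time input breaks the operator picture, and this is the reason the corollary insists on these structural assumptions.

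\textbf{Equivariance.} This is \eqref{eq:family-seed-free-equivariance}. Under $\pi$, node $i$'s features become those of $\pi^{-1}(i)$, and the edge tensor becomes $(\pi A)_{ji}=A_{\pi^{-1}(j)\pi^{-1}(i)}$. Induction then gives $h^{(\ell)}(\pi G,\pi y)_i=h^{(\ell)}(G,y)_{\pi^{-1}(i)}$, by reindexing the sum over senders $j$. The global sum $g^{(L)}$ is invariant under this reindexing, so the output rows are permuted accordingly.

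Depth and width never enter these arguments beyond finiteness, so the conclusion holds uniformly over all finite architectures. The only point requiring care is that the contraction hypothesis be stated for every $G\in\mathcal G$ with a single $\kappa<1$ and the Euclidean norm, matching \eqref{eq:contractive-assumptions}. The main-text theorem permits a general norm $\|\cdot\|_*$. If that version is also wanted, I would observe that the proof of Theorem~\ref{thm:contractive-positive-failure} uses the norm only in Steps 1–2 (completeness and the fixed-point bound). By norm equivalence in finite dimensions, continuity of $p_F$ and the Euclidean distance estimates in Step 4 are unaffected. I expect this norm bookkeeping to be the only mildly delicate step; the rest is routine verification.
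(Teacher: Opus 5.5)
Your proposal is correct and follows the paper's proof essentially line for line: you set $F_G(y):=\operatorname{GNN}(G,y)$, obtain joint continuity by induction over the layers, obtain equivariance by reindexing the sender sums (with the final global sum invariant), take contraction as given, and then invoke Theorem~\ref{thm:contractive-positive-failure} and Corollary~\ref{cor:contractive-independent-triangles}. Your closing remark about a general norm $\|\cdot\|_*$ is also correct, since only the fixed-point and continuity steps use the contraction norm and norm equivalence transfers continuity of $p_F$. It is not needed here, however, because the corollary is stated in the Euclidean norm.
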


\begin{proof}
Set $F_G(y):=\operatorname{GNN}(G,y)$.
The encoder is continuous in $(G,y)$. Inductively, if all
$h_i^{(\ell)}$ are continuous, then each message is continuous
by continuity of $M_\ell$, its finite sum is continuous, and
$h_i^{(\ell+1)}$ is continuous by continuity of $U_\ell$.
The final finite sum and continuous output map preserve
continuity. Thus $F$ is jointly continuous.

Under the simultaneous relabeling $(G,y)\mapsto(\pi G,\pi y)$,
the encoder satisfies
$h_{\pi(i)}^{(0)}(\pi G,\pi y)=h_i^{(0)}(G,y)$.
If this identity holds at layer $\ell$, reindexing the sender
sum by $j\mapsto\pi(j)$ gives the same identity for messages
and then for layer $\ell+1$. The final global sum is invariant,
and the node outputs are permuted. Hence
$F_{\pi G}(\pi y)=\pi F_G(y)$, as also stated in
\eqref{eq:family-seed-free-equivariance}.
The assumed bound is exactly the remaining contraction
hypothesis. The two cited results now apply.
\end{proof}

\noindent\textbf{Relation to the multi-attractor results.}
For the triangle family, the preceding proof established
$(\mathcal G_\triangle)_\star=\mathcal G_\triangle$.
Consequently, Theorem~\ref{thm:joint-multi-attractor} applies.
If the initialization law has a density and full support on
$\mathcal Y_3$, every strict Voronoi basin has positive
probability. Indeed, for each solution $s$, a sufficiently
small open ball centered at $s$ lies in $V_s(G)$ because
the other solutions are finitely many and distinct.
The ideal operator therefore has almost-sure validity and
positive capture probability for every solution at every
fixed graph in $\mathcal G_\triangle$.

By contrast, every contractive operator above has one limit
$p_F(G)$ for each fixed $G$. Its limiting law under any
initialization distribution is a point mass. It cannot reach
two distinct solutions with positive probability at the
same graph. Moreover,
Corollary~\ref{cor:contractive-independent-triangles} rules
out exact almost-sure validity under the independent-weight
graph law, even if coverage of multiple solutions is not required.

\section{Single-source diffusion: protocol and additional results}
\label{app:toy}

\textbf{Datasets.}
Each instance is a complete undirected graph on three nodes. Its fixed
input consists only of the three positive edge weights; static node
features are identically zero and omitted from the implemented encoder.
Draw $w_{12},w_{23},w_{31}$ independently from
$\operatorname{Unif}[0.5,1.5]$.
Randomly permute node labels consistently with the adjacency matrix.
No node indices, drawing coordinates, prescribed source, or exact
diffusion fields are supplied as input features.
We generate 500/100/100 train/validation/test instances.

\textbf{Model architecture.}
Across the three models, we use the same architecture:
\begin{align}
 h_i^{(0)}&=\operatorname{enc}(y_i),\nonumber\\
 m_i^{(\ell)}&=\sum_{j\ne i}
   M_\ell\bigl(h_i^{(\ell)},h_j^{(\ell)},w_{ij}\bigr),\nonumber\\
 h_i^{(\ell+1)}&=U_\ell\bigl(h_i^{(\ell)},m_i^{(\ell)}\bigr),
       \qquad \ell=0,1,\nonumber\\
 [N_\theta(G,y)]_i&=O(h_i^{(2)}),\qquad
 \mathrm{GNN}_{\theta}(G,y)=P(a y+N_\theta(G,y)).
 \label{eq:toy-cell}
\end{align}
Each component is a two-affine-layer MLP with a width-32 hidden layer
and a SiLU activation between the affine layers. Messages use sum
aggregation over the two neighbors. 
Layers within a block have separate parameters, but the entire block
is shared across recurrent time $t$. The scalar skip $a$ is also shared
over nodes, coordinates, and time. 
The output projection is $P([u\ b])=[u\ \operatorname{clip}_{[0,1]}(b)]$; it does not impose the mass constraint or choose a source. 
Both columns of $y_0$ are i.i.d.\ $\mathcal N(0,1)$ samples, independently across nodes and trajectories. 

\textbf{Training.}
For supervised training, sample $k_G$ uniformly from $\{1,2,3\}$ once per training graph, independently of the neural initialization, and
retain $s_G=[H_G^{-1}e_{k_G}\ \ e_{k_G}]$ for every subsequent visit. The target is \emph{not} resampled between updates or epochs.
We train the model by minimizing the loss function $\|y_{T}-s_G\|_F^2/6$ with $T=10$. For energy based training, the training loss is $E_G(u_T,b_T)$, where energy function $E_G$ is defined in \eqref{eq:toy-energy} and $T=10$. Our multi-attractor model is directly trained with this energy function without constraints on its weights, while contractive models are trained over the same energy function but applying the Lipschitz control technique described below.

During training, a batch contains
32 different training graphs and eight starts per graph (256 trajectories);
the final batch has 20 graphs (160 trajectories). With 500 graphs,
there are 16 optimizer updates per epoch. We train 1,200 epochs
(19,200 updates) with Adam, its default momentum parameters
$(0.9,0.999)$, no weight decay, and no gradient clipping.
The initial learning rate is $10^{-3}$, multiplied by $0.2$ after
epoch 600 and by $0.05$ relative to the initial rate after epoch 960.

\textbf{Lipschitz control.}
We enforce global contraction with respect to the recurrent state in
the infinity norm. For a node-state array $h$, this means
$\|h\|_\infty=\max_i\|h_i\|_\infty$.
For a weight matrix, we use the corresponding induced norm
$\|W\|_\infty=\max_r\sum_s|W_{rs}|$.

Let $\sigma_{\rm S}(r)=r/(1+e^{-r})$ denote SiLU. We use the fixed,
conservative Lipschitz bound $c_{\rm S}:=1+1/e$, since $ |\sigma_{\rm S}'(r)|
 \le 1+|r|e^{-|r|}
 \le c_{\rm S}$ where $r\in\mathbb R$.
Consequently, a two-affine-layer MLP
$g(z)=W_2\sigma_{\rm S}(W_1z+\beta_1)+\beta_2$
satisfies
\[
 \operatorname{Lip}_\infty(g)
 \le c_{\rm S}\|W_2\|_\infty\|W_1\|_\infty.
\]
For a message MLP $M_\ell(h_i,h_j,w_{ij})$, the graph is held fixed
when measuring state sensitivity. Thus its first-layer norm is taken
only over the \emph{combined} columns multiplying $h_i$ and $h_j$;
the edge-weight columns and all biases are excluded from this norm bound.
In particular, $L_{M_\ell}$ bounds the joint Lipschitz constant in
$[h_i;h_j]$, rather than a separate constant for each argument.

Concatenation uses a maximum in the infinity norm:
\[
 \|[h_i;h_j]-[\widetilde h_i;\widetilde h_j]\|_\infty
 =\max\{\|h_i-\widetilde h_i\|_\infty,
         \|h_j-\widetilde h_j\|_\infty\}.
\]
Therefore, summing the two incoming messages gives a bound
$2L_{M_\ell}$, with no additional factor for the two message arguments.
The update MLP receives $[h_i;m_i]$, giving the layer bound
$L_{U_\ell}\max\{1,2L_{M_\ell}\}$.
Writing $L_{\rm enc}$ and $L_O$ for the encoder and output MLP bounds,
the complete residual block satisfies
\[
 \operatorname{Lip}_y(N_\theta)
 \le L_O L_{\rm enc}
       \prod_{\ell=0}^1
       L_{U_\ell}\max\{1,2L_{M_\ell}\}.
\]
We impose the following constraints once after parameter initialization
and immediately after every Adam update, before the next forward pass.
Set $\rho=0.9$, $c=c_{\rm S}^{-1/2}$, and $\eta=1-10^{-5}$.
First clip $a$ to $[-\rho+0.05,\rho-0.05]$, then use this clipped value
to set the output-layer budget. The absolute row-sum caps and resulting
MLP bounds are
\[
\begin{array}{c|c|c|c}
 \text{MLP} & \text{First-layer cap} & \text{Second-layer cap}
             & \text{Lipschitz bound} \\
 \hline
 \operatorname{enc} & \eta c & \eta c & L_{\rm enc}\le\eta^2 \\
 M_\ell & \eta c/2 & \eta c & L_{M_\ell}\le\eta^2/2 \\
 U_\ell & \eta c & \eta c & L_{U_\ell}\le\eta^2 \\
 O & \eta c & \eta(\rho-|a|)c & L_O\le\eta^2(\rho-|a|)
\end{array}
\]
For $M_\ell$, the first-layer cap applies to the sum of absolute entries
across both state-input blocks together, not to each block separately.
All other caps apply to the full weight-matrix rows.
For a constrained row or row segment $w$ with cap $B>0$, we rescale
\[
 w\leftarrow\frac{w}{\max\{1,\|w\|_1/B\}}.
\]
Thus rows already satisfying the cap are unchanged; oversized rows
are scaled radially to the boundary. This projection is an optimizer-side
operation, performed without differentiating through it.

Since $2L_{M_\ell}\le\eta^2<1$, the two message-passing layers yield $\operatorname{Lip}_y(N_\theta) \le\eta^8(\rho-|a|)$.
The coordinatewise clipping map $P$ is nonexpansive in the infinity norm,
so the full update $\mathrm{GNN}_{\theta}(G,y)=P(a y+N_\theta(G,y))$ satisfies
\[
 \operatorname{Lip}_y(\mathrm{GNN}_{\theta})
 \le |a|+\eta^8(\rho-|a|)
 \le\rho.
\]
The constraints therefore certify contraction throughout training,
rather than only after training or at sampled states.
The multi-attractor and supervised
models have no such projection. Single-target supervision itself does
not guarantee a unique equilibrium.

\textbf{Additional visualizations.}
Figure~\ref{fig:toy-result} in the main text shows the evolution of
the source coordinates $b_t\in\mathbb R^3$.
Since the full recurrent state is $y_t=[u_t\ b_t]$,
Figure~\ref{fig:toy-u} complements this visualization by showing
the diffusion coordinates $u_t\in\mathbb R^3$.
The same pattern emerges: the multi-attractor trajectories
form three distinct clusters, whereas the supervised and contractive
controls each concentrate near a single point that remains visibly
separated from all three exact diffusion solutions.

\begin{figure}[t]
 \centering
 \includegraphics[width=0.99\linewidth]{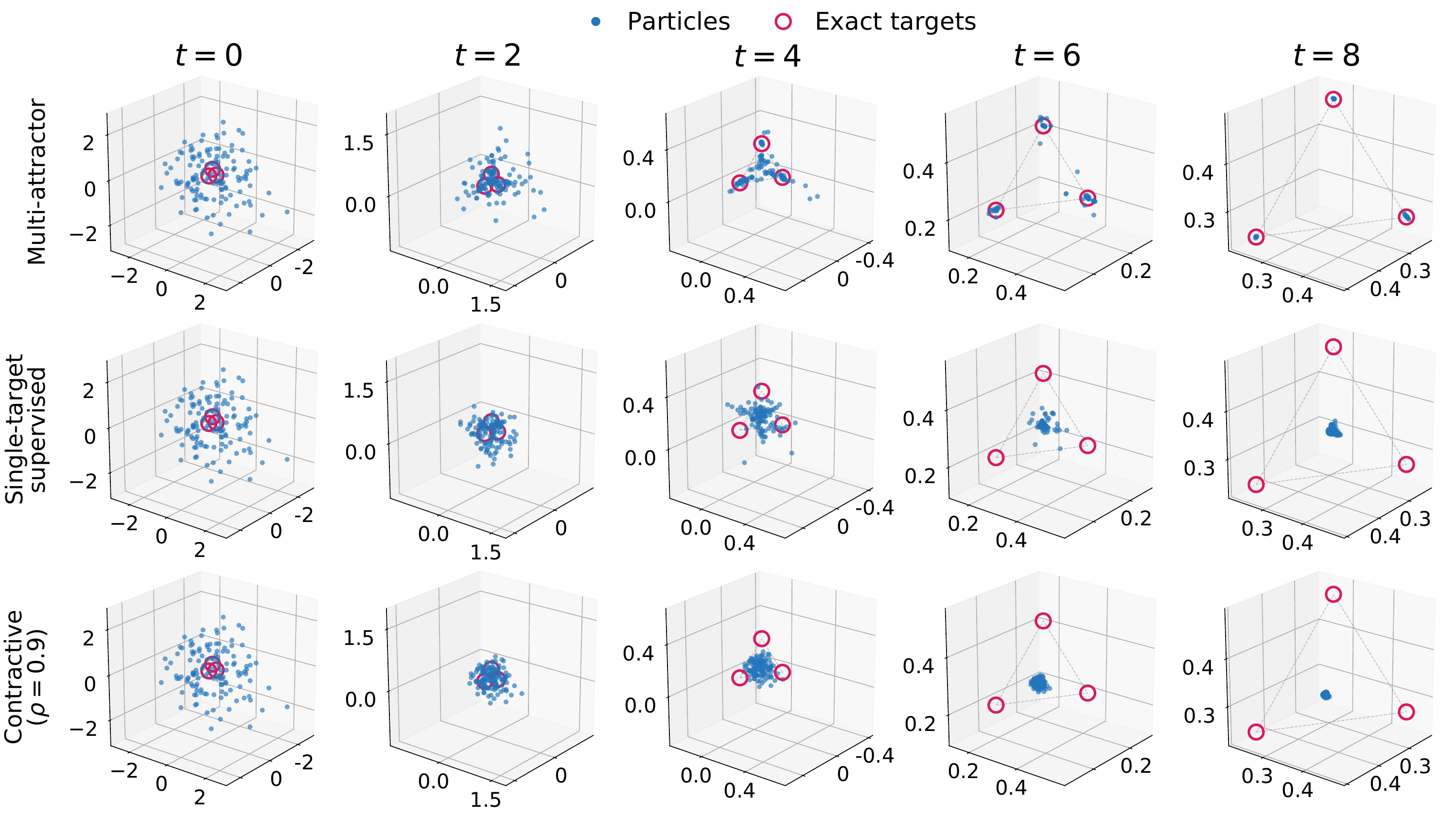}
 \caption{$u$-counterpart of Figure~\ref{fig:toy-result}:
 the same test graph, seed, 128 trajectories, methods, and time steps.
 Blue points show $[(u_t)_1,(u_t)_2,(u_t)_3]$; magenta circles mark the three true solutions.
 }
 \label{fig:toy-u}
\end{figure}

\section{Ising Experiments: Details and Additional Results}
\label{ising:appendix}

This section provides details and additional results regarding the Ising model experiments.

\textbf{Datasets.}
We generate lattice graphs by \texttt{networkx.triangular\_lattice\_graph} with open boundaries, following \cite{jore2026bifurcation}. Each undirected edge receives an independent uniform coupling from $\{-0.5,-1,-1.5\}$; both directed orientations are stored with equal weight. 
We generate graphs with controlled sizes to assess scalability. We consider four levels of sizes (see Table~\ref{ising:datasets}). Every size has 2,500 train-source and 500 test graphs; the final 250 train-source graphs are held out for validation.

For each graph used in supervised training, Gurobi \citep{gurobi} provides one optimal spin configuration with a reported MIP gap of zero. Enumerating the full solution set is computationally infeasible. Labeling extra-large graphs can take more than 5,000 seconds per instance, so we omit supervised methods at that scale. Certified test references are still used to evaluate the extra-large graphs.

\begin{table}[ht]
\centering
\small 
%\vspace{-3mm}
\caption{Graph sizes.}
\label{ising:datasets}
\begin{tabular}{lrrrr}
\toprule
Level & Nodes & Mean nodes & Edges & Mean edges \\
\midrule
Compact & 6--55 & 26.972 & 9--135 & 61.068 \\
Medium & 136--253 & 189.174 & 360--693 & 510.918 \\
Large & 561--990 & 770.580 & 1584--2838 & 2195.904 \\
Extra-large & 2016--3916 & 2965.198 & 5859--11484 & 8667.174 \\
\bottomrule
\end{tabular}
\end{table}

\textbf{Model Architecture.}
For the recurrent dynamics $y_{t+1}=\mathrm{GNN}_{\theta}(G,y_t)$,
we use a graph convolutional backbone augmented with physical node features.

At each step, we compute for each node $i$ (here $N(i)$ represents the neighbors of $i$):
\begin{align}
  f_i&=\sum_{j\in N(i)}J_{ij}y_j,\qquad
  a_i=\sum_{j\in N(i)}|J_{ij}|,\qquad
  q_i=f_i/\max(a_i,10^{-8}),\label{ising:fields}\\
  c_i^{(\beta)}&=\tanh\!\left[
  \frac{\sum_{j\in N(i)}\operatorname{atanh}
  (\operatorname{clip}(\tanh(\beta J_{ij})y_j,-1+10^{-6},1-10^{-6}))}
  {\max(a_i,10^{-8})}\right],\label{ising:cavity}
\end{align}
with $\beta\in\{0.5,1,2,4\}$, giving the seven features
$\phi_i=[f_i,q_i,a_i,c_i^{(0.5)},c_i^{(1)},c_i^{(2)},c_i^{(4)}]$.
Here, $f_i$ is the local interaction field, $a_i$ measures the total
incident coupling strength, and $q_i$ provides a normalized field.
The four $c_i^{(\beta)}$ features provide nonlinear summaries of
neighboring states at different coupling scales. Their functional form
is inspired by the cavity equations for Ising belief propagation
\citep{ricci2012bethe}, with node states replacing directed cavity
messages and an additional normalization by the incident coupling strength.
Clipping keeps the argument of $\operatorname{atanh}$ within $(-1,1)$,
including for Gaussian initial states.
These quantities are supplied as input features; they do not define
a separate belief-propagation solver or prescribe the update direction.

Given these features, the recurrent cell
$y^{+}=\mathrm{GNN}_{\theta}(G,y)$, with $\phi=\phi(G,y)$, is defined as follows.
The encoder is
\[
  h_i^{(0)}=\operatorname{LayerNorm}\!\left(
  W_{e,2}\sigma(W_{e,1}[y_i,\phi_i]+b_{e,1})+b_{e,2}\right),
\]
with $\sigma=\operatorname{SiLU}$, dimensions $8\to16\to16$, and
affine LayerNorm with epsilon $10^{-5}$. No zero-padding coordinates
or random identifiers are supplied.
The three weighted GraphConv layers are
\[
  h_i^{(\ell+1)}=\sigma\!\left(
  W_{r,\ell}h_i^{(\ell)}+
  W_{n,\ell}\sum_{j\in N(i)}J_{ij}h_j^{(\ell)}+b_{n,\ell}\right),
  \qquad \ell=0,1,2.
\]
They use sum aggregation, a biased neighbor transform, and a bias-free
root transform, without intermediate normalization or hidden residual
connections.
Independent $16\to16\to16$ SiLU MLPs produce
$u_i=U(h_i^{(3)})$ and $v_i=R(h_i^{(3)})$.
The readout combines local and graph-level information:
\begin{equation}
  \bar v=\frac1n\sum_jv_j,\qquad
  \Delta_i=w_o^\top[u_i,\bar v]+b_o,\qquad
  y_i^+=\tanh(y_i+\Delta_i).
  \label{ising:readout}
\end{equation}

\textbf{Training.}
For minibatch $\mathcal B$ with $N_{\mathcal B}=\sum_{G\in\mathcal B}|V_G|$, the proposed model uses the following loss in training: $(1/N_{\mathcal B})\sum_{\mathcal{B}}E_G(y_{T})$ with $T=10$. Training uses such node-weighted energy within each minibatch, whereas test energies are averaged equally across graphs. 
In particular, we use Adam for 200 epochs, LR $10^{-3}$, default betas $(0.9,0.999)$,
epsilon $10^{-8}$, and zero weight decay. There is no LR schedule,
clipping, dropout, rollback, or early stopping. Graph batches are
32/32/4/1 for compact/medium/large/extra-large; each update uses one fresh
trajectory per graph. The loader shuffles each epoch. The best validation checkpoint is saved after every completed epoch.

\textbf{Baselines.}
We compare single-target supervision, unique-equilibrium models,
untied feedforward networks, and learned solution distributions.
All methods are evaluated on the same test graphs.
Table~\ref{ising:configs} summarizes model capacities and requested
training budgets; resource limits and completed budgets are reported below.

\begin{itemize}[leftmargin=5mm]
    \item \textit{Single-target supervision.}
    We use the same architecture as our multi-attractor GNN,
    replacing the energy objective with MSE against one Gurobi-certified
    optimal spin configuration per training graph.
    Obtaining even one certified solution can exceed 5,000 seconds per
    graph at the extra-large scale, so we omit this supervised baseline
    at that scale.
    We allocate 2,000 training epochs and select the checkpoint with
    the lowest validation MSE. This extended budget, compared with
    200 epochs for our energy-trained model, accommodates the slower
    optimization observed under single-target supervision: validation
    MSE continued to improve beyond 200 epochs.
    \item \textit{Unique-equilibrium controls.}
    We adopt the implicit graph neural network
    (IGNN) of \citet{gu2020implicit}, using its released implementation
    for the implicit layers and enforcing a contraction bound of
    $\rho=0.9$ to ensure a unique equilibrium.
    To ensure fairness, we use the same physical-feature family and scalar
    spin-output interface as our model and train the adapters with the same
    energy objective.
    We evaluate hidden widths $H\in\{16,32,64\}$, including a
    capacity-comparable model and two larger variants, to assess
    whether increased capacity improves performance under the
    unique-equilibrium constraint.
    \item \textit{Untied feedforward depth controls.}
    We stack independently parameterized copies of our recurrent
    block to obtain
    $y_{t+1}=\mathrm{GNN}_{\theta_t}(G,y_t)$,
    with distinct parameters $\theta_t$ at each stage.
    We consider 1, 3, and 10 blocks, corresponding to 3, 9, and 30
    message-passing layers.
    These models retain the block architecture, physical features,
    Gaussian node initialization, and terminal energy objective
    of our model. They test whether increasing feedforward depth
    and parameter count can match recurrent performance under the
    reported budgets.
    \item \textit{Distributional and diffusion-based models.}
    We include a mean-field GNN, an annealed Bernoulli GNN,
    DIFUSCO, and Variational Annealing on Graphs (VAG-CO; denoted VAG).
    The \emph{mean-field GNN} uses our width-16, three-layer
backbone to map a fresh Gaussian initialization
$y_0\sim\mathcal{N}(0,I_n)$ to predicted spin means
$y_1\in(-1,1)^n$ in one forward pass.
It defines conditionally independent spins with
$\Pr(s_i=+1\mid G,y_0)=(1+y_{1,i})/2$
and is trained by differentiating their analytic expected
energy,
$\mathbb{E}[E_G(s)\mid G,y_0]=E_G(y_1)$,
following the training of
\citet{karalias2020erdos}.
    The otherwise identical \emph{annealed Bernoulli GNN}
    additionally rewards conditional Bernoulli entropy, with its
    coefficient decreasing linearly from $0.5$ to zero over
    200 training epochs, following the annealed variational
    training of \citet{sun2022annealed}.
    \emph{DIFUSCO} \citep{sun2023difusco} is adapted to Ising
    using binary categorical diffusion and a denoiser conditioned
    on noisy spins, diffusion time, and graph features.
    It is trained with clean-label cross entropy against one
    Gurobi-certified spin configuration and generates candidates
    by reverse diffusion from uniform random bits.
    \emph{VAG} \citep{sanokowski2023variational} generates
    spins autoregressively in a randomized breadth-first order,
    jointly sampling five spins at each decoding step.
    It is trained with an energy-minus-entropy objective using
    score-function gradients and a leave-one-out baseline, with
    temperature annealed during training.
    For both DIFUSCO and VAG, the \emph{matched} setting replaces
their GNN backbone with our width-16, three-layer GraphConv backbone
and local/global-mean readout, while retaining method-specific inputs
and output heads.
DIFUSCO additionally receives diffusion-time and structural features
and predicts binary denoising probabilities.
VAG additionally receives partial spin assignments, assignment
masks, and decoding priorities, and predicts a distribution over
the $2^5$ assignments of each five-spin block.
These additions yield 3,250 and 5,904 parameters, respectively,
compared with 3,153 for our model.
The \emph{large} settings adopt the model-size configurations from
their original GitHub repositories: width 256 with 12 layers for
DIFUSCO and width 120 with three layers for VAG;
    see Table~\ref{ising:configs}.
\end{itemize}

\begin{table}[ht]
\centering
\caption{Learned configurations. $H$: hidden width; $L$: total learned message passing depth.}
%\vspace{2mm}
\label{ising:configs}
\small 
\setlength{\tabcolsep}{3pt}
\begin{tabular}{lrrrrr}
\toprule
Model & $H$ & $L$ & Parameters & LR & Epochs \\
\midrule
Ours & 16 & 3 & 3,153 & $10^{-3}$ & 200 \\
\midrule
Single-target Supervision & 16 & 3 & 3,153 & $10^{-3}$ & 2000 \\
IGNN adapter (H16) & 16 & 3 & 2,851 & $10^{-3}$ & 200  \\
IGNN adapter (H32) & 32 & 3 & 10,819 & $10^{-3}$ & 200  \\
IGNN adapter (H64) & 64 & 3 & 42,115 & $10^{-3}$ & 200  \\
\midrule
Feedforward ($\times 1$) & 16 & 3 & 3,153 & $10^{-3}$ & 200 \\
Feedforward ($\times 3$) & 16 & 9 & 9,459 & $10^{-3}$ & 200  \\
Feedforward ($\times 10$) & 16 & 30 & 31,530 & $10^{-3}$ & 200 \\
\midrule
Mean-field GNN & 16 & 3 & 3,153 & $10^{-3}$ & 200 \\
Annealed Bernoulli GNN & 16 & 3 & 3,153 & $10^{-3}$ & 200 \\
DIFUSCO (matched) & 16 & 3 & 3,250 & $10^{-3}$ & 200 \\
DIFUSCO (large) & 256 & 12 & 1,909,762 & $2\cdot10^{-4}$ & 100 \\
VAG (matched) & 16 & 3 & 5,904 & $10^{-3}$ & 200 \\
VAG (large) & 120 & 3 & 281,192 & $5\cdot10^{-4}$ & 30 \\
\bottomrule
\end{tabular}
\end{table}

\textbf{Resource limits.}
Structural models, matched DIFUSCO, and energy diffusion use training batches 32/32/4/1 by size; large DIFUSCO uses 8/8/1/1, and VAG uses 32/1/1/1.
Training samples for each graph are processed in parallel, but VAG tokens
are sequential. Large VAG is OOM on large and extra-large graphs even
alone on a 24-GB GPU at graph batch one.
Larger-graph training is capped at 30 accumulated wall-clock hours,
including validation and retries; capped rows use the best validated
checkpoint. Matched VAG completes 68/17/3 of 200 epochs at
medium/large/extra-large sizes. 

\textbf{Metrics.}
We evaluate $K=500$ test graphs, generating $M=20$ candidate
spin configurations per graph.
Here, $g\in\{1,\ldots,K\}$ indexes test graphs and
$m\in\{1,\ldots,M\}$ indexes candidates.
Let $s_{g,m}\in\{-1,+1\}^{n_g}$ denote the $m$th candidate
for graph $G_g$, where $n_g$ is its number of nodes.
For models with continuous outputs, we set
$s_{g,m}=\operatorname{sign}_+(y_{g,m,T})$,
where $T$ is the evaluation horizon and
$\operatorname{sign}_+(0)=+1$; discrete generators use their
native spin outputs.

We measure energy per node,
$e_{g,m}=E_{G_g}(s_{g,m})/n_g$.
Let $e_g^*$ be the Gurobi-certified optimal energy per node
for $G_g$. The relative optimality gap is
\[
\gamma_{g,m}
=\max\!\left(
0,\frac{e_{g,m}-e_g^*}{\max(|e_g^*|,\epsilon)}
\right),
\]
where $\epsilon>0$ safeguards the denominator.
We report three quality metrics:
\[
\bar e
=\frac{1}{KM}\sum_{g=1}^{K}\sum_{m=1}^{M}e_{g,m},
~
\mathrm{Best}_{20}
=\frac{1}{K}\sum_{g=1}^{K}\min_{1\le m\le M}e_{g,m},
~
\mathrm{Hit10}
=\frac{100}{KM}\sum_{g=1}^{K}\sum_{m=1}^{M}
\mathbf{1}\{\gamma_{g,m}\le 0.10\}.
\]
Thus, $\bar e$ averages energy over all candidates,
$\mathrm{Best}_{20}$ averages the best candidate energy
within each graph, and $\mathrm{Hit10}$ is the percentage
of candidates within 10\% of the certified optimum.
All graphs receive equal weight.
For iterative models, candidates are evaluated at the terminal
step; we do not select intermediate states along a trajectory.

For the experiment in Table \ref{ising:main-table}, we measure diversity among
optimal spin configurations, allowing a numerical tolerance of $10^{-6}$:
\[
 D_M(g)=\bigl|\{s_{g,m}:1\le m\le M,\ \gamma_{g,m}\le10^{-6}\}\bigr|,
 \qquad
 G_M=\frac{100}{K}\sum_{g=1}^{K}\mathbf1\{D_M(g)\ge2\}.
\]
Here, $D_M(g)$ counts the distinct optimal configurations found
among $M$ candidates for graph $G_g$.
We report its graph-average $D_M=K^{-1}\sum_{g=1}^{K}D_M(g)$.
The percentage $G_M$ measures how often the model finds at least
two different optimal configurations for the same graph,
indicating its ability to recover multiple solutions.

For larger graphs, we relax the quality threshold and count
distinct configurations within 10\% of the certified optimum
among 20 candidates:
\[
 V_{20}^{10\%}=\frac1K\sum_{g=1}^{K}
 \bigl|\{s_{g,m}:1\le m\le20,\
 \gamma_{g,m}\le0.10+10^{-6}\}\bigr|.
\]
For example, $V_{20}^{10\%}=20$ means that every graph has
20 distinct candidates meeting this threshold.
Repeated configurations count only once, while global spin flips
are counted as distinct configurations.
These metrics measure the diversity of sufficiently accurate
outputs; they do not require the underlying trajectories to have
converged.

Recurrence is numerically converged at $T$ when
$\sqrt{n^{-1}\|y_T-y_{T-1}\|_2^2}<10^{-4}$; $C_T$ is the percentage
meeting this criterion. Among nonconverged trajectories, periods
$p=2,\ldots,10$ are tested using
\[
 \left[\frac1{pn}\sum_{k=0}^{p-1}
 \|y_{T-k}-y_{T-k-p}\|_2^2\right]^{1/2}<10^{-4},
\]
assigning the smallest qualifying period. Remaining trajectories are
unclassified at the tested horizon, not necessarily divergent or chaotic.
Convergence is not applicable to feedforward or
time-inhomogeneous generative models.

\textbf{Longer test-time trajectories.}
Table~\ref{ising:main-table} reports a convergence rate slightly below 100\% at $t=200$. To determine whether the remaining trajectories exhibit
persistent oscillations or simply require more iterations, we
extend the same 30,000 ($500 \times 20 \times 3$) trajectories across three training seeds
to $t=2000$, without changing the learned weights.
Only 11 trajectories fail the numerical fixed-point criterion
at $t=200$; all satisfy it by the measured horizon $t=500$
and remain numerically stationary at subsequent reported horizons
through $t=2000$ (Table~\ref{tab:ising_test_time}).
The extended checks also confirm sustained numerical stationarity
at the final horizon, supporting the interpretation that these
exceptions reflect longer transients.

Although training uses only $T=10$ iterations, the learned update
remains stable and settles to numerical fixed points when applied
for substantially more iterations than seen during training.
These are numerical observations rather than asymptotic guarantees.
The longer runs serve only to examine convergence:
all reported energy, Hit10, $D_{20}$, and $G_{20}$ metrics
use the terminal outputs at $t=200$.
Thus, the reported solution quality and diversity are already
obtained by stopping at $t=200$, without waiting for every
trajectory to satisfy the convergence criterion.

\begin{table}[ht]
\centering
%\vspace{-2mm}
\caption{Ising test-time dynamics. Percentages are means $\pm$ population standard deviations across three training seeds.}
%\vspace{2mm}
\label{tab:ising_test_time}
\begin{tabular}{rrrr}
\toprule
$t$ & $C_t$ (\%) & Period 2--10 (\%) & Unclassified (\%) \\
\midrule
5 & $3.3733 \pm 0.6370$ & --- & --- \\
10 & $32.5833 \pm 2.0956$ & --- & --- \\
20 & $77.2733 \pm 2.0129$ & $0.0000 \pm 0.0000$ & $22.7267 \pm 2.0129$ \\
50 & $97.7400 \pm 0.2617$ & $0.0000 \pm 0.0000$ & $2.2600 \pm 0.2617$ \\
100 & $99.7100 \pm 0.0638$ & $0.0000 \pm 0.0000$ & $0.2900 \pm 0.0638$ \\
200 & $99.9633 \pm 0.0170$ & $0.0000 \pm 0.0000$ & $0.0367 \pm 0.0170$ \\
500 & $100.0000 \pm 0.0000$ & $0.0000 \pm 0.0000$ & $0.0000 \pm 0.0000$ \\
1000 & $100.0000 \pm 0.0000$ & $0.0000 \pm 0.0000$ & $0.0000 \pm 0.0000$ \\
2000 & $100.0000 \pm 0.0000$ & $0.0000 \pm 0.0000$ & $0.0000 \pm 0.0000$ \\
\bottomrule
\end{tabular}
\end{table}

\textbf{Larger Graphs.}
Table~\ref{ising:quality-scaling} evaluates scalability on larger graphs
while retaining the same width-16
architecture and 3,153 trainable parameters.
Our model achieves the lowest mean and best-of-20 energies,
the highest Hit10, and the largest number of distinct valid
configurations among the evaluated methods at every scale.
Its Hit10 reaches $96.88\%$, $99.96\%$, and $100\%$ on medium,
large, and extra-large graphs, respectively, with an average
of $19.376$, $19.992$, and $20.000$ distinct configurations
within 10\% of optimum among 20 candidates.
Thus, a small shared recurrent model continues to produce
both accurate and diverse solutions as graph size increases.
This is consistent with the set-valued representation established
by our theory: a single shared update can accommodate multiple
solutions for the same input graph.
Here, the empirical evidence concerns finite-horizon outputs,
rather than convergence to distinct optimal attractors.

Increasing baseline capacity also helps in some cases, but does
not consistently recover this combination of quality and diversity.
The 3-block feedforward model substantially improves over
one-shot prediction, and large DIFUSCO is competitive on medium
and large graphs, yet both yield worse mean and best-of-20
energies than our model.
Further increasing feedforward depth to 10 untied blocks reduces
Hit10 and produces the same rounded configuration across all
20 initializations for each tested graph.
These results support recurrent weight sharing as an effective
way to scale solution generation without increasing parameter
count, while the time-capped and OOM entries identify practical
resource limits of the evaluated baseline configurations.

\begin{table}
\centering
\footnotesize
\caption{
Scaling to larger Ising graphs with fixed model capacity.
The proposed model retains its width-16 architecture and
3,153 parameters across medium,
large, and extra-large
datasets, each containing 500 test graphs.
Metrics use 20 candidates per graph:
$\bar e$ and Best$_{20}$ are mean and best-of-20 energies per node;
Hit10 is the percentage of candidates within 10\% of the
certified optimum; and $V_{20}^{10\%}$ is the mean number
of distinct configurations meeting this threshold.
$^{\dagger}$ marks training capped at 30 hours;
OOM denotes a memory failure.
$^{\ddagger}$ marks supervised methods omitted because of
training-labeling costs.
}
\label{ising:quality-scaling}
\label{ising:quality-l2}
\label{ising:quality-l4}
\label{ising:quality-l6}
\begin{tabular}{lrrrr}
\hline
Method & $\bar e\downarrow$ & Best$_{20}\downarrow$ & Hit10 (\%) $\uparrow$ & $V_{20}^{10\%}\uparrow$ \\
\hline
\multicolumn{5}{c}{{Medium}} \\
\hline
Ours & $-1.2431$ & $-1.2862$ & $96.88$ & $19.376$ \\
Single-target Supervision & $1.5248$ & $1.2913$ & $0.00$ & $0.000$ \\
IGNN adapter (H16) & $0.5713$ & $0.5713$ & $0.00$ & $0.000$ \\
IGNN adapter (H32) & $-0.7917$ & $-0.7917$ & $0.00$ & $0.000$ \\
IGNN adapter (H64) & $0.0052$ & $0.0052$ & $0.00$ & $0.000$ \\
Feedforward ($\times 1$) & $-1.0970$ & $-1.1792$ & $1.60$ & $0.320$ \\
Feedforward ($\times 3$) & $-1.1990$ & $-1.2590$ & $58.78$ & $11.756$ \\
Feedforward ($\times 10$) & $-1.1966$ & $-1.1966$ & $56.60$ & $0.566$ \\
Mean-field GNN & $-1.0963$ & $-1.1781$ & $1.61$ & $0.322$ \\
Annealed Bernoulli GNN & $-1.1002$ & $-1.1811$ & $1.73$ & $0.346$ \\
DIFUSCO (matched) & $-1.0864$ & $-1.2224$ & $10.49$ & $2.098$ \\
DIFUSCO (large) & $-1.1934$ & $-1.2653$ & $55.25$ & $11.050$ \\
VAG (matched)$^{\dagger}$ & $-0.9949$ & $-1.0899$ & $0.01$ & $0.002$ \\
VAG (large) & $0.0306$ & $-0.2003$ & $0.00$ & $0.000$ \\
\hline
\multicolumn{5}{c}{{Large}} \\
\hline
Ours & $-1.2789$ & $-1.3023$ & $99.96$ & $19.992$ \\
Single-target Supervision$^{\dagger}$ & $0.4071$ & $-0.2762$ & $0.00$ & $0.000$ \\
IGNN adapter (H16) & $0.0864$ & $0.0864$ & $0.00$ & $0.000$ \\
IGNN adapter (H32) & $-0.9817$ & $-0.9817$ & $0.00$ & $0.000$ \\
IGNN adapter (H64) & $0.1216$ & $0.1216$ & $0.00$ & $0.000$ \\
Feedforward ($\times 1$) & $-1.1296$ & $-1.1722$ & $0.00$ & $0.000$ \\
Feedforward ($\times 3$) & $-1.2444$ & $-1.2734$ & $75.85$ & $15.170$ \\
Feedforward ($\times 10$) & $-1.2289$ & $-1.2289$ & $41.00$ & $0.410$ \\
Mean-field GNN & $-1.1287$ & $-1.1720$ & $0.00$ & $0.000$ \\
Annealed Bernoulli GNN & $-1.1352$ & $-1.1778$ & $0.00$ & $0.000$ \\
DIFUSCO (matched) & $-1.1285$ & $-1.2057$ & $0.59$ & $0.118$ \\
DIFUSCO (large) & $-1.2514$ & $-1.2876$ & $80.26$ & $16.052$ \\
VAG (matched)$^{\dagger}$ & $-0.9778$ & $-1.0318$ & $0.00$ & $0.000$ \\
VAG (large) & OOM & OOM & OOM & OOM \\
\hline
\multicolumn{5}{c}{{Extra-large}} \\
\hline
Ours & $-1.3111$ & $-1.3227$ & $100.00$ & $20.000$ \\
Single-target Supervision$^{\ddagger}$ & -- & -- & -- & -- \\
IGNN adapter (H16) & $0.2937$ & $0.2937$ & $0.00$ & $0.000$ \\
IGNN adapter (H32) & $-0.8143$ & $-0.8143$ & $0.00$ & $0.000$ \\
IGNN adapter (H64) & $2.9178$ & $2.9178$ & $0.00$ & $0.000$ \\
Feedforward ($\times 1$) & $-1.1443$ & $-1.1665$ & $0.00$ & $0.000$ \\
Feedforward ($\times 3$) & $-1.2753$ & $-1.2912$ & $98.85$ & $19.770$ \\
Feedforward ($\times 10$) & $-1.2390$ & $-1.2390$ & $5.00$ & $0.050$ \\
Mean-field GNN & $-1.1435$ & $-1.1658$ & $0.00$ & $0.000$ \\
Annealed Bernoulli GNN & $-1.1549$ & $-1.1767$ & $0.00$ & $0.000$ \\
DIFUSCO (matched)$^{\ddagger}$ & -- & -- & -- & -- \\
DIFUSCO (large)$^{\ddagger}$ & -- & -- & -- & -- \\
VAG (matched)$^{\dagger}$ & $0.0980$ & $0.0355$ & $0.00$ & $0.000$ \\
VAG (large) & OOM & OOM & OOM & OOM \\
\hline
\end{tabular}\end{table}

\section{Structural module detection in protein graphs: details and additional results}
\label{app:proteins}

This appendix gives the experiment details for Section~\ref{sec:proteins}.

\textbf{Dataset.}
We download the PROTEINS dataset through PyTorch Geometric's
\texttt{TUDataset}.  For each graph
we remove self-loops, collapse duplicate directed entries to one undirected
edge, and use both directions only for message passing.  The original graph
classification label is retained for audit purposes but is never supplied to
the models.  The task graph contains 1,113 graphs, 43,471 nodes, and 81,044
undirected objective edges.

We compute exact graph-isomorphism groups, randomly order the groups with split
seed 2027, and allocate complete groups to 80/10/10 splits.  This
gives 891 training, 111 validation, and 111 test graphs.  No exact topology is
shared across validation and test or with another split; duplicate topologies
inside training remain grouped.  Feature-normalization statistics are fit on the 891 training graphs only; standardized added topology features are clipped to $[-8,8]$.

\begin{table}[h]
\centering
%\vspace{-3mm}
\small
\caption{Graph statistics.  Entries for nodes and edges are
min/mean/median/max.}
%\vspace{2mm}
\label{tab:proteins-data-stats}
\begin{tabular}{lccc}
\toprule
Split & Graphs & Nodes & Undirected edges \\
\midrule
Train & 891 & 4 / 36.53 / 24 / 620 & 5 / 67.98 / 46 / 1049 \\
Validation & 111 & 5 / 52.56 / 31 / 504 & 6 / 98.33 / 61 / 894 \\
Test & 111 & 8 / 45.84 / 32 / 285 & 16 / 86.15 / 57 / 424 \\
\bottomrule
\end{tabular}
\end{table}

For exact references, binary $x_{ik}$ assign node $i$ to group $k$,
$\sum_kx_{ik}=1$.  Auxiliary $w_{ijk}$ linearize
$x_{ik}x_{jk}$ on each observed edge, and
$v_k=\sum_i d_ix_{ik}$.  Gurobi maximizes
\begin{equation}
 4m\sum_{\{i,j\}\in E}\sum_k w_{ijk}-\sum_kv_k^2,
 \label{eq:proteins-gurobi-objective}
\end{equation}
whose value divided by $4m^2$ is $Q_G$.  All 1,113 instances terminate with Gurobi's optimal status. One certified partition per graph is retained as the supervised target.

\textbf{Node features.}
All neural models in this experiment receive the same node and edge features.
We augment the original node attributes with standard structural descriptors
drawn from local degree profiles~\citep{cai2018simple}, random-walk
encodings~\citep{dwivedi2022graph}, and graph statistics documented in
NetworkX~\citep{hagberg2007exploring}.
The input $s_i\in\mathbb R^{19}$ comprises five original-attribute/degree
coordinates and fourteen additional structural coordinates:
\[
 s_i=[\,\widetilde a_i,\operatorname{onehot}_3(\ell_i),\widehat d_i,
       \mathcal Z_{\rm node}(\psi_i)\,].
\]
Here $a_i$ is the original continuous node attribute, $\ell_i$ is its original
three-category node label, and $\widehat d_i=d_i/\max(1,\max_jd_j)$.
The standardized attribute $\widetilde a_i$ uses the training-node mean and
sample standard deviation (denominator floored at $10^{-8}$).
The original node labels are attributes, not target module assignments.

Let $N(i)$ be the neighbor set, $n=|V|$, $W_{ij}=A_{ij}/\max(d_i,1)$,
$\tau_i$ the triangle count at $i$, $\kappa_i$ its core number, and
$\pi_i$ its PageRank.
The fourteen raw structural coordinates are
\begin{align}
\psi_i=\big[&
 \log(1+d_i),(W^2)_{ii},(W^3)_{ii},(W^4)_{ii},(W^8)_{ii},
 c_i,\log(1+\tau_i),\frac{\kappa_i}{\max(1,\max_j\kappa_j)},n\pi_i,
 \nonumber\\[-1mm]
 &\log(1+\mu_i),\log(1+\sigma_i),
 \frac{|N_2(i)|}{\max(n-1,1)},\frac{|\mathcal C(i)|}{n},
 \frac{b_i}{\max(d_i,1)}\big].
 \label{eq:proteins-rich19}
\end{align}
The four diagonal powers are random-walk return probabilities at lengths
$2,3,4,8$.
$c_i=2\tau_i/[d_i(d_i-1)]$ is the local clustering coefficient, set to zero
when $d_i<2$. The neighbor-degree mean and population standard deviation
are $\mu_i$ and $\sigma_i$ (both zero for an empty neighborhood).
$N_2(i)$ contains nodes reachable in two hops after excluding $i$ and its
one-hop neighbors; $\mathcal C(i)$ is the connected component containing $i$;
$b_i$ counts incident bridges. Core numbers use the undirected simple graph.
PageRank is approximated by 80 iterations from a uniform initialization,
with damping $.85$, uniform teleportation, and uniform redistribution of
dangling-node mass.
The operator $\mathcal Z_{\rm node}$ standardizes each added coordinate
using all training nodes, floors population standard deviations at $10^{-6}$,
and clips the result to $[-8,8]$.

\textbf{Edge features.}
We combine endpoint attributes with standard neighborhood-similarity and
connectivity descriptors. For a directed message edge $j\to i$, set
$C_{ji}=N(j)\cap N(i)$ and $U_{ji}=N(j)\cup N(i)$.
The thirteen raw coordinates are
\[
\begin{aligned}
\chi_{ji}=\big[&
 \log(1+d_j),\log(1+d_i),\widehat d_j,\widehat d_i,
 \mathbf1[\ell_j=\ell_i],|\widetilde a_j-\widetilde a_i|,
 \log(1+|C_{ji}|),\frac{|C_{ji}|}{\max(|U_{ji}|,1)},\\
 &\sum_{w\in C_{ji}}\frac1{\log(\max(d_w,2)+10^{-12})},
 \sum_{w\in C_{ji}}\frac1{\max(d_w,1)},
 \mathbf1[\{j,i\}\text{ is a bridge}], \\
 & \frac1{\sqrt{\max(d_jd_i,1)}},
 \frac{|C_{ji}|}{\max(\min(d_j,d_i),1)}\big].
\end{aligned}
\]
The neighborhood-similarity terms include common-neighbor counts and
Jaccard similarity~\citep{liben2003link},
the Adamic--Adar index~\citep{adamic2003friends}, and the resource-allocation
and hub-promoted indices~\citep{zhou2009predicting}.
The last coordinate is the hub-promoted index.
Standard implementations of Jaccard, Adamic--Adar, and resource-allocation
scores are also available in NetworkX~\citep{hagberg2007exploring}.
We use $e_{ji}=\mathcal Z_{\rm edge}(\chi_{ji})$, with training-edge
population moments, standard-deviation floor $10^{-6}$, and clipping to
$[-8,8]$. Both edge orientations are included in message passing.
All added features are deterministic functions of graph topology and
original attributes. They contain no node identifiers, target partitions,
or reference optimizer outputs, and do not depend on the current
assignment state.

\textbf{State and initialization.}
The recurrent variable is the soft partition $P_t\in\mathbb R^{n\times 4}$.
For each trajectory we independently draw node-wise and graph-wise Gaussian
logits and initialize
\begin{equation}
 P_{0,i}=\operatorname{softmax}\!\left(
 \frac{\epsilon_i+b_G}{\sqrt2}\right),\qquad
 \epsilon_i,b_G\stackrel{\rm i.i.d.}{\sim}\mathcal N(0,I_4).
 \label{eq:proteins-initialization}
\end{equation}
The same $b_G$ is shared by the nodes of a graph within a trajectory, but is
redrawn for each trajectory. 

\textbf{Message-passing cell.}
We use a width-32, three-layer PNA-style cell \citep{corso2020principal}.
In particular, let $\sigma=\operatorname{ReLU}$, at each time step the hidden embeddings are recomputed as
\[
 h_i^{(0)}=\operatorname{LayerNorm}
 \big(W_{e,2}\sigma(W_{e,1}[s_i,P_{t,i}]+b_{e,1})+b_{e,2}\big),
\]
with encoder dimensions $23\to32\to32$.
The state-pair edge vector is
\begin{equation}
 r_{ji}(P_t)=[P_{t,j},P_{t,i},P_{t,j}\odot P_{t,i},
 |P_{t,j}-P_{t,i}|]\in\mathbb R^{16}.
\end{equation}
For $\ell=0,1,2$, compute
\begin{align}
 m_{ji}^{(\ell)}
 &=M_\ell([h_j^{(\ell)},h_i^{(\ell)},e_{ji},r_{ji}(P_t)]),\nonumber\\
 a_i^{(\ell)}
 &=[\,\operatorname{mean}_{j\in N(i)}m_{ji}^{(\ell)},
       \operatorname{std}_{j\in N(i)}m_{ji}^{(\ell)},
       \operatorname{max}_{j\in N(i)}m_{ji}^{(\ell)},
       \operatorname{sum}_{j\in N(i)}m_{ji}^{(\ell)}\,],\nonumber\\
 h_i^{(\ell+1)}
 &=\operatorname{LayerNorm}\big(
 h_i^{(\ell)}+U_\ell([h_i^{(\ell)},a_i^{(\ell)},P_{t,i}])\big).
 \label{eq:proteins-mp}
\end{align}
$M_\ell$ is a $93\to32\to32$ MLP with ReLU after both affine maps;
$U_\ell$ is a $164\to32\to32$ MLP with ReLU only between affine maps.
Aggregations are elementwise.

\textbf{Global readout and update.}
The graph bottleneck and correction head are
\begin{equation}
 g_G=\frac1n\sum_iR(h_i^{(3)})\in\mathbb R^4,\qquad
 \Delta_{\theta,i}=O([h_i^{(3)},g_G]),
\end{equation}
where $R$ is a $32\to32\to4$ MLP and $O$ is a $36\to32\to4$ MLP,
each with ReLU between its two affine maps. We update
\begin{equation}
 P_{t+1}=\operatorname{softmax}\!\left[
 \log(P_t\vee10^{-6})+\Delta_\theta(G,P_t)\right],
 \label{eq:proteins-transition}
\end{equation}
with row-wise softmax and coordinate-wise clipping before the logarithm.
The three MP layers have distinct parameters within a cell; the entire cell
is tied across time. There are 35,784 trainable parameters:
1,888 in the encoder, $3\times10,464$ in message passing, 1,188 in the
global map, and 1,316 in the output head. The full model sizes are provided in Table \ref{tab:proteins-learned-configurations}.

\textbf{Training and objectives.}
For an undirected graph $G=(V,E)$ with $m=|E|$ edges and node degrees
$d_i$, let $z_i\in\{1,\ldots,K\}$ denote the module of node $i$.
At resolution $\gamma=1$, we minimize the negative modularity:
\[
 E_G(z)=-Q_G(z)
 =\sum_{k=1}^{K}
 \left(\frac{\sum_i d_i\mathbf1[z_i=k]}{2m}\right)^2
 -\frac1m\sum_{\{i,j\}\in E}\mathbf1[z_i=z_j].
\]
The edge term rewards placing connected nodes in the same module;
alone, it is minimized by assigning all nodes to one module.
The degree-volume term accounts for the within-module connectivity
expected from node degrees under the modularity null model.
Combining the two terms therefore favors modules with more internal
connectivity than this baseline predicts.

For a soft partition $P$, where each $P_i$ is a probability vector
over the $K$ modules, we define the differentiable energy
\begin{equation}
 e_G(P)=
 -\frac1m\sum_{\{i,j\}\in E}\langle P_i,P_j\rangle
 +\frac{1}{4m^2}
 \left[
 \left\|\sum_i d_iP_i\right\|_2^2
 -\sum_i d_i^2\|P_i\|_2^2
 \right].
 \label{eq:proteins-soft-energy}
\end{equation}
Specifically, for independent assignments
$Z_i\sim\mathrm{Categorical}(P_i)$,
\[
 e_G(P)=\mathbb E[E_G(Z)]-\frac{1}{4m^2}\sum_i d_i^2.
\]
Thus, the soft objective equals the expected discrete energy up to
a graph-dependent constant.

During training, each update samples ten training graphs with replacement
and four independent initial soft partitions $P_0$ per graph.
We unroll $T=5$ tied cells and minimize $\mathbb E[e_G(P_T)]$.
We use Adam with learning rate $3\times10^{-4}$, $\beta_1=.9$,
$\beta_2=.999$, $\epsilon=10^{-8}$, zero weight decay, global
gradient-norm clipping at 1, 3,500 optimizer updates, batch size
ten base graphs, and no learning-rate decay.
The final update-3,500 checkpoint is used; there is no best-test
or best-validation checkpoint substitution.
Our model and all baselines in
Table~\ref{tab:proteins-main} use training seeds 0, 1, and 2.
The unsupervised model never receives the Gurobi labels.

\textbf{Baselines.}
Table~\ref{tab:proteins-learned-configurations} summarizes the learned
configurations used in the reported comparisons. In this PROTEINS experiment, we use the same baseline groups as in the Ising experiment:
\begin{itemize}[leftmargin=5mm]
    \item \textit{Single-target supervision.} We use the same architecture as our multi-attractor GNN, replacing the energy objective $e_G(P)$ with cross entropy against one Gurobi-optimal partition per graph.
    \item \textit{Unique-equilibrium models.} As in the Ising experiments, we adopt the IGNN with hidden widths $H\in\{16,32,64\}$ and enforce a contraction bound of $\rho=0.9$. To train IGNN, we use the same unsupervised loss function $\mathbb E[e_G(P_T)]$. 
    Again, the input features are the same as in the multi-attractor setting.
    \item \textit{Untied feedforward depth controls.} The feedforward controls replace the same three-layer rich-PNA cell by $B\in\{1,2,3,4,5\}$ independent copies. They receive the same random $P_0$, losses, four training trajectories, optimizer, and 3,500-update budget.
    \item \textit{Distributional and diffusion-based models.}
We include the same distributional models as in the Ising experiments: a mean-field GNN, an annealed Bernoulli GNN,
    DIFUSCO, and Variational Annealing on Graphs (VAG-CO; denoted VAG).
    They are still applicable here. ``Matched" setting uses the same model backbone, learning rate, and training updates. ``Large" setting uses the upstream original repository's configurations. Details are in Table \ref{tab:proteins-learned-configurations}.
\end{itemize}

\begin{table}[ht]
\centering
%\vspace{-3mm}
\caption{Learned configurations on PROTEINS. $H$: hidden width; $L$: total untied depth.}
%\vspace{2mm}
\label{tab:proteins-learned-configurations}
\small 
\setlength{\tabcolsep}{3.5pt}
\begin{tabular}{lrrrrr}
\toprule
Model & $H$ & $L$ & Parameters & LR & Updates  \\
\midrule
Ours & 32 & 3 & 35,784 & $3\cdot10^{-4}$ & 3,500  \\
\midrule
Single-target Supervision & 32 & 3 & 35,784 & $3\cdot10^{-4}$ & 3,500  \\
IGNN (H16) & 16 & 3 & 11,340 & $3\cdot10^{-4}$ & 3,500  \\
IGNN (H32) & 32 & 3 & 39,564 & $3\cdot10^{-4}$ & 3,500  \\
IGNN (H64) & 64 & 3 & 146,700 & $3\cdot10^{-4}$ & 3,500  \\
\midrule
Feedforward ($\times 1$) & 32 & 3 & 35,784 & $3\cdot10^{-4}$ & 3,500  \\
Feedforward ($\times 2$) & 32 & 6 & 71,568 & $3\cdot10^{-4}$ & 3,500 \\
Feedforward ($\times 3$) & 32 & 9 & 107,352 & $3\cdot10^{-4}$ & 3,500 \\
Feedforward ($\times 4$) & 32 & 12 & 143,136 & $3\cdot10^{-4}$ & 3,500 \\
Feedforward ($\times 5$) & 32 & 15 & 178,920 & $3\cdot10^{-4}$ & 3,500 \\
\midrule
Mean-field GNN & 32 & 3 & 35,784 & $3\cdot10^{-4}$ & 3,500 \\
Annealed Bernoulli GNN & 32 & 3 & 35,784 & $3\cdot10^{-4}$ & 3,500 \\
DIFUSCO (matched) & 32 & 3 & 35,880 & $3\cdot10^{-4}$ & 3,500 \\
DIFUSCO (large) & 256 & 12 & 5,555,466 & $2\cdot10^{-4}$ & 2,800 \\
VAG (matched) & 32 & 3 & 40,270 & $3\cdot10^{-4}$ & 3,500 \\
VAG (large) & 40 & 3 & 101,790 & $5\cdot10^{-4}$ & 10,000 \\
\bottomrule
\end{tabular}
\end{table}

\textbf{Metrics.}
The caption of Table~\ref{tab:proteins-main} defines every displayed metric.

\textbf{Longer test-time trajectories.}
Table~\ref{tab:proteins-main} reports a one-step convergence rate of $C_{200}=88.05\pm2.23\%$ at $t=200$. To determine whether the remaining trajectories exhibit persistent oscillations or simply require more iterations, we extend the same 6,660 (111 $\times$ 20 $\times$ 3) trajectories across three training seeds to $t=10000$, without changing the learned weights. Here $t$ denotes test-time recurrence steps, and results are reported in Table~\ref{tab:proteins-long-horizon}.

The convergence rate increases to $C_{800}=98.75\pm0.62\%$ and $C_{3000}=99.49\pm0.29\%$. By $t=5000$, every trajectory is either convergent or classified as periodic at the reported horizons, with 6,632 trajectories satisfying the numerical convergence criterion. The extended checks confirm that the vast majority of non-converged trajectories at $t=200$ simply reflect longer transients, while a small remainder settles into stable periodicity rather than divergent behavior.

Although training uses only $T=5$ unrolled iterations, the learned update remains stable and largely settles to numerical fixed points when applied for substantially more iterations than seen during training. The longer runs serve only to examine convergence: all reported modularity, Hit10, and discovery metrics use the terminal outputs at $t=200$. Thus, the reported solution quality and diversity are already obtained by stopping at $t=200$, without waiting for every trajectory to strictly satisfy the convergence criterion.

\begin{table}[ht]
\centering
\small
%\vspace{-3mm}
\caption{PROTEINS test-time dynamics. Metrics are the same as in Appendix \ref{ising:appendix}, Table \ref{tab:ising_test_time}.}
%\vspace{2mm}
\label{tab:proteins-long-horizon}
\begin{tabular}{rrrr}
\toprule
$t$ & $C_t$ (\%) & Period 2-10 (\%) & Unclassified (\%) \\
\midrule
10 & $3.83\pm1.78$ & -- & -- \\
20 & $16.65\pm3.45$ & -- & -- \\
50 & $47.18\pm1.92$ & -- & -- \\
100 & $68.72\pm0.82$ & -- & -- \\
200 & $88.05\pm2.23$ & $0.02\pm0.02$ & $11.94\pm2.25$ \\
500 & $97.51\pm0.45$ & $0.39\pm0.28$ & $2.10\pm0.34$ \\
800 & $98.75\pm0.62$ & $0.42\pm0.28$ & $0.83\pm0.48$ \\
1000 & $99.07\pm0.30$ & $0.42\pm0.28$ & $0.51\pm0.24$ \\
2000 & $99.49\pm0.31$ & $0.42\pm0.28$ & $0.09\pm0.07$ \\
3000 & $99.49\pm0.29$ & $0.42\pm0.28$ & $0.09\pm0.04$ \\
5000 & $99.58\pm0.28$ & $0.42\pm0.28$ & $0.00\pm0.00$ \\
10000 & $99.58\pm0.28$ & $0.42\pm0.28$ & $0.00\pm0.00$ \\
\bottomrule
\end{tabular}
\end{table}

\section{Chemical reaction networks: details and additional results}
\label{chem:appendix}

This appendix provides the experiment details and additional results for
Section~\ref{chem:main}. 

\textbf{Dataset.}
We construct kinetic instances on the published reaction-network structures
from \citep{yao2025understanding}. We select networks labeled \texttt{Multi}
from the Atom and Joshi construction families, preserving their original
train/validation/test assignments. These labels indicate structural capacity
for multistationarity but do not provide kinetic rates or stationary
concentrations. For each selected structure, we add one inflow
$\varnothing\to X_i$ and one outflow $X_i\to\varnothing$ per species.
We then randomly propose two positive concentration vectors
$c^{(1)},c^{(2)}$ in log space and solve a linear feasibility problem
for shared positive rate constants $\kappa$ satisfying
$f_G(c^{(1)})=f_G(c^{(2)})=0$; the equations are linear in $\kappa$
when the concentrations are fixed.
We retain instances whose two witnesses have normalized residuals
at most $10^{-7}$, concentration-Jacobian condition numbers at most
$10^9$, and log-RMS separation at least $0.25$.
After feasibility filtering, we balance the two construction families
within each split, obtaining 1,242 training, 266 validation, and 236
test instances, with 2--5 species and 6--19 reactions.
Each instance contains the network structure, constructed rate constants,
and two numerically verified stationary witnesses, which need not
exhaust its solution set.
Physics-only training uses only $(\alpha,\beta,\kappa)$;
single-target supervision and supervised diffusion use one witness
selected once and held fixed as the label for each training instance.

\begin{table}[ht]
\centering\small
%\vspace{-3mm}
\caption{Dataset construction. Balance is enforced within each split after
feasibility filtering. Every retained instance has two constructed witnesses.}
%\vspace{2mm}
\label{chem:dataset-counts}
\begin{tabular}{lrrrrrr}
\toprule
Split & Atom attempts & Joshi attempts & Atom feasible & Joshi feasible & Retained & Reactions\\
\midrule
Train & 790 & 2560 & 771 & 621 & 1242 & 6--19\\
Validation & 180 & 570 & 172 & 133 & 266 & 6--18\\
Test & 300 & 600 & 293 & 118 & 236 & 6--18\\
\bottomrule
\end{tabular}
\end{table}

\textbf{Nodes and edges.}
There is one node per species and one per reaction, including inflow/outflow
reactions. For every pair $(i,r)$ with $\alpha_{ri}>0$ or $\beta_{ri}>0$,
we include both directed incidence edges.
There is no extra node for $\varnothing$, no stoichiometric-copy expansion,
and no direct species--species edge.
The six static node coordinates are
\begin{align}
 x_i&=(1,0,0,0,0,0), &&\text{species},\nonumber\\
 x_r&=\left(0,1,\log\kappa_r,
       \sum_i\alpha_{ri},\sum_i\beta_{ri},
       \sum_i|\beta_{ri}-\alpha_{ri}|\right), &&\text{reaction}.
 \label{chem:node-features}
\end{align}
The first two coordinates distinguish node types; the remaining coordinates
encode the log rate, total reactant/product orders, and net stoichiometric
magnitude. Apart from the log-rate transform, these graph inputs are used
directly, without dataset-fitted standardization.
The four edge coordinates are
\[
 e_{i\to r}=(\alpha_{ri},\beta_{ri},1,0),\qquad
 e_{r\to i}=(\alpha_{ri},\beta_{ri},0,1).
\]
The last two entries encode message direction, not reactant versus product.
A species can be both a reactant and a product, with its two coefficients
retained on the same incidence pair. Reaction roles are thus edge attributes,
not a single global species label.
All compared neural models receive these same base graph features.

\textbf{State and initialization.}
The recurrent state is only the species vector $z_t=\log c_t\in[-20,20]^s$.
With width $w=32$, compute the static encoding
\[
 b_v=\tanh\!\big(\operatorname{LayerNorm}(W_xx_v+b_x)\big).
\]
Each candidate draws independent node vectors $\xi_v\sim\mathcal N(0,I_8)$.
A bias-free $8\to32$ map and a scalar affine head initialize
\[
 z_{0,i}=\operatorname{clip}_{[-20,20]}
       \big(w_{\rm init}^{\mathsf T}(b_i+W_\xi\xi_i)+b_{\rm init}\big).
\]
Subsequent calls receive no identifier forcing.
At each call, reconstruct a temporary node matrix $H^{(0)}$ from $b$,
replace coordinate zero of each species row with its current $z_i$, and
leave reaction rows equal to $b_r$.
All temporary hidden coordinates are discarded after the call except the
new species log concentrations. There is no auxiliary carried hidden state.

\textbf{Message-passing backbone.}
One update contains three distinct learned GATv2 \citep{brody2022how} blocks;
the entire stack is tied across recurrent iterations.
For $\ell=1,2,3$, compute
\[
 U^{(\ell)}=\operatorname{LayerNorm}_{\ell}
       \big(\operatorname{GATv2}_{\ell}(H^{(\ell-1)},E,e)\big),\qquad
 H^{(\ell)}=
 \begin{cases}
 \tanh U^{(\ell)},&\ell<3,\\
 U^{(\ell)},&\ell=3.
 \end{cases}
\]
Each block is PyG
\texttt{GATv2Conv(32,32,heads=4,concat=False,edge\_dim=4)}.
Attention heads are averaged, not concatenated.
Self-loops are enabled, with mean incident edge features for their attributes;
these internal loops are not chemical reactions.
Other settings are zero attention dropout, LeakyReLU slope $0.2$,
separate source/target transforms, and biases enabled.
LayerNorm is affine with epsilon $10^{-5}$.
There is no spectral constraint on these attention blocks.

\textbf{Dynamic physical features.}
For $c=\exp z$, compute
$\ell_r=\log\kappa_r+\sum_i\alpha_{ri}z_i$ and protected fluxes
$\widetilde v_r=\exp(\operatorname{clip}_{[-30,30]}(\ell_r))$.
Set $\widetilde a=|N|\widetilde v$ and
$\widetilde q_i=(N\widetilde v)_i/(\widetilde a_i+10^{-12})$.
The node-wise dynamic features are
\[
 \psi_i=(z_i,\widetilde q_i,\log(\widetilde a_i+10^{-12})),\qquad
 \psi_r=(\ell_r,0,0).
\]
A shared affine $3\to32$ map followed by $\tanh$ encodes $\psi_v$.
The reaction log-flux feature itself is not clipped; clipping protects the
exponentials used to construct the internal residual.
The independent evaluation residual uses original coefficients and
unclipped mass-action fluxes.

\textbf{Gated readout and update.}
A bias-free conditioning map $W_b$ and the physical-feature encoder give
\[
 \widehat H=\operatorname{GRUCell}\!\left(
 \tanh\!\left[H^{(3)}+W_bb+\tanh(W_\psi\psi+b_\psi)\right],H^{(0)}\right).
\]
This width-32 GRU \citep{cho2014learning} is a within-call gating computation, not an additional
inter-iteration memory.
Three scalar affine heads applied to each species row of $\widehat H$ produce
\[
 d_i=\tfrac12\sigma(w_d^{\mathsf T}\widehat H_i+b_d),\qquad
 g_i=2\tanh(w_g^{\mathsf T}\widehat H_i+b_g),\qquad
 a_i^{\rm NN}=\tanh(w_a^{\mathsf T}\widehat H_i+b_a).
\]
The update is
\begin{equation}
 z_i^+=\operatorname{clip}_{[-20,20]}
 \left[z_i+d_i\left(g_i\widetilde q_i+
       \tanh(\widetilde R)a_i^{\rm NN}\right)\right],
 \qquad \widetilde R=
 \sqrt{s^{-1}\sum_i\widetilde q_i^2+10^{-24}} .
 \label{chem:update}
\end{equation}
The signed-gain head starts with zero weights and bias
$\operatorname{atanh}(1/2)$, so its initial gain is one.
The model has 35,716 trainable parameters.
It is a physics-assisted learned solver, not a generic GNN with only a physics
loss, and is not guaranteed contractive.
All aggregation/readout operations are permutation-equivariant.
Initialization is equivariant when $\xi$ is permuted with the nodes, and
equivariant in distribution under independent identifier sampling.
After initialization, the shared map depends only on $G$ and $z$.

\textbf{Training and residual objective.}
Using the original mass-action fluxes, define
\begin{equation}
 q_i(c)=\frac{(Nv(c))_i}{(|N|v(c))_i+10^{-12}},~~
 R(c)=\sqrt{\frac1s\sum_iq_i(c)^2},~~
 E_{\rm res}(c)=\frac1s\sum_i\phi_{0.1}(q_i(c)),
 \label{chem:residual}
\end{equation}
where $\phi_{0.1}(u)=u^2/0.2$ when $|u|<0.1$, and $|u|-0.05$ otherwise. We train the model by minimizing $\mathbb E[E_{\rm res}(c_{T})]$. In particular, we use AdamW with LR $0.002$, weight decay $10^{-5}$, default
$(\beta_1,\beta_2)=(0.9,0.999)$, epsilon $10^{-8}$,
gradient-norm clipping at 5, at most 50 epochs, and early-stopping patience 12.
Accumulate gradients over four graphs, with 20 candidate initializations
per graph.
Graph's recurrent unroll length $T$ is sampled
uniformly from $\{4,8,16,24,32\}$.

\textbf{Baselines.}
We retain the four baseline groups used in the Ising and PROTEINS experiments: single-target supervision, unique-equilibrium models, untied feedforward depth controls, and distributional models. Table~\ref{chem:all-counterparts} summarizes their configurations. All models receive the same base graph inputs $x_i, x_r, e_{i\to r}, e_{r\to i}$ and are evaluated on the same 236 test graphs.
\begin{itemize}[leftmargin=5mm]
\item \textit{Single-target supervision.}
We use the same architecture as our multi-attractor model, with the same random initialization, randomized unroll lengths, optimizer, and stopping rule, but replace the residual objective by the concentration-space MSE $s^{-1}\|c_T-c^{\rm target}\|_2^2$. For each training instance, $c^{\rm target}$ is drawn uniformly from the two witnesses once and then held fixed throughout training. The supervised diffusion models below use the same label.

\item \textit{Unique-equilibrium controls.}
As in the other two experiments, we adopt IGNN \citep{gu2020implicit} and enforce a contraction bound $\rho=0.9$, which guarantees a unique equilibrium. It is trained with the same residual objective $\mathbb{E}[E_{\rm res}]$ as our model. We evaluate hidden widths $H\in\{16,32,64\}$, which include a capacity-comparable model ($H=32$) and a smaller and a larger variant. Because its equilibrium is uniquely determined by $G$, IGNN produces a single candidate per graph; replicated copies are not counted toward diversity.

\item \textit{Untied feedforward depth controls.}
We replace the tied recurrence by $B\in\{1,2,3,4,5\}$ independently parameterized copies of our update, $z_{t+1}=\mathrm{GNN}_{\theta_t}(G,z_t)$, with distinct parameters $\theta_t$ at each stage. Each stage has its own input encoder, three GATv2 blocks, GRU, physical-feature encoder, and scalar heads. Only the identifier projection and initialization head, which produce $z_0$, are shared. The model thus has $3B$ message-passing blocks and $289+35{,}427B$ parameters, and it executes exactly $B$ stages at both training and test time. It uses the same random initialization, 20 candidates per graph, residual objective, and optimizer as our model. These controls compare increasing untied depth and parameter count with shared recurrence; inference computation is not matched. None is iterated beyond its trained depth, so no settling rate is reported.

\item \textit{Distributional models.}
Since concentrations are continuous, we replace the discrete generators used for Ising and PROTEINS by DDPM \citep{ho2020denoising}, DDIM \citep{song2021denoising}, and the time-reversed diffusion sampler (DIS) \citep{berner2024optimal}. DDPM is trained with the standard noise-prediction loss against the same fixed witness used for single-target supervision, after mapping log concentrations to the unbounded latent $u=20\,\mathrm{atanh}(\log c/20)$. It uses 1,000 linear-beta noise levels and 100 reverse steps. DDIM shares the trained DDPM checkpoint and samples deterministically ($\eta=0$) with 100 steps. DIS uses no witness. Instead, it learns to sample from the residual-based density $\propto\exp[-E_{\rm res}(c)/\tau]$ with $\tau=10^{-4}$, using 100 steps. The \emph{matched} setting keeps our width-32, three-block GATv2 backbone, GRU, physical-feature encoder, and heads. The only change is that the input encoder also receives the noisy species state and the normalized diffusion time, giving 35,780 parameters compared with 35,716 for our model. It uses the same learning rate, 50-epoch budget, and early-stopping patience as our model. The \emph{large} setting follows the upstream repositories. DDPM/DDIM use DIFUSCO's gated residual graph backbone with width 256 and 12 layers (5,205,249 parameters). DIS uses the original FourierMLP controller, conditioned on a width-32, three-block GAT encoder (59,330 parameters), and is trained with its native log-variance loss. Every generator returns 20 candidates per graph after 100 sampling steps. 
\end{itemize}

\begin{table}[ht]
\centering\small 
\setlength{\tabcolsep}{3pt}
%\vspace{-3mm}
\caption{Learned configurations for chemistry. $H$: hidden width; $L$: total untied depth.}
%\vspace{2mm}
\label{chem:all-counterparts}
\begin{tabular}{lrrrrr}
\toprule 
Model & $H$ & $L$ & Parameters & LR & Epochs \\
\midrule
Ours & 32 & 3 & 35,716 & $2\cdot10^{-3}$ & 50 \\
\midrule
Single-target Supervision & 32 & 3 & 35,716 & $2\cdot10^{-3}$ & 50  \\
IGNN (H16) & 16 & 3 & 10,273 & $2\cdot10^{-3}$ & 50 \\
IGNN (H32) & 32 & 3 & 37,441 & $2\cdot10^{-3}$ & 50 \\
IGNN (H64) & 64 & 3 & 142,465 & $2\cdot10^{-3}$ & 50 \\
\midrule
Feedforward ($\times 1$) & 32 & 3 & 35,716 & $2\cdot10^{-3}$ & 50 \\
Feedforward ($\times 2$) & 32 & 6 & 71,143 & $2\cdot10^{-3}$ & 50 \\
Feedforward ($\times 3$) & 32 & 9 & 106,570 & $2\cdot10^{-3}$ & 50 \\
Feedforward ($\times 4$) & 32 & 12 & 141,997 & $2\cdot10^{-3}$ & 50 \\
Feedforward ($\times 5$) & 32 & 15 & 177,424 & $2\cdot10^{-3}$ & 50 \\
\midrule
DDPM / matched & 32 & 3 & 35,780 & $2\cdot10^{-3}$ & 50  \\
DDIM / matched & 32 & 3 & 35,780 & $2\cdot10^{-3}$ & 50  \\
DIS / matched & 32 & 3 & 35,780 & $2\cdot10^{-3}$ & 50 \\
DDPM / large  & 256 & 12 & 5,205,249 & $2\cdot10^{-4}$ & 200  \\
DDIM / large  & 256 & 12 & 5,205,249 & $2\cdot10^{-4}$ & 200 \\
DIS / large & 32/64 & 3/4 & 59,330 & $5\cdot10^{-3}$ & 100 \\
\bottomrule
\end{tabular}
\end{table}

\textbf{Metrics.}
All metrics use the $n=236$ test graphs with equal weight per graph. Graph $g$ has $M_g$ candidates $c_{gm}$, with $M_g=20$. A candidate is accepted, $I_{gm}=1$, if it is finite, all its coordinates exceed $10^{-12}$, and $R(c_{gm})\le10^{-3}$. Here $R$ and $E_{\rm res}$ are defined as in the training objective, but are recomputed in float64 from the original $\alpha,\beta,\kappa$ rather than from the clipped internal features of the cell. We report
\[
 \overline E_{\rm res}=\frac1n\sum_g\frac1{M_g}\sum_mE_{\rm res}(c_{gm}),
 \qquad
 \mathrm{Hit}_{10^{-3}}=\frac{100}{n}\sum_g\frac1{M_g}\sum_m I_{gm}.
\]
Nonfinite residuals are kept in these means, and acceptance involves no polishing or matching to reference roots. To count distinct predictions, we process the accepted candidates of each graph in saved order. Each one joins the first existing representative within distance $\|\log c-\log c'\|_2/\sqrt{s_g}\le10^{-3}$, or otherwise becomes a new representative. With $K_g$ the resulting number of representatives ($K_g=0$ if no candidate is accepted), we report
\begin{equation}
 D_{20}^{\rm raw}=\frac1n\sum_gK_g,\qquad
 G_{20}^{\rm raw}=\frac{100}{n}\sum_g\boldsymbol1\{K_g\ge2\}.
 \label{chem:raw-counts}
\end{equation}
Numerically separated accepted predictions can approximate the same root; these counts therefore do not certify distinct stationary solutions.
For recurrent models, we measure numerical settling at horizon $T$ by
\begin{equation}
 C_T=\frac{100}{n}\sum_g\frac1{M_g}\sum_m
 \boldsymbol1\!\left\{
 \frac{\|z_{T+1}^{(g,m)}-z_T^{(g,m)}\|_2}{\sqrt{s_g}}<10^{-3}
 \right\},
 \label{chem:pointwise-convergence}
\end{equation}
and the column $C$ in Table~\ref{chem:main-results} reports $C_{200}$. This one-step test on the log-state is independent of the physical residual and does not certify asymptotic convergence. For IGNN, $C$ is instead the percentage of graphs for which all three implicit layers satisfy $\|F_{\ell,g}(H_{\ell,200})-H_{\ell,200}\|_{\infty,p}<10^{-3}$ after 200 updates per layer. This measures hidden-state settling, not chemical validity. Feedforward and generative models have no autonomous state update, so $C$ is not reported for them.

\textbf{Longer test-time dynamics.}
To investigate the trajectories that fail the $T=200$ one-step criterion,
we extend the same three frozen multi-attractor models and initialization seeds to $t=20{,}000$.
No model weights, initialization distributions, or physical acceptance
thresholds change. The main table's quality and diversity remain evaluated
at $T=200$, not at a retrospectively chosen longer horizon.

\begin{table}[ht]
\centering\small
%\vspace{-3mm}
\caption{Chemistry test-time dynamics, mean $\pm$ population SD across three training seeds. $C_t$ is the fraction with next-step species log-state RMS change below $10^{-3}$. Oscillatory denotes detected approximate periods $2$--$64$ among trajectories failing that criterion; unresolved is the remainder. All percentages use the full trajectory population, and the three categories partition that population for $t\ge200$, up to rounding. Dashes denote unperformed early-horizon oscillation tests.}
%\vspace{2mm}
\label{tab:chemistry-extended-dynamics}
\begin{tabular}{rccc}

\toprule

$t$ & $C_t$ (\%) & Oscillatory (\%) & Unresolved (\%) \\

\midrule

10 & 2.26 $\pm$ 0.65 & --- & --- \\

20 & 13.77 $\pm$ 3.72 & --- & --- \\

50 & 63.11 $\pm$ 2.05 & --- & --- \\

100 & 83.99 $\pm$ 0.79 & --- & --- \\

200 & 91.99 $\pm$ 0.35 & 1.75 $\pm$ 1.50 & 6.26 $\pm$ 1.27 \\

500 & 95.20 $\pm$ 1.72 & 2.23 $\pm$ 1.70 & 2.56 $\pm$ 0.54 \\

1000 & 95.64 $\pm$ 1.67 & 2.49 $\pm$ 2.04 & 1.88 $\pm$ 0.46 \\

2000 & 95.95 $\pm$ 1.83 & 2.77 $\pm$ 2.09 & 1.28 $\pm$ 0.26 \\

5000 & 96.15 $\pm$ 2.01 & 2.77 $\pm$ 2.09 & 1.08 $\pm$ 0.27 \\

10000 & 96.13 $\pm$ 2.01 & 2.77 $\pm$ 2.09 & 1.10 $\pm$ 0.27 \\

20000 & 96.12 $\pm$ 2.01 & 2.77 $\pm$ 2.09 & 1.11 $\pm$ 0.31 \\

\bottomrule

\end{tabular}
\end{table}

{Oscillation test:}
For $t\ge200$, use the $257$ saved log states
$w_j=z_{t+j}$, $j=0,\ldots,256$.
Let $A=\max_j\|w_j-\bar w\|_2/\sqrt{s}$.
Among trajectories failing the one-step settling criterion, select the
smallest $p\in\{2,\ldots,64\}$ satisfying
\[
 \max_{0\le j\le256-p}
 \frac{\|w_{j+p}-w_j\|_2}{\sqrt{s}}\le\min(10^{-3},0.02A).
\]
Additionally require $A>0$, half-window amplitude ratio in $[0.8,1.25]$,
and RMS distance at most $10^{-3}$ between the first/last cycle centers.
Half-window amplitudes are maximum RMS deviations from their respective
means over $w_0,\ldots,w_{127}$ and $w_{128},\ldots,w_{256}$;
cycle centers average the first and last $p$ states.
The window contains at least four cycles for every searched period.

{Findings:}
Settling increases from $91.99\pm0.35\%$ at $t=200$ to
$96.15\pm2.01\%$ at $t=5{,}000$ and then plateaus.
At $t=20{,}000$, $96.12\pm2.01\%$ are step-settled,
$2.77\pm2.09\%$ are oscillatory, and $1.11\pm0.31\%$ are unresolved.
The unresolved group contains 157 trajectories across the three seeds.
The oscillatory group includes 40 small-amplitude trajectories excluded by
the earlier amplitude-gated diagnostic.
No nonfinite states occurred.
Settling need not increase monotonically because each horizon is tested
independently.
The log-state box precludes finite-state norm divergence but not bounded
nonconvergence. Unresolved does not establish slow convergence, chaos,
or divergence: the category can include drift, irregular fluctuations,
periods beyond 64, or transients.
These observations concern the learned solver, not physical chemical kinetics,
and do not establish asymptotic attraction.

\section{Related Work}
\label{app:related}

\textbf{GNNs and their theory.}
Message-passing GNNs are at most as expressive as the 1-WL test \citep{xu2018powerful,morris2019weisfeiler}, and a line of work characterizes which invariant and equivariant maps GNNs can approximate \citep{maron2019provably,chen2019equivalence,keriven2019universal,azizian2021expressive,geerts2022expressiveness}. Unique node identifiers or i.i.d.\ random node features restore universality with high probability \citep{loukas2020what,sato2021random,abboud2020surprising}. Recent work extends this to measurable targets with explicit approximation rates \citep{gonon2026universality}, and to recurrent GNNs, which with random initialization can uniformly express any polynomial-time computable function on connected graphs \citep{rosenbluth2026repetition}. All of these results concern \emph{single-valued} targets $G\mapsto f(G)$, with randomness serving only to tell nodes apart. In our setting, the same random initial state must do double duty: it identifies nodes and selects the solution branch. Only the output-dimensional state $y_t$ is carried between steps, so node distinguishability must be preserved by dynamics that also converge to a solution (Lemma~\ref{lem:init-undamped-row-collision-null}, Theorem~\ref{thm:gnn-realization-main}).

\textbf{Recurrent/equilibrium GNNs.}
The original GNN was defined as the fixed point of a contraction \citep{scarselli2009graph}. Later weight-tied models unroll a shared update \citep{li2015gated,dai2018learning}, while several implicit GNNs impose sufficient conditions for equilibrium existence and uniqueness, including contraction and strong monotonicity \citep{gu2020implicit,liu2021eignn,park2022convergent,baker2023implicit}. \citet{yang2025implicit} also analyze unfolded GNN regimes with stationarity guarantees that do not require uniqueness. Deep equilibrium models compute fixed points \citep{bai2019deep}, with monotone variants guaranteeing uniqueness \citep{winston2020monotone}; path independence, meaning convergence to the same state across initializations, has also been advocated \citep{anil2022path}. Weight-tied message passing is also used for algorithmic reasoning and size extrapolation \citep{velivckovic2020neural,tang2020towards,bansal2022end} and for SAT and constraint satisfaction \citep{selsam2019learning,toenshoff2021graph}. RUN-CSP in particular starts from random states and keeps the best of several parallel runs \citep{toenshoff2021graph}, but without a representational account of the solutions reached. On the theory side, \citet{liu2026expressive} show that regular implicit operators (contractive in the state, Lipschitz in the input) represent exactly the locally Lipschitz maps, so that iterations rather than parameters supply expressivity. \citet{jore2026bifurcation} represent set-valued maps through attractor landscapes and demonstrate multi-solution discovery with recurrent GNNs on Ising problems, but do not establish permutation-equivariant dynamics or their message-passing realization. We develop this theory on graphs, where symmetry changes the picture in three ways: (i) the solution set is equivariant although individual branches need not be (Section~\ref{sec:theory-represent}); (ii) our message-passing construction realizes the dynamics through symmetry breaking carried by the state itself, without auxiliary inputs (Theorem~\ref{thm:gnn-realization-main}, Lemma~\ref{lem:init-undamped-row-collision-null}); and (iii) continuous, equivariant, globally contractive updates can miss every valid solution with positive probability (Theorems~\ref{thm:contractive-limits-main} and~\ref{thm:contractive-positive-failure}). As in \citet{liu2026expressive}, complexity comes from iteration: each step of the ideal update is globally Lipschitz, while its limit selector jumps across basin boundaries (Theorem~\ref{thm:ideal-dynamics-main}). Multiple attractors have long served as a resource in Hopfield networks \citep{hopfield1982neural,ramsauer2020hopfield}, where they store fixed patterns rather than solutions of an input-dependent problem.

\textbf{Distributional models.}
Another way to handle multiple solutions is to learn a distribution over them. Unsupervised neural combinatorial optimization trains GNNs from an energy, either as mean-field relaxations \citep{karalias2020erdos,schuetz2022combinatorial} or as annealed, autoregressive, diffusion, or GFlowNet samplers \citep{sun2022annealed,sanokowski2023variational,zhang2023let}; variational autoregressive networks play the same role for statistical-mechanics models \citep{wu2019solving}. Supervised diffusion solvers and diffusion samplers generate candidates through time-inhomogeneous reverse processes \citep{sun2023difusco,ho2020denoising,song2021denoising,berner2024optimal}. Other work keeps supervision but relaxes the single target: multiple output heads with a hindsight loss \citep{li2018combinatorial}, learned selection among one-of-many solutions \citep{nandwani2020neural}, label realignment under formulation symmetry \citep{chen2024symilo}, and equivariant flow matching with symmetry-aware target matching for bifurcation problems \citep{hendriks2025equivariant}. Our approach learns a single autonomous update whose attracting limits are intended to approximate valid solutions. Numerical stationarity and task validity are evaluated separately; iterations can be extended at test time without retraining, and training uses energies or residuals rather than solution labels. Classical multi-solution methods such as deflation and homotopy continuation \citep{farrell2015deflation,breiding2018homotopy} operate instance by instance, whereas our model is trained once and amortized over a graph family.

\textbf{Symmetry breaking.}
Deterministic equivariant networks cannot map a symmetric input to a less symmetric output \citep{smidt2021finding,kaba2023symmetry}. Existing remedies break symmetry through \emph{augmented inputs}: random or positional node features \citep{sato2021random,abboud2020surprising,dwivedi2022graph}, symmetry-breaking sets and objects \citep{smidt2021finding,xie2024equivariant,goel2026any}, and probabilistic symmetry breaking through equivariant conditional distributions and sampled canonicalizations \citep{lawrence2025improving}, the notion closest to our set-valued equivariance. We instead break symmetry through the model's own evolving state, with no auxiliary input: randomness enters once through $y_0$, and Lemma~\ref{lem:init-undamped-row-collision-null} shows that the ideal dynamics preserve node distinguishability at every finite step. The two mechanisms are compatible. Augmented features can be appended to $X$ or used to seed $y_0$ (our PROTEINS model uses random-walk encodings, and our chemistry model seeds $y_0$ from random node vectors), and the construction in Lemma~\ref{lem:family-wide-realization-initialization} only needs distinct node codes, whatever their source. Augmentation alone, however, does not settle \emph{which} solution the network should output: deterministic encodings alone do not provide a mechanism for sampling multiple valid outputs, and continuous canonicalization is impossible in general \citep{dym2024equivariant}. Theorems~\ref{thm:contractive-limits-main} and~\ref{thm:contractive-positive-failure} concern updates without auxiliary inputs; combining multi-attractor dynamics with augmented features is a natural direction for future work.

\end{document}